\documentclass[11pt, a4paper, onecolumn, copyright, goog]{google}

\usepackage[authoryear, sort&compress, round]{natbib}
\usepackage{microtype}
\usepackage{hyperref}
\usepackage{url}
\usepackage{booktabs}
\usepackage{amsmath, amssymb, amsthm}
\usepackage{wrapfig}
\usepackage{graphicx} 
\usepackage{algorithm}
\usepackage{algpseudocode}
\usepackage{svg}
\usepackage{pgfplots}
\usepackage{amsthm}

\graphicspath{{./}{assets/}{arxiv/assets/}}

\newtheorem{definition}{Definition}
\newtheorem{proposition}{Proposition}

\usepackage{tikz}
\usepackage{subcaption}
\usetikzlibrary{calc, shapes.geometric, arrows.meta, positioning, decorations.pathreplacing, backgrounds, fit}
\usetikzlibrary{tikzmark}

\usepackage{multirow}
\newcommand{\midsize}{\fontsize{8.25pt}{8.75pt}\selectfont}

\newcommand{\hlbase}[1]{\tikz[baseline=(x.base)]\node[fill=blue!15, rounded corners=2pt, inner sep=2pt] (x) {#1};}
\newcommand{\hlprob}[1]{\tikz[baseline=(x.base)]\node[fill=green!20, rounded corners=2pt, inner sep=2pt] (x) {#1};}

\pgfdeclarelayer{background}
\pgfsetlayers{background,main}

\definecolor{etablue}{RGB}{31, 119, 180}
\definecolor{nsaorange}{RGB}{255, 127, 14}
\definecolor{darkgray}{RGB}{60, 60, 60}

\makeatletter
\newenvironment{breakablealgorithm}
  {
   \begin{center}
     \refstepcounter{algorithm}%
     \hrule height.8pt depth0pt \kern2pt%
     \renewcommand{\caption}[2][\relax]{%
       {\raggedright\textbf{\fname@algorithm~\thealgorithm} ##2\par}%
       \ifx\relax##1\relax
         \addcontentsline{loa}{algorithm}{\protect\numberline{\thealgorithm}##2}%
       \else
         \addcontentsline{loa}{algorithm}{\protect\numberline{\thealgorithm}##1}%
       \fi
       \kern2pt\hrule\kern2pt
     }%
  }
  {
     \kern2pt\hrule
   \end{center}
  }
\makeatother

\definecolor{darkblue}{rgb}{0, 0, 0.5}
\hypersetup{colorlinks=true, citecolor=darkblue, linkcolor=darkblue, urlcolor=darkblue}

\usepackage{xcolor}

\keywords{sparse attention, inference optimization, adaptive sparsity, threshold attention}

\uselogo{} 

\title{Elastic Threshold Attention: Learned Contextual Sparsity for Long-Context Decoding}

\correspondingauthor{themisharis@google.com. This work was concluded while the author was interning at Google.}

\author[1,2]{Themistoklis Haris}
\author[1]{Henry Li}
\author[1]{Maryam Karimzadehgan}

\affil[1]{\thepa{}{}}
\affil[2]{Boston University}

\begin{abstract}
Massive KV caches can cause severe memory-bandwidth bottlenecks during long-context decoding. Sparse attention methods mitigate this problem, but often drop necessary context, leading to quality degradation. We introduce \textbf{Elastic Threshold Attention (ETA)}, an end-to-end trainable architecture that rivals dense model quality under hardware-aligned block-sparse decoding. ETA predicts dynamic, contextual thresholds directly from query representations, adjusting context retention depending on the task at hand. To learn this policy from scratch while enabling near lossless KV cache pruning at inference time, we filter attention logits through a shifted SiLU gate during training. We show theoretically and empirically that this creates a smooth, near-uniform attention floor that neutralizes sub-threshold value contributions while simultaneously causing localized attention sinks on initial tokens to disappear. To materialize these advantages, we implement a fused inference-time kernel in Triton that screens KV blocks in $O(1)$ time using cached geometric-probabilistic bounds. Across language modeling, reasoning, and RULER benchmarks, our 1.45B ETA model matches or exceeds dense quality, outperforming alternative fast decoding methods (Quest, H$_2$O, NSA) while achieving higher sparsity levels. Our kernel also achieves up to $2.15\times$ end-to-end speedups over FlashAttention-2 at context lengths of up to $512$K tokens. Finally,
we introduce an offline calibration algorithm for domain-specific deployments
that freezes per-head constant thresholds, cutting
attention compute by an additional 27\% at no quality cost.
\end{abstract}

\renewcommand{\topfraction}{0.95}
\renewcommand{\bottomfraction}{0.95}
\renewcommand{\textfraction}{0.05}
\renewcommand{\floatpagefraction}{0.85}
\begin{document}

\maketitle

\begin{figure}[h]
  \centering
  \begin{subfigure}[b]{0.485\textwidth}
    \centering
    \resizebox{\linewidth}{!}{%
    \begin{tikzpicture}[
        font=\sffamily, >=Stealth, thick,
        box/.style={rectangle, draw=black!70, rounded corners=3pt, align=center, minimum height=1.05cm, minimum width=2.4cm, inner sep=4pt, fill=white, font=\small},
        lossbox/.style={box, draw=red!60, fill=red!5, dashed},
        input/.style={circle, draw=black!70, fill=gray!10, minimum size=0.9cm, inner sep=2pt, font=\bfseries\small},
        label/.style={font=\scriptsize\itshape, text=black!70}
    ]
    \node[input] (x) {$x_t^\ell$};
    \node[box, above right=0.35cm and 0.65cm of x] (qk) {Q, K Projections \\ + RoPE};
    \node[box, below right=0.35cm and 0.65cm of x] (thresh) {Linear Threshold \\ Predictor};
    \node[box, right=0.65cm of qk] (scores) {Attention \\ Logits ($S$)};
    \node[box, right=0.65cm of thresh] (mask) {SiLU Mask \\ $\sigma\big(\beta(S - \tau_t)\big)$};
    \node[box, right=1.1cm of scores] (final) {Final Attention \\ $S \odot \sigma$};
    \node[lossbox, right=1.1cm of mask] (loss) {Sparsity \\ Regularization};

    \draw[->] (x) |- (qk);
    \draw[->] (qk.south) -- node[right=1pt, font=\scriptsize, text=black!60] {$\bar{q}_t$} (thresh.north);
    \draw[->] (qk) -- (scores);
    \draw[->] (thresh) -- (mask);
    \draw[->] (scores) -- (mask) node[midway, right, label] {Scores};
    
    \draw[->] (scores) -- (final);
    
    \draw[->, rounded corners=3pt] (mask.north east) -- ++(0.35, 0.45) |- (final.west);

    \draw[->, red!70, dashed] (mask) -- (loss) 
        node[midway, above, font=\scriptsize, text=red!70, align=center] {Gradient\\Flow};

    \begin{pgfonlayer}{background}
        \node[fit=(thresh)(mask)(loss), fill=blue!5, rounded corners, draw=blue!40!black, dotted, inner sep=7pt] (bg) {};
    \end{pgfonlayer}
    \end{tikzpicture}%
    }
    \label{fig:training-flow-vis}
  \end{subfigure}%
  \hspace{0.02\textwidth}%
  \begin{subfigure}[b]{0.485\textwidth}
    \centering
    \resizebox{\linewidth}{!}{%
    \begin{tikzpicture}[
        >=Stealth, font=\sffamily,
        embed/.style={circle, draw=blue!65!black, minimum size=0.95cm, fill=blue!12, font=\bfseries\small, thick},
        proc_box/.style={draw=blue!65!black, rectangle, rounded corners=3pt, minimum width=2.3cm, minimum height=1.05cm, align=center, fill=blue!12, font=\small\bfseries, thick},
        thresh_box/.style={draw=purple!70!black, rectangle, rounded corners=3pt, minimum width=2.3cm, minimum height=1.05cm, align=center, fill=purple!15, font=\small\bfseries, thick},
        filter_node/.style={circle, draw=orange!75!black, fill=orange!15, minimum size=0.95cm, thick, font=\small\bfseries},
        key_node/.style={draw, rectangle, sharp corners, minimum width=1.2cm, minimum height=0.52cm, align=center, font=\small, anchor=center, outer sep=0pt},
        selected_key/.style={fill=teal!18, draw=teal!75!black, thick, font=\small\bfseries, text=black},
        unselected_key/.style={fill=gray!10, draw=gray!35, thin, text=gray!70},
        arrow_plain/.style={->, draw=gray!70!black, thick, rounded corners=2pt},
        arrow_active/.style={->, draw=teal!80!black, line width=1.3pt, shorten >=2pt},
        arrow_pruned/.style={->, draw=gray!45, dotted, thick, shorten >=2pt}
    ]
    \node[embed] (xt) at (0, 0) {$x_t^\ell$};
    \node[proc_box] (query) at (2.4, 1.05) {Query ($\bar{q}_t$)};
    \node[thresh_box] (predictor) at (2.4, -1.05) {Predictor ($\tau_t$)};
    \node[filter_node] (join) at (4.8, 0) {$>\!\tau_t$};

    \matrix [matrix anchor=west, nodes={key_node}, row sep=-\pgflinewidth] (key_matrix) at (6.6, 0) {
        \node[unselected_key] (k1) {$B_1$}; \\
        \node[selected_key]   (k2) {$B_2$}; \\
        \node[unselected_key] (k3) {$B_3$}; \\
        \node[unselected_key] (k4) {$B_4$}; \\
        \node[selected_key]   (k5) {$B_5$}; \\
        \node[unselected_key] (k6) {$B_6$}; \\
    };
    \node[above=0.08cm of key_matrix, font=\scriptsize\bfseries, color=gray!75!black] {KV Blocks};

    \draw[arrow_plain] (xt.north) |- (query.west);
    \draw[arrow_plain] (xt.south) |- (predictor.west);
    \draw[arrow_plain] (query.east) -| (join.north);
    \draw[arrow_plain] (predictor.east) -| (join.south);

    \draw[arrow_pruned] (join.east) -- (k1.west);
    \draw[arrow_active] (join.east) -- (k2.west);
    \draw[arrow_pruned] (join.east) -- (k3.west);
    \draw[arrow_pruned] (join.east) -- (k4.west);
    \draw[arrow_active] (join.east) -- (k5.west);
    \draw[arrow_pruned] (join.east) -- (k6.west);
    \end{tikzpicture}%
    }
    \label{fig:inference-flow-vis}
  \end{subfigure}
  \vspace{1.5mm}
  \caption{\textbf{Overview of Elastic Threshold Attention (ETA).} \emph{Left (Training):} Queries predict a dynamic threshold $\tau_t$ that gates attention logits via a differentiable SiLU filter, compressing unselected logits toward zero rather than $-\infty$. \emph{Right (Inference):} A custom Triton kernel evaluates geometric-probabilistic block bounds against $\tau_t$ in $\mathcal{O}(1)$ time, loading only active KV blocks from HBM.}
  \label{fig:eta-overview}
\end{figure}

\section{Introduction}
The attention mechanism is the core computational and memory bottleneck of long-context Transformers \citep{Vaswani+2017}. While prefilling is compute-bound, autoregressive decoding is strictly memory-bound because generating each token requires streaming the entire Key-Value (KV) cache from High-Bandwidth Memory (HBM) to on-chip SRAM. As demand for efficient, long-context inference is greater than ever, numerous approaches have been proposed to ameliorate this issue. Most methods implement a version of selective loading, in which unimportant tokens in the KV cache are skipped and not loaded to SRAM, thus improving latency. A common problem that appears, however, is that such selectivity often trims necessary information or distorts the embedding distribution that the model was trained on, thus leading to quality degradation. 

In this work, we ask whether inference-time selectivity can be made naturally compatible with training-time quality in a simple, unified architecture. We introduce \textbf{Elastic Threshold Attention (ETA)}, an end-to-end trainable design that achieves hardware-accelerated decoding while matching dense model quality (\autoref{fig:eta-overview}). Rather than enforcing a fixed token budget, ETA predicts a dynamic, query-conditioned score threshold $\tau_t$ via a lightweight linear projection. When a query encounters a routine token, $\tau_t$ rises to prune the cache; when a query requires retrieval, $\tau_t$ drops automatically to expand the active context window.

To learn this contextual policy from scratch without losing dense representational fidelity, ETA redefines how sub-threshold tokens are treated. Whereas prior sparse mechanisms delete unselected tokens by pushing logits to $-\infty$, we find that hard deletion is counterproductive for end-to-end training. We instead apply a context-based tempering to the distribution by passing the attention logits through a centered SiLU filter. As a result, ETA can adaptively move between two regimes: dense attention, and a sparse operating mode in which a smooth, uniform attention floor replaces the tail of sub-threshold keys. 

This uniform background floor offers two important advantages: First, as we formalize in \autoref{sec:method} and \autoref{appx:log_odds_invariance}, flattening sub-threshold weights causes background value averages to cancel at a $1/\sqrt{m}$ rate up to a positive scalar factor absorbed by $\operatorname{RMSNorm}$, rendering the representation invariant to both dropping pruned KV blocks and co-admitting sub-threshold keys inside loaded blocks. In contrast, an identical model trained with hard masking suffers a $16.6\%$ perplexity penalty while remaining $1.84\times$ denser. Second, ETA exhibits no attention sinks as they are absorbed into the uniform probability floor. This makes ETA mechanistically more interpretable and renders it compatible with modern KV cache compression algorithms.


\paragraph{Our Contributions}
\begin{itemize}[labelindent=0.15in, leftmargin=*]
    \item \textbf{Temper, not Prune \& Elimination of Attention Sinks:} We show theoretically and empirically (\autoref{appx:train_inference_consistency}) that SiLU filtering in attention logit space during training creates a uniform attention floor whose sub-threshold value contributions cancel under $\operatorname{RMSNorm}$, making the model resilient to inference-time block pruning and over-inclusion while eliminating localized attention sinks on initial anchor tokens.
    \item \textbf{Contextual Sparsity \& Domain-Specific Calibration:} We show that runtime query-conditioned thresholds preserve dense model quality, while also supporting optional offline calibration into static per-head constants for domain-specific serving.
    \item \textbf{Hardware-Co-Designed Decoding via Dual Probabilistic-Geometric Screening:} We develop a fused Triton decode kernel that evaluates geometric and probabilistic block bounds in $\mathcal{O}(1)$ time, eliminating KV cache memory traffic for pruned blocks.
    \item \textbf{Quality Retention \& Decoding Acceleration:} Pre-trained at $1.45$B parameters, ETA rivals dense attention and surpasses alternative efficient decoding algorithms (like Quest \citep{tang2024quest}, H$_2$O \citep{zhang2023h2o}, NSA \citep{yuan2025native}) across language modeling, reasoning, and multi-needle/2-hop retrieval benchmarks, showing up to $2.15\times$ decode speedups over FlashAttention-2.
\end{itemize}

\subsection{Related Work}
The literature of efficient attention methods is vast with techniques spanning both theoretical and practical considerations. 
\paragraph{Heuristic Eviction and Windowing.}
Early subquadratic attention methods relied on structural heuristics like sliding windows \citep{child2019generating}, random routing \citep{zaheer2020big}, or low-rank kernel approximations \citep{choromanski2020rethinking, shen2021efficient}. For long-context generation, modern KV cache management methods such as $\text{H}_2\text{O}$ \citep{zhang2023h2o} and StreamingLLM \citep{xiao2024efficient} evict past tokens dynamically by tracking cumulative attention scores or preserving initial sink tokens. While effective at reducing cache footprints, these heuristics decouple pruning from model training, frequently evicting non-recent information and degrading long-context reasoning.

\paragraph{Dynamic Routing and Block-Sparse Selectors.}
To bridge theoretical FLOP reductions and practical GPU acceleration \citep{jiang2024minference, tang2024quest}, dynamic selection methods identify salient context on the fly via $k$-nearest neighbor search \citep{roy2021efficient, haris2025knn}, Locality Sensitive Hashing \citep{chen2025magicpig}, or learned projections \citep{desai2024hashattention}. Hardware-aligned variants further structure selection into memory tiles: Quest \citep{tang2024quest} screens KV pages using min/max bounds, SpargeAttention \citep{zhang2025spargeattention} filters blocks via compressed inner products, and SeerAttention \citep{gao2024seerattention} distills a 2D block gate from a dense teacher. Furthermore, \citet{hu2022adaptive} explore differentiable score thresholding for Named Entity Recognition (NER), though their formulation relies on a tuned top-$k$ hyperparameter, hard-masks unselected tokens to $-\infty$, and lacks a custom kernel to avoid quadratic materialization.

\paragraph{Trainable Sparse Attention.}
Recent architectures seek to incorporate sparsity natively into pretraining. HiRE \citep{yerram2024hire} explores learnable compression for low-rank attention, and DMA \citep{shi2025trainable} mixes fixed masking patterns using learned gates. Native Sparse Attention (NSA) \citep{yuan2025native} achieves notable quality by combining sliding-window, compressed-coarse, and selected-fine branches. However, NSA's multi-branch design requires substantial architectural reconfiguration and tuning. In contrast, ETA introduces a minimal, drop-in modification ($<0.1\%$ parameter overhead) that preserves standard Transformer execution while accelerating decoding.

\section{Elastic Threshold Attention}
\label{sec:method}
\label{sec:background}
We consider a standard decoder-only Transformer with $L$ layers and $H$ attention heads (full preliminaries in \autoref{app:transformer_math}), where query, key, and value vectors $q_t, k_s, v_s \in \mathbb{R}^{d_h}$ at positions $s \le t \le T$ compute dense attention output
$$
o_t = \sum_{s=1}^{t} A_{ts} v_s,\quad\text{with }A_{ts} = \frac{\exp(\langle q_t, k_s \rangle / \sqrt{d_h})}{\sum_{j=1}^{t} \exp(\langle q_t, k_j \rangle / \sqrt{d_h})}
$$ 
While prefill requires $\Theta(T^2 d_h)$ FLOPs, autoregressive decoding is strictly memory-bandwidth bound because generating each token $t$ requires streaming all cached past keys and values from GPU High-Bandwidth Memory (HBM) to SRAM at every step. Sparse attention addresses this bottleneck by restricting attention to an active index subset $\mathcal{I}_t \subseteq \{1, \dots, t\}$ with $|\mathcal{I}_t| \ll t$. Converting theoretical index sparsity into wall-clock speedups requires token selection to align with coalesced memory blocks on hardware, and the model to remain accurate when unselected context is omitted.

\subsection{Threshold Prediction}
In an effort to satisfy both desiderata, we propose Elastic Threshold Attention (ETA). ETA learns sparsity patterns by quantifying key relevance through query-key inner products $\langle q_t, k_s \rangle/\!\sqrt{d_h}$. Tokens falling below an adaptive threshold $\tau_t$ are suppressed. We predict $\tau_t$ using a per-layer linear projection on post-RoPE query vectors across all $H$ attention heads:
\begin{equation}
\bar{q}_t^{\ell} = \big[\, q_{t,1}^{\ell} \,\big\|\, \dots \,\big\|\, q_{t,H}^{\ell} \,\big] \in \mathbb{R}^{H d_h}, \qquad \tau_t^{\ell} = \bar{q}_t^{\ell} W_\tau^{\ell} + b_\tau^{\ell} \in \mathbb{R}^H
\label{eq:tau_pred}
\end{equation}
where $W_\tau^\ell \in \mathbb{R}^{H d_h \times H}$ and $b_\tau^\ell \in \mathbb{R}^H$ are learnable parameters. Conditioning on concatenated post-RoPE queries provides two essential benefits: (1) rotary positional information is preserved, and (2) all heads within a layer coordinate jointly, allowing dynamic allocation of attention budgets across heads.

To ensure training stability and avoid premature pruning, we employ a \emph{start-dense} initialization: projection weights are zero-initialized ($W_\tau^\ell = 0$) and biases are set to $b_\tau^\ell = -8.0$. Initially, $100\%$ of keys pass the attention mask, enabling a smooth transition from dense to sparse attention.

\subsection{Differentiable Masking and Tail Tempering}
To allow gradient propagation to $W_\tau^\ell$, we formulate a continuous relaxation using an annealed sigmoid gate. For query $t$ and key $s$ at head $h$, let $S_{ts}^h = \langle q_t, k_s \rangle / \sqrt{d_h}$ denote the pre-softmax score. The soft inclusion probability $m_{ts}^h \in (0, 1)$ is:
\begin{equation}
m_{ts}^h = \sigma\Big(\beta \cdot (S_{ts}^h - \tau_{t,h}^\ell)\Big)
\label{eq:soft_mask}
\end{equation}
where $\sigma(x) = \frac{1}{1+e^{-x}}$ and temperature $\beta \ge 1$ is annealed linearly from $1.0$ to $5.0$ over pre-training. Annealing provides high gradient flow early in training to escape initialization.

We then apply multiplicative score gating. Let $M_{ts}^{\text{causal}} = \begin{cases}
    0 &\text{if } s \leq t\\
    -\infty &\text{otherwise}
\end{cases}$ and define:
\begin{equation}
A_{ts}^h = \text{softmax}\left( (S_{ts}^h \odot M_{ts}) + M_{ts}^{\text{causal}} \right), \quad M_{ts} = \begin{cases} 1 & \text{if } s = t \\ m_{ts}^h & \text{otherwise} \end{cases}
\label{eq:mult_attn}
\end{equation}

\begin{figure}[t]
  \centering
  \begin{subfigure}[b]{0.46\textwidth}
    \centering
    \resizebox{\linewidth}{!}{%
    \begin{tikzpicture}[font=\sffamily, >=Stealth]
      \useasboundingbox (-0.8, -0.65) rectangle (6.6, 4.35);

      \draw[dashed, gray!45] (0, 0.85) -- (5.9, 0.85);
      \draw[dashed, thick, black!55] (4.5, 0) -- (4.5, 3.95) node[above, font=\scriptsize\bfseries, text=black!80] {Threshold $\tau$};

      \draw[->, thick, black!80] (0,0) -- (6.1,0) node[right, font=\footnotesize\bfseries] {$S_{ts}$};
      \draw[->, thick, black!80] (0,0) -- (0,4.1) node[above right, font=\footnotesize\bfseries, yshift=-2pt] {Weight $\exp(\tilde{S}_{ts})$};

      \foreach \x/\lbl in {0.0/$-3.6$, 1.4/$-2.2$, 2.6/$S^*$, 3.6/$0$, 4.5/$\tau$, 5.5/$+1.9$} {
        \draw[thick, black!80] (\x, 0) -- (\x, -0.08) node[below, font=\scriptsize] {\lbl};
      }

      \draw[thick, black!80] (0, 0) -- (-0.08, 0) node[left, font=\scriptsize] {$0$};
      \draw[thick, etablue!90!black] (0, 0.85) -- (-0.08, 0.85) node[left, font=\scriptsize\bfseries, text=etablue!90!black] {$1.0$};
      \draw[thick, black!80] (0, 3.40) -- (-0.08, 3.40) node[left, font=\scriptsize] {$4.0$};

      \draw[domain=0.0:5.10, samples=60, smooth, dashed, thick, gray!75] plot (\x, {0.85*exp(\x - 3.6)});

      \draw[very thick, nsaorange] (0.0, 0.03) -- (4.5, 0.03);
      \draw[very thick, nsaorange, dotted] (4.5, 0.03) -- (4.5, {0.85*exp(0.9)});
      \draw[domain=4.5:5.10, samples=30, smooth, very thick, nsaorange] plot (\x, {0.85*exp(\x - 3.6)});

      \draw[domain=0.0:5.10, samples=90, smooth, line width=1.5pt, etablue] plot (\x, {0.85*exp((\x - 3.6) * (0.78 / (1 + exp(min(7.0, -1.7*(\x - 2.95)))) + 0.22 / (1 + exp(min(7.0, -3.2*(\x - 4.5))))))});

      \draw[->, thick, etablue!85!black] (0.85, 0.14) -- (0.85, 0.75);
      \node[font=\tiny\bfseries, text=etablue!90!black, align=center] at (1.25, 1.48) {Asymptotic Floor\\[-1pt]{($\exp(0)=1$, $S\ll 0$)}};
      \draw[->, thick, etablue!85!black] (2.60, 1.02) -- (2.60, 0.70);
      \node[font=\tiny\bfseries, text=etablue!90!black, align=center] at (2.70, 1.15) {Soft SiLU Dip};
      \draw[->, thick, etablue!85!black] (4.15, 1.78) -- (4.15, 1.22);
      \node[font=\tiny\bfseries, text=etablue!90!black, align=center] at (3.60, 2.05) {Monotonic\\[-1pt]Ramp ($[S^*\!,\tau]$)};

      \draw[fill=white, draw=gray!35, rounded corners=2pt] (0.2, 2.50) rectangle (3.45, 3.80);
      \draw[line width=1.5pt, etablue] (0.35, 3.55) -- (0.80, 3.55) node[right, font=\scriptsize\bfseries, text=black!85] {ETA Multiplicative};
      \draw[very thick, nsaorange] (0.35, 3.15) -- (0.80, 3.15) node[right, font=\scriptsize, text=black!85] {Additive Hard};
      \draw[dashed, thick, gray!75] (0.35, 2.75) -- (0.80, 2.75) node[right, font=\scriptsize, text=black!85] {Dense};
    \end{tikzpicture}%
    }
    \caption{Shifted-SiLU Logit Tempering}
    \label{fig:tail_contraction_curve}
  \end{subfigure}\hfill
  \begin{subfigure}[b]{0.46\textwidth}
    \centering
    \resizebox{\linewidth}{!}{%
    \begin{tikzpicture}[font=\sffamily, >=Stealth]
      \useasboundingbox (-0.6, -0.65) rectangle (6.8, 4.35);

      \draw[->, thick, black!80] (0,0) -- (6.2,0) node[right, font=\footnotesize\bfseries] {Keys};
      \draw[->, thick, black!80] (0,0) -- (0,4.1) node[above right, font=\footnotesize\bfseries, yshift=-2pt] {Weight $\exp(\tilde{S}_{ts})$};

      \draw[dashed, thick, etablue!80] (0, 0.85) -- (5.95, 0.85) node[right, font=\scriptsize\bfseries, text=etablue!90!black] {Baseline ($1.0$)};
      \draw[dotted, thick, nsaorange!90] (0, 0.04) -- (5.95, 0.04) node[right, font=\scriptsize\bfseries, text=nsaorange!90!black] {};

      \draw[fill=gray!6, draw=gray!45, dashed, rounded corners=3pt] (0.25, -0.45) rectangle (2.80, 1.90);
      \node[above, font=\scriptsize\bfseries, text=gray!70!black] at (1.52, 1.93) {Block $B_1$ (Pruned)};
      \foreach \x/\eta/\raw/\lbl in {0.65/0.82/0.14/k_1, 1.52/0.66/0.36/k_2, 2.40/1.02/1.45/k_3} {
        \fill[etablue!35] (\x-0.20, 0) rectangle (\x-0.02, \eta);
        \draw[etablue!75, thick] (\x-0.20, 0) rectangle (\x-0.02, \eta);
        \fill[gray!18] (\x+0.02, 0) rectangle (\x+0.20, \raw);
        \draw[gray!65!black, thin, dashed] (\x+0.02, 0) rectangle (\x+0.20, \raw);
        \node[below, font=\scriptsize, text=gray!70!black] at (\x, -0.03) {$\lbl$};
      }

      \draw[fill=white, draw=gray!35, rounded corners=2pt] (0.25, 2.45) rectangle (3.12, 3.88);
      \fill[etablue!65] (0.40, 3.51) rectangle (0.60, 3.71);
      \draw[etablue!90!black, thick] (0.40, 3.51) rectangle (0.60, 3.71);
      \node[right, font=\scriptsize\bfseries, text=black!85] at (0.65, 3.61) {ETA Multiplicative};
      \fill[gray!25] (0.40, 3.05) rectangle (0.60, 3.25);
      \draw[gray!75!black, thick, dashed] (0.40, 3.05) rectangle (0.60, 3.25);
      \node[right, font=\scriptsize, text=black!85] at (0.65, 3.15) {Dense (Untempered)};
      \fill[nsaorange!25] (0.40, 2.59) rectangle (0.60, 2.79);
      \draw[nsaorange!90!black, thick] (0.40, 2.59) rectangle (0.60, 2.79);
      \node[right, font=\scriptsize, text=black!85] at (0.65, 2.69) {Additive Hard ($\approx 0$)};

      \draw[fill=teal!5, draw=teal!65!black, thick, rounded corners=3pt] (3.25, -0.45) rectangle (5.85, 3.45);
      \node[above, font=\scriptsize\bfseries, text=teal!80!black] at (4.55, 3.48) {Block $B_2$ (Loaded)};

      \fill[etablue!85] (3.72-0.16, 0) rectangle (3.72+0.16, 2.85);
      \draw[etablue!95!black, thick] (3.72-0.16, 0) rectangle (3.72+0.16, 2.85);
      \node[above, font=\scriptsize\bfseries, text=etablue!95!black] at (3.72, 2.85) {Salient};
      \node[below, font=\scriptsize\bfseries] at (3.72, -0.03) {$k_4^*$};

      \foreach \x/\eta/\raw/\lbl in {4.60/1.05/1.55/k_5, 5.40/0.70/0.28/k_6} {
        \fill[etablue!65] (\x-0.24, 0) rectangle (\x-0.08, \eta);
        \draw[etablue!90!black, thick] (\x-0.24, 0) rectangle (\x-0.08, \eta);
        \fill[gray!22] (\x-0.06, 0) rectangle (\x+0.10, \raw);
        \draw[gray!75!black, thick, dashed] (\x-0.06, 0) rectangle (\x+0.10, \raw);
        \fill[nsaorange!35] (\x+0.12, 0) rectangle (\x+0.26, 0.05);
        \draw[nsaorange!90!black, thick] (\x+0.12, 0) rectangle (\x+0.26, 0.05);
        \node[below, font=\scriptsize] at (\x, -0.03) {$\lbl$};
      }
    \end{tikzpicture}%
    }
    \caption{Block Pruning and Co-Admitted Keys}
    \label{fig:block_overinclusion_buffer}
  \end{subfigure}
  \caption{\textbf{SiLU-gated tail tempering vs.\ support truncation.} \textbf{(a)} A shifted $\operatorname{SiLU}$ in logit space tempers the attention distribution without dropping tokens from its support. \textbf{(b)} Across pruned blocks ($B_1$) and co-admitted keys ($k_5, k_6$) in loaded blocks ($B_2$), tempering compresses wide raw weight spreads (gray) into a tightly bounded band around $1.0$ while preserving near-threshold order ($k_2 < k_3$ and $k_6 < k_5$).}
  \label{fig:tail_flattening_intuition}
\end{figure}
Multiplicative score gating is equivalent to passing the attention logits through a shifted $\operatorname{SiLU}$ gate. Let $\tilde{S}(S) = S\cdot \sigma(\beta(S-\tau))$ denote the gated scores. Taking the derivative at $\tau$,
$$
\left.\frac{d\tilde{S}}{dS}\right|_{S=\tau} = \left.M_{ts}[1+\beta S(1-M_{ts})]\right|_{S=\tau} = \frac{1}{2}\left(1+\frac{\beta\tau}{2}\right),
$$
reveals two operating regimes across our trained models:
\begin{enumerate}[leftmargin=*]
    \item \textbf{Regime 1: $\tau \lesssim -\frac{2}{\beta}$}. This is the regime ETA occupies at initialization ($b_\tau^\ell = -8$) and retains in Layer~$0$ after training ($\tau \in [-9.3, -4.8]$). Here the gate acts essentially as dense attention since $\tilde{S}(S) \approx S$ throughout the support of $S$.
    \item \textbf{Regime 2: $\tau > 0 > -\frac{2}{\beta}$}. This is the regime $99.9\%$ of heads in Layers~$1\text{--}21$ converge to after training (median $\tau = +2.81$). Here the mechanism acts as a \textit{two-sided contraction toward zero} (\autoref{fig:tail_contraction_curve}): negative scores ($S < 0$) are pulled upward toward $0$, while positive sub-threshold scores ($0 < S < \tau$) are damped downward below $S$. Solving $\frac{d\tilde{S}}{dS} = 0$ bounds the unique global minimizer $S^*$ of $\tilde{S}$ analytically\footnote{$W_0$ denotes the principal branch of the Lambert-$W$ function, with $0 < W_0(e^{-1-\beta\tau}) < W_0(e^{-1}) \approx 0.278$ for $\tau > 0$.}:
    $$
    -\frac{1.28}{\beta} < S^*(\beta,\tau) = -\frac{1+W_0(e^{-1-\beta\tau)}}{\beta} < -\frac{1}{\beta} < 0, \qquad \tilde{S}(S^*) = -\frac{W_0(e^{-1-\beta\tau})}{\beta} \approx 0.
    $$
    Consequently, $\tilde{S}$ remains strictly monotonically increasing on $[S^*,\infty)$ while the entire sub-threshold tail $(-\infty, \tau)$ is \textit{tempered} into a low-variance, high-entropy band around $\exp(0) = 1$: normalized tail entropy rises from $0.790$ to $0.916$ and the max-to-mean tail weight ratio $\kappa = w_{\max}/\bar{w}$ drops from $15.6$ to $2.03$ (\autoref{tab:masking_formulation_comparison}).
\end{enumerate}

\paragraph{Invariance to block pruning and over-inclusion.}
At block-sparse inference (\autoref{fig:block_overinclusion_buffer}), the causal context $\mathcal{S}_t = \{1, \dots, t\}$ partitions into three disjoint sets: salient keys $\mathcal{K}_t = \{s : S_{ts}^h \ge \tilde{\tau}_t\}$ ($k_4^*$), co-admitted sub-threshold keys $\mathcal{C}_t$ inside loaded blocks ($k_5, k_6$ in $B_2$), and dropped sub-threshold keys $\mathcal{D}_t$ inside pruned blocks ($k_1\text{--}k_3$ in $B_1$). Both training and inference evaluate the same gated weights $w_s := \exp(\tilde{S}_{ts}^h)$ on all retained tokens. For any token subset $\mathcal{X} \subseteq \mathcal{S}_t$, let $Z_{\mathcal{X}} := \sum_{s \in \mathcal{X}} w_s$ denote its partition sum and $\tilde{v}_{\mathcal{X}} := \frac{1}{Z_{\mathcal{X}}} \sum_{s \in \mathcal{X}} w_s v_s$ its locally normalized attention output.

Then the token-pruned output $o_t^{\mathrm{token}} := \tilde{v}_{\mathcal{K}_t}$ (retaining $\mathcal{K}_t$), the block-sparse inference output $o_t^{\mathrm{block}} := \tilde{v}_{\mathcal{K}_t \cup \mathcal{C}_t}$ (loading blocks $\mathcal{K}_t \sqcup \mathcal{C}_t$ while skipping $\mathcal{D}_t$), and the training output $o_t^{\mathrm{train}} := \tilde{v}_{\mathcal{K}_t \cup \mathcal{C}_t \cup \mathcal{D}_t}$ (attending over all $\mathcal{S}_t$) satisfy the exact decomposition:
\begin{equation}
o_t^{\mathrm{train}} = \frac{Z_{\mathcal{K}} o_t^{\mathrm{token}} + Z_{\mathcal{C}} \tilde{v}_{\mathcal{C}} + Z_{\mathcal{D}} \tilde{v}_{\mathcal{D}}}{Z_{\mathcal{K}} + Z_{\mathcal{C}} + Z_{\mathcal{D}}}, \qquad
o_t^{\mathrm{block}} = \frac{Z_{\mathcal{K}} o_t^{\mathrm{token}} + Z_{\mathcal{C}} \tilde{v}_{\mathcal{C}}}{Z_{\mathcal{K}} + Z_{\mathcal{C}}}.
\label{eq:convex_output_decomp}
\end{equation}
Because SiLU tempering compresses the max-to-mean tail weight ratio ($\kappa = 2.03$ vs.\ $15.6$), we show under simplifying statistical assumptions in \autoref{appx:log_odds_invariance} that both tail averages $\tilde{v}_{\mathcal{C}}$ and $\tilde{v}_{\mathcal{D}}$ cancel towards zero at rate $\mathcal{O}(\sqrt{\kappa/|\mathcal{X}|})$ for $\mathcal{X} \in \{\mathcal{C}_t,\mathcal{D}_t\}$ (\autoref{prop:uniform_floor_cancellation}). Dropping these near-zero vectors leaves all three outputs pointing along the exact same active signal vector $o_t^{\mathrm{token}}$:
\begin{equation}
o_t^{\mathrm{train}} \approx \frac{Z_{\mathcal{K}}}{Z_{\mathcal{K}} + Z_{\mathcal{C}} + Z_{\mathcal{D}}}\, o_t^{\mathrm{token}}, \qquad
o_t^{\mathrm{block}} \approx \frac{Z_{\mathcal{K}}}{Z_{\mathcal{K}} + Z_{\mathcal{C}}}\, o_t^{\mathrm{token}}.
\end{equation}
Since $\operatorname{RMSNorm}$ is invariant to positive scalar multiplication, neither dropping pruned blocks $\mathcal{D}_t$ nor co-admitting sub-threshold keys $\mathcal{C}_t$ alters the post-norm representation significantly; on the 22-layer 1.45B model, tail value norms follow the theoretical $1/\sqrt{|\mathcal{K}_t|+|\mathcal{C}_t|}$ decay and block-sparse decoding maintains a $0.9917$ residual cosine similarity with training-time gating (\autoref{appx:empirical_alignment}, \autoref{tab:prop1_verification_145b}, \autoref{tab:representation_alignment}). 

Co-admitting $\mathcal{C}_t$ actually moves $o_t^{\mathrm{block}}$ strictly closer to $o_t^{\mathrm{train}}$ than exact token pruning $o_t^{\mathrm{token}}$ (\autoref{prop:overinclusion_interpolation}). By contrast, additive hard masking sets $Z_{\mathcal{C}} = Z_{\mathcal{D}} = 0$ during training ($o_t^{\mathrm{train,add}} = o_t^{\mathrm{token}}$) and puts zero gradient on tail values, so co-admitted keys $\mathcal{C}_t$ at inference push $o_t^{\mathrm{block}}$ away from $o_t^{\mathrm{train,add}}$ and degrade PPL by $+16.6\%$ (\autoref{appx:additive_masking}, \autoref{tab:masking_formulation_comparison}).

\paragraph{Sparsity regularization.}
To incentivize sparsity during pre-training, we penalize the mean inclusion probability across layers $L$, heads $H$, and positions $T$:
\begin{equation}
\mathcal{L}_{\text{sparsity}} = \frac{\lambda_{\text{sparsity}}}{LHT} \sum_{\ell=1}^L \sum_{h=1}^H \sum_{t=1}^T \frac{1}{t} \sum_{s \le t} m_{ts}^h
\label{eq:l_sparsity}
\end{equation}
Because $m_{ts}^h$ is smooth, $\mathcal{L}_{\text{sparsity}}$ supplies direct gradients $\partial \mathcal{L}_{\text{sparsity}} / \partial \tau$, balancing predictive loss against computational budget.

\subsection{Efficient Fused Training Kernel}
To prevent materializing the dense $T \times T$ attention matrix in HBM, we implement a custom fused Triton kernel \citep{tillet2019triton}. As a result, ETA pre-training incurs the exact same $\mathcal{O}(T^2 d)$ FLOPs, $\mathcal{O}(Td)$ HBM footprint, and $\mathcal{O}(T^2 d^2 M^{-1})$ I/O complexity as dense FlashAttention-2 \citep{dao2022flashattention}. Forward/backward pseudocode (\autoref{alg:fwd_kernel}, \autoref{alg:bwd_kernel}), exact gradient derivations (\autoref{app:attention_gradients}), and complexity analyses (\autoref{appx:complexities}) are detailed in \autoref{app:triton_kernel}.

\section{Hardware-Aligned Inference Flow}
\label{sec:inference}
In line with our formalization, during autoregressive decoding, ETA avoids loading non-salient KV blocks $(\mathcal{D}_t)$ into SRAM. Realizing wall-clock speedups requires identifying non-significant blocks in $\mathcal{O}(1)$ time without computing individual dot products.

\subsection{Block-Based Scanning via Dual Metadata Caching}
To align with GPU memory layouts, we partition the KV cache into contiguous blocks of size $b$. A block is termed \emph{significant} if it contains at least one token with attention score $S_{ts}^h \ge \tau_t$.

To filter non-significant blocks in $\mathcal{O}(1)$ time without loading full KV tiles from HBM, we cache a \textbf{Dual Probabilistic-Geometric Block Index} storing three summary statistics per block $B$: the key centroid $\boldsymbol{\mu}_B = \frac{1}{b}\sum_{k \in B} k \in \mathbb{R}^{d_h}$, the coordinate-wise variance $\boldsymbol{\sigma}_B^2 = \frac{1}{b}\sum_{k \in B} (k - \boldsymbol{\mu}_B)^{\odot 2} \in \mathbb{R}^{d_h}$, and the maximum Euclidean norm $M_B = \max_{k \in B} \|k\|_2 \in \mathbb{R}$. Given a query $q \in \mathbb{R}^{d_h}$, we estimate the maximum in-block dot product by intersecting a directional probabilistic moment bound with the exact Cauchy--Schwarz spherical norm ceiling:
\begin{equation}
\operatorname{Score}_{\text{dual}}(q, B) = \frac{1}{\sqrt{d_h}} \min\!\left( \langle q, \boldsymbol{\mu}_B \rangle + z \sqrt{\sum_{i=1}^{d_h} q_i^2 \sigma_{B,i}^2}, \;\; \|q\|_2 M_B \right).
\label{eq:dual_bound_eq}
\end{equation}
Any block where $\operatorname{Score}_{\text{dual}}(q, B) < \tau_t$ is bypassed during HBM loading\footnote{In practice we often loosen the threshold by adding a slack constant $\alpha$ to obtain $\tilde{\tau}_t = \tau_t - \alpha$}, reducing decode complexity from $\mathcal{O}(n \cdot d)$ to $\mathcal{O}\big((\lceil n/b \rceil + m \cdot b) d\big)$ for $m \ll \lceil n/b \rceil$ retrieved blocks. At coarse hardware tile sizes ($b=64$), we can also optionally replace $\boldsymbol{\sigma}_B$ with a sub-block envelope standard deviation and pin the two most recent blocks to prevent variance dilution without increasing the metadata footprint (see \autoref{appx:screening_index_quality} and \autoref{appx:dual_index_math}).

\subsection{Fused Triton Kernel Implementation with Top-1 Rescue}
We implement block-sparse decoding as a custom fused Triton kernel \citep{tillet2019triton}. As illustrated in \autoref{fig:inference_block_selective_load} (with full algorithmic details and pseudocode in \autoref{appx:fused-kernel}, \autoref{alg:inference_kernel}), the kernel streams block metadata into SRAM to evaluate $\operatorname{Score}_{\text{dual}}(q, B)$ on the fly, skipping HBM transfers for pruned blocks. To maximize Streaming Multiprocessor (SM) occupancy, the sequence is partitioned into independent block chunks processed in parallel and merged via associative online-softmax reduction. If an aggressive threshold prunes all candidate blocks in the first chunk ($k=1$), a localized \textit{Top-1 Rescue Mechanism} unconditionally admits the single highest-scoring block, guaranteeing numerical stability without cross-SM synchronization (\autoref{appx:fused-kernel}). Because metadata is indexed per KV head ($H_{KV}$), the footprint scales as $\mathcal{O}(1/b)$, adding merely $1.6\%$ memory overhead at $b=64$ in a held-out table compatible with paged caching. Context extrapolation up to $8192$ tokens ($4\times L_{\text{train}}$) is preserved via NTK-aware RoPE scaling \citep{bloc972023ntk}.

\renewcommand{\topfraction}{0.95}
\renewcommand{\bottomfraction}{0.95}
\renewcommand{\textfraction}{0.04}
\renewcommand{\floatpagefraction}{0.90}
\section{Experiments}
\label{sec:experiments}

\subsection{Training Dynamics and Regularization Scheduling}
\label{subsec:training_dynamics}
We train a 1.45B parameter ETA language model from scratch on FineWeb \citep{penedo2024fineweb} using a LLaMA-style backbone \citep{llama3modelcard}. See \autoref{appx:training} for more details on training. To isolate the effect of trainable thresholding, we train an identical dense baseline under matching hyperparameters and optimizer settings.

As illustrated in \autoref{fig:training}, ETA converges to pretraining loss parity with the dense baseline while achieving $85\%$ effective soft sparsity ($1 - \overline{m_{ts}^h}$) across all heads and layers. Although start-dense initialization ($b_\tau^\ell = -8$) begins in a low-gradient regime ($\sigma'(8) = 3.35 \times 10^{-4}$), Adam's scale-invariant updates combined with the $\beta$-annealing strategy drive rapid threshold migration into the score distribution within $\sim 10^2$ optimizer steps (\autoref{appx:training_escape}).

\begin{figure}[!htbp]
\vspace{-2pt}
    \centering
    \begin{subfigure}{0.48\textwidth}
        \centering
        \includegraphics[width=0.84\linewidth, keepaspectratio]{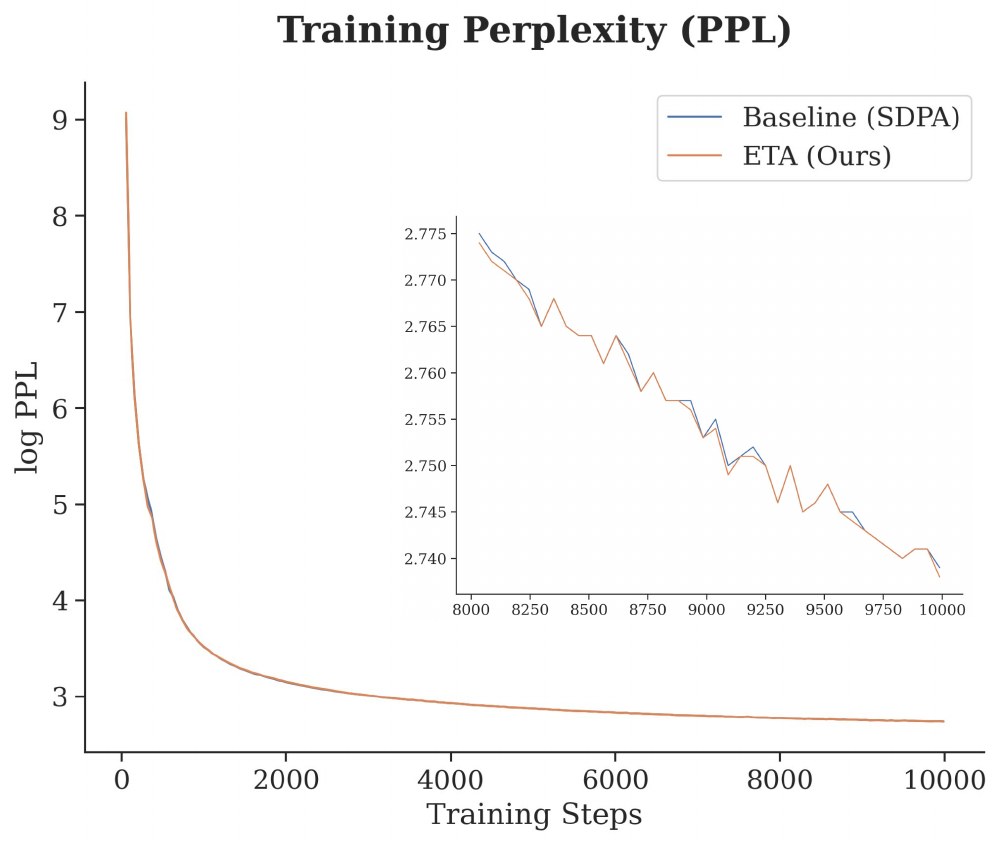}
        \label{fig:losses}
    \end{subfigure}
    \hfill
    \begin{subfigure}{0.48\textwidth}
        \centering
        \includegraphics[width=0.84\linewidth, keepaspectratio]{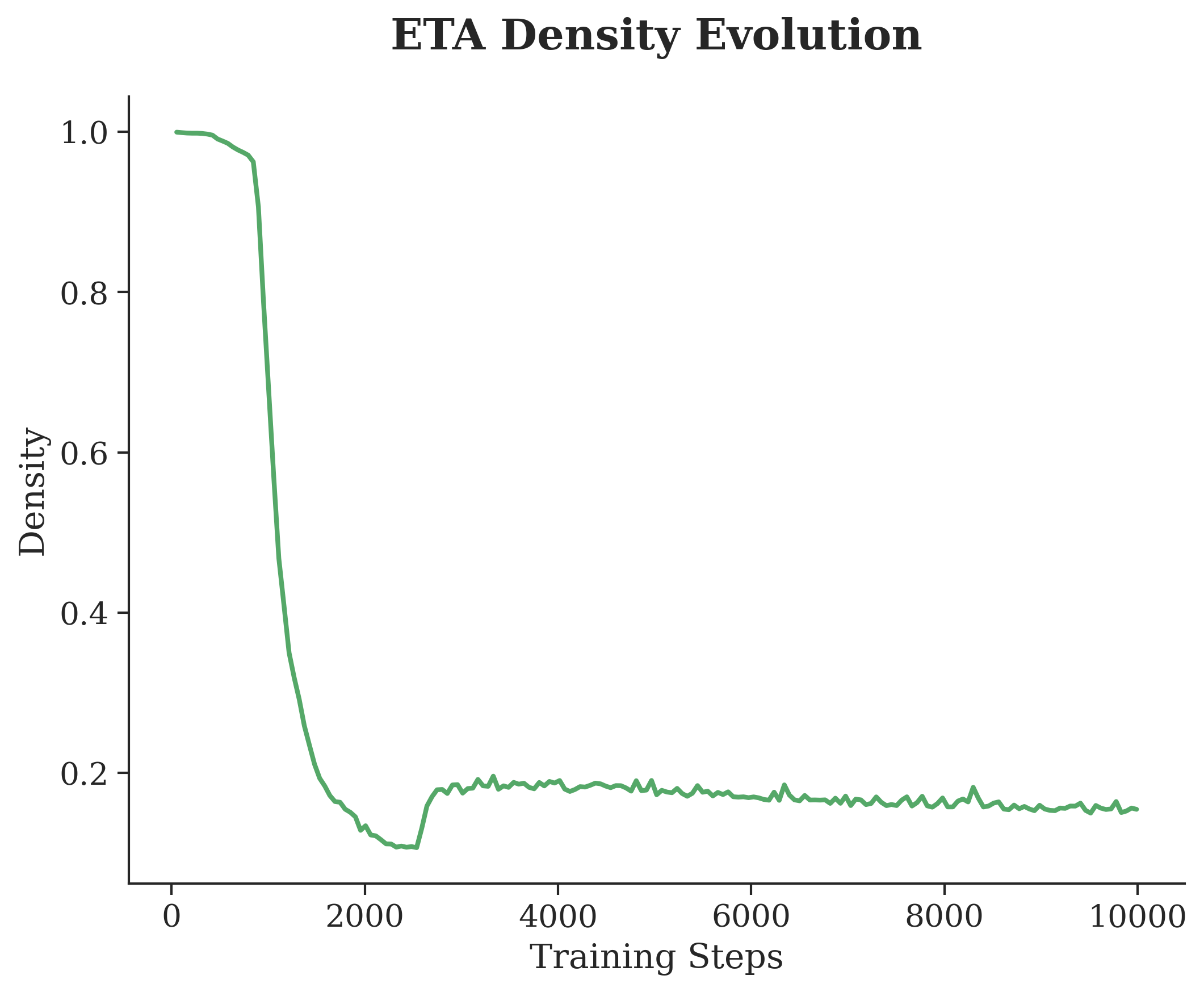}
        \label{fig:sparsity-training}
    \end{subfigure}
    \vspace{-2pt}
    \caption{\textbf{Training dynamics of ETA against standard dense attention.} ETA achieves loss parity with dense attention while maintaining $\approx 85\%$ average soft sparsity across all heads and layers.}
    \label{fig:training}
\vspace{-2pt}
\end{figure}

\subsection{Inference-Time Quality and Knowledge Benchmarks}
\label{subsec:inference_benchmarks}
We evaluate the 1.45B model at native context length ($L=2048$) on five standard benchmarks: WikiText-2 \citep{merity2016pointer}, FineWeb, and C4 \citep{JMLR:v21:20-074} for language modeling perplexity, and ARC-Easy \citep{clark2018think} and HellaSwag \citep{zellers2019hellaswag} for zero-shot common-sense reasoning. We compare both fine-grained ETA ($b=4$) and our hardware-aligned Tensor Core configuration ($b=64$) against dense attention (FlashAttention-2 / SDPA), Sliding Window Attention (SWA-256), StreamingLLM \citep{xiao2024efficient} ($W,S=256,4$), BigBird \citep{zaheer2020big} ($W,S,\text{Str}=256,4,16$), and $\text{H}_2\text{O}$ \citep{zhang2023h2o} ($W,S,\text{HH}=256,4,10\%$).

\begin{table}[h]
\centering
\midsize
\renewcommand{\arraystretch}{0.92} 
\setlength{\tabcolsep}{4.5pt} 
\caption{\textbf{Language modeling perplexity and zero-shot reasoning accuracy at $L=2048$ on the 1.45B model.} Lower sub-text indicates the average fraction of past tokens attended per head. Because ARC-Easy and HellaSwag prompts ($25\text{--}110$ tokens) are shorter than the $W=256$ local window of the heuristic baselines, those methods degenerate to $100\%$ dense attention on reasoning tasks, whereas fine-grained ETA actively prunes redundant tokens.}
\label{tab:inference_experiments_stats}
\resizebox{\textwidth}{!}{%
\begin{tabular}{l c c c c c}
\toprule
\textbf{Method Variant} & \textbf{\shortstack{WikiText-2\\PPL ($\downarrow$)}} & \textbf{\shortstack{FineWeb\\PPL ($\downarrow$)}} & \textbf{\shortstack{C4\\PPL ($\downarrow$)}} & \textbf{\shortstack{ARC-Easy\\Acc (\%) ($\uparrow$)}} & \textbf{\shortstack{HellaSwag\\Acc (\%) ($\uparrow$)}} \\
\midrule
\textbf{Dense Baseline} & 
\shortstack{$19.35 \pm 0.80$ \\ \tiny 100\%} & 
\shortstack{$16.19 \pm 1.25$ \\ \tiny 100\%} & 
\shortstack{$17.49 \pm 0.33$ \\ \tiny 100\%} & 
\shortstack{$44.0\% \pm 3.51\%$ \\ \tiny 100\%} & 
\shortstack{$36.0\% \pm 3.39\%$ \\ \tiny 100\%} \\
\cmidrule(lr){1-6} \addlinespace[1pt] 
\textbf{$\text{H}_2\text{O}$} & 
\shortstack{$20.44 \pm 0.80$ \\ \tiny 33.7\%} & 
\shortstack{$16.39 \pm 1.29$ \\ \tiny 33.4\%} & 
\shortstack{$17.72 \pm 0.69$ \\ \tiny 33.1\%} & 
\shortstack{$44.0\% \pm 3.51\%$ \\ \tiny 100\%} & 
\shortstack{$36.0\% \pm 3.39\%$ \\ \tiny 100\%} \\
\textbf{BigBird} & 
\shortstack{$22.54 \pm 0.90$ \\ \tiny 28.5\%} & 
\shortstack{$16.81 \pm 1.33$ \\ \tiny 28.5\%} & 
\shortstack{$18.04 \pm 0.74$ \\ \tiny 28.5\%} & 
\shortstack{$44.0\% \pm 3.51\%$ \\ \tiny 100\%} & 
\shortstack{$36.0\% \pm 3.39\%$ \\ \tiny 100\%} \\
\textbf{StreamingLLM} & 
\shortstack{$23.06 \pm 0.91$ \\ \tiny 23.8\%} & 
\shortstack{$16.90 \pm 1.34$ \\ \tiny 23.8\%} & 
\shortstack{$18.12 \pm 0.74$ \\ \tiny 23.8\%} & 
\shortstack{$44.0\% \pm 3.51\%$ \\ \tiny 100\%} & 
\shortstack{$36.0\% \pm 3.39\%$ \\ \tiny 100\%} \\
\textbf{SWA-256} & 
\shortstack{$25.63 \pm 0.99$ \\ \tiny 23.4\%} & 
\shortstack{$18.32 \pm 1.48$ \\ \tiny 23.4\%} & 
\shortstack{$19.83 \pm 0.82$ \\ \tiny 23.4\%} & 
\shortstack{$44.0\% \pm 3.51\%$ \\ \tiny 100\%} & 
\shortstack{$36.0\% \pm 3.39\%$ \\ \tiny 100\%} \\
\cmidrule(lr){1-6} \addlinespace[1pt]
\textbf{ETA ($b=4, z=2.0$)} & 
\hlprob{\shortstack{\textbf{\boldmath $19.39 \pm 0.78$} \\ \tiny 42.7\%}} & 
\hlprob{\shortstack{\textbf{\boldmath $16.31 \pm 1.26$} \\ \tiny 38.2\%}} & 
\hlprob{\shortstack{\textbf{\boldmath $17.56 \pm 0.33$} \\ \tiny 36.3\%}} & 
\hlprob{\shortstack{\textbf{\boldmath $44.5\% \pm 3.51\%$} \\ \tiny 91.3\%}} & 
\hlprob{\shortstack{\textbf{\boldmath $40.0\% \pm 3.46\%$} \\ \tiny 89.7\%}} \\
\textbf{ETA ($b=64, a=-0.4$)} & 
\hlprob{\shortstack{$20.68 \pm 0.81$ \\ \tiny 55.0\%}} & 
\hlprob{\shortstack{$17.11 \pm 1.39$ \\ \tiny 50.2\%}} & 
\hlprob{\shortstack{$18.60 \pm 0.29$ \\ \tiny 48.3\%}} & 
\hlprob{\shortstack{$44.0\% \pm 3.51\%$ \\ \tiny 100\%}} & 
\hlprob{\shortstack{$36.0\% \pm 3.39\%$ \\ \tiny 100\%}} \\
\bottomrule
\end{tabular}%
}
\vspace{-4pt}
\end{table}

As shown in \autoref{tab:inference_experiments_stats}, static-pattern baselines (SWA-256, StreamingLLM, and BigBird) incur substantial perplexity degradation across all three language modeling corpora because they blindly discard distant context once tokens exit the fixed local window. Fine-grained ETA ($b=4$) achieves dense-level perplexity across all three corpora and attains the best performance on every language modeling and zero-shot reasoning benchmark. When scaling block size by $16\times$ to our hardware-aligned Tensor Core deployment ($b=64$), ETA continues to remain competitive across all corpora while preserving full reasoning accuracy.

\subsection{Block Granularity and GQA Union Loading}
\label{sec:blocksize}
\label{sec:gqa_union}

\begin{wraptable}{r}{0.48\linewidth}
\vspace{-6pt}
\centering
\caption{\textbf{Block size at matched ${\sim}51\%$ GQA-union bandwidth} and $126$M C4 PPL ($L=2048$).}
\label{tab:blocksize_pareto_1}
\label{tab:blocksize_pareto}
\small
\setlength{\tabcolsep}{3pt}
\begin{tabular}{rrrrr}
\toprule
$b$ & $z$ & $D_{\text{head}}$ & $D_{\text{union}}$ & PPL ($\downarrow$) \\
\midrule
$4$  & $2.0$ & $34.77\%$ & $51.57\%$ & $102.32$ \\
$8$  & $2.0$ & $37.23\%$ & $53.95\%$ & $98.39$  \\
$16$ & $1.5$ & $32.70\%$ & $47.51\%$ & $99.07$  \\
$32$ & $1.0$ & $39.00\%$ & $50.61\%$ & $95.92$  \\
$64$ & $1.0$ & $\mathbf{44.26\%}$ & $51.94\%$ & $\mathbf{93.48}$ \\
\bottomrule
\end{tabular}
\vspace{1mm}
\end{wraptable}

\paragraph{GQA Union Sharing Favors Larger Block Sizes}
On GPU hardware, Grouped-Query Attention ($G = H_Q / H_{\text{KV}}$) loads the union of active KV blocks across all query heads sharing a KV head. We find that while fine-grained blocks ($b=4$) minimize per-head density (\autoref{tab:index_quality}), coarser blocks ($b=64$) lead to higher cross-head overlapping and are thus significantly more efficient at a fixed \emph{physical HBM bandwidth budget} (\autoref{tab:blocksize_pareto_1}). Thus, for the exact same HBM memory transfer, $b=64$ allows each query head to attend to $+9.5\%$ more context and improves perplexity by $8.8$ points. This is very beneficial during inference because it enables coalesced Tensor Core execution (\autoref{appx:block_mechanics}).

\paragraph{Comparison with Quest \citep{tang2024quest}}
At block size $b=64$, ETA's Dual Bound (\autoref{eq:dual_bound_eq}) captures a substantially higher fraction of true above-threshold keys than Quest's coordinate-wise bounding box (\autoref{tab:pareto_c4_needle}; see \autoref{appx:dual_index_math} for the geometric analysis). This advantage stems from ETA's robustness to coordinate outliers. Quest bounds each block by summing per-coordinate extrema ($\sum_{i=1}^{d_h} \max(q_i k_{\max, i}, q_i k_{\min, i})$), which loosens in high dimensions whenever isolated coordinates spike. In contrast, ETA's dual bounds are much sharper, reducing block false negatives and closing most of the perplexity gap between $b=64$ and $b=4$ (\autoref{tab:pareto_c4_needle}, \autoref{tab:full_c4_pareto_appendix}).

\begin{table}[!htbp]
\centering
\caption{\textbf{Pareto Frontier on C4 Perplexity ($L=2048$), Token Recall, and Needle Retrieval ($L=4096$)} on the 1.45B model across block sizes $b \in \{4, 64\}$ vs.\ Quest \citep{tang2024quest} ($b=64$) and $\text{H}_2\text{O}$ \citep{zhang2023h2o}. Extended offset sweep in \autoref{tab:full_c4_pareto_appendix}.}
\label{tab:pareto_c4_needle}
\renewcommand{\arraystretch}{0.85}
\setlength{\tabcolsep}{5.5pt}
\resizebox{0.84\linewidth}{!}{%
\begin{tabular}{l c c c c c c}
\toprule
\textbf{Method} & \textbf{Config} & \textbf{$D_{\text{head}}$} & \textbf{$D_{\text{union}}$} & \textbf{Recall ($\uparrow$)} & \textbf{C4 PPL ($\downarrow$)} & \textbf{Needle ($\uparrow$)} \\
\midrule
\textbf{Dense Baseline} & \texttt{FA2 / SDPA} & $100\%$ & $100\%$ & $100\%$ & $17.49 \pm 0.33$ & $66.7\%$ \\
\cmidrule(lr){1-7}
\multirow{3}{*}{\textbf{ETA ($b=4$)}} & $a=+0.4$ & $36.3\%$ & $66.6\%$ & $98.3\%$ & $\mathbf{17.56 \pm 0.33}$ & $\mathbf{66.7\%}$ \\
 & $a=-0.8$ & $24.4\%$ & $50.6\%$ & $97.1\%$ & $17.82 \pm 0.33$ & $\mathbf{66.7\%}$ \\
 & $a=-2.5$ & $\mathbf{13.1\%}$ & $\mathbf{30.1\%}$ & $94.8\%$ & $19.81 \pm 0.34$ & $\mathbf{66.7\%}$ \\
\cmidrule(lr){1-7}
\multirow{3}{*}{\textbf{ETA ($b=64$)}} & $a=+0.2$ & $55.3\%$ & $79.0\%$ & $\mathbf{98.4\%}$ & $18.30 \pm 0.29$ & $\mathbf{66.7\%}$ \\
 & $a=-0.4$ & $48.3\%$ & $69.7\%$ & $97.3\%$ & $18.60 \pm 0.29$ & $\mathbf{66.7\%}$ \\
 & $a=-1.8$ & $37.6\%$ & $49.6\%$ & $91.6\%$ & $19.21 \pm 0.30$ & $\mathbf{66.7\%}$ \\
\cmidrule(lr){1-7}
\multirow{3}{*}{\textbf{Quest ($b=64$)}} & $\text{top-}38\%$ & $41.1\%$ & $66.7\%$ & $54.0\%$ & $18.17 \pm 0.34$ & $66.7\%$ \\
 & $\text{top-}25\%$ & $26.5\%$ & $47.3\%$ & $36.6\%$ & $19.26 \pm 0.35$ & $66.7\%$ \\
 & $\text{top-}10\%$ & $12.8\%$ & $24.2\%$ & $19.4\%$ & $22.37 \pm 0.39$ & $66.7\%$ \\
\cmidrule(lr){1-7}
\multirow{2}{*}{\textbf{$\text{H}_2\text{O}$ Eviction}} & $W=256$ & $48.0\%$ & $68.0\%$ & --- & $17.59 \pm 0.33$ & $22.2\%$ \\
 & $W=64$ & $17.5\%$ & $28.8\%$ & --- & $17.91 \pm 0.33$ & $0.0\%$ \\
\bottomrule
\end{tabular}%
}
\end{table}

\subsection{Long-Context RULER Multi-Needle and Multi-Hop Retrieval}
\label{subsec:needle_retrieval}
To evaluate whether sparse selection preserves recall under distractors and multi-hop reasoning, we benchmark four RULER-like tasks \citep{hsieh2024ruler}—Single-Needle ($K=1$), Multi-Key distractors ($K=4$), Multi-Value recall ($K=2$), and 2-Hop Variable Tracking—across $L \in \{2048, 4096\}$ (\autoref{tab:needle-experiments}), alongside single-needle extrapolation up to $L=8192$ (\autoref{tab:single_needle_extrap_appx}, \autoref{appx:ruler_suite}).

While $\text{H}_2\text{O}$ and Quest degrade severely under distractors and 2-hop chains, ETA dynamically lowers $\tau_t$ on retrieval heads ($\Delta\tau_{\text{ret}} = -0.48$ to $-2.19$) to expand active density by $+2.4\%\text{--}6.5\%$ (\autoref{tab:needle-experiments}). Across $N=432$ trials at $L=2048$, ETA ($b=4$) achieves $\mathbf{88.0\% \pm 2.9\%}$ RULER accuracy, \emph{surpassing dense attention} and matching it at $L=4096$. At hardware-aligned tile size $b=64$, ETA also outperforms dense attention at $L=2048$, beating Quest and $\text{H}_2\text{O}$ by $\mathbf{3.3\times\text{--}8.8\times}$.

\begin{table}[!htbp]
\centering
\renewcommand{\arraystretch}{0.78} 
\setlength{\tabcolsep}{3.5pt} 
\caption{\textbf{RULER multi-needle and 2-hop retrieval.} Full 16-arm suite with 95\% CIs in \autoref{appx:ruler_suite}.}
\label{tab:needle-experiments}
\resizebox{0.82\linewidth}{!}{%
\begin{tabular}{l c c c c c c c}
\toprule
\textbf{Method} & \textbf{\shortstack{GQA Union\\($2\text{K}$)}} & \textbf{\shortstack{Single\\($2\text{K}$)}} & \textbf{\shortstack{MultiKey\\($2\text{K}$)}} & \textbf{\shortstack{MultiVal\\($2\text{K}$)}} & \textbf{\shortstack{2-Hop VT\\($2\text{K}$)}} & \textbf{\shortstack{RULER Avg\\$L=2048$}} & \textbf{\shortstack{RULER Avg\\$L=4096$}} \\
\midrule
\textbf{Dense Baseline} & $100.0\%$ & $87.0$ & $76.9$ & \hlbase{\textbf{88.9}} & $87.0$ & $85.0 \pm 3.2$ & \hlbase{\textbf{55.6 $\pm$ 4.1}} \\
\cmidrule(lr){1-8}
\textbf{H2O ($W=256, \sim 38\%$)} & $72.3\%$ & $25.0$ & $12.0$ & \textit{0.9} & \textit{0.9} & $9.7 \pm 2.8$ & \textit{1.6 $\pm$ 1.2} \\
\textbf{Quest ($b=64$, top-$38\%$)} & $51.2\%$ & $55.6$ & $23.1$ & $12.0$ & $13.0$ & $25.9 \pm 4.1$ & $10.0 \pm 2.8$ \\
\textbf{ETA ($b=64, a=0.0$)} & $\mathbf{37.4\rightarrow39.8\%}$ & $\mathbf{87.0}$ & $\mathbf{76.9}$ & $\mathbf{78.7}$ & $\mathbf{67.6}$ & $\mathbf{77.6 \pm 3.8}$ & $\mathbf{46.5 \pm 4.2}$ \\
\textbf{ETA ($b=64, a=0.4$)} & $54.6\rightarrow57.9\%$ & $\mathbf{87.0}$ & \hlprob{\textbf{88.0}} & \hlprob{\textbf{88.9}} & $\mathbf{77.8}$ & \hlprob{\textbf{85.4 $\pm$ 3.1}} & $\mathbf{53.0 \pm 4.2}$ \\
\textbf{ETA ($b=4, a=0.4$)} & $66.3\rightarrow69.8\%$ & \hlprob{\textbf{88.0}} & $\mathbf{87.0}$ & \hlprob{\textbf{88.9}} & \hlprob{\textbf{88.0}} & \hlprob{\textbf{88.0 $\pm$ 2.9}} & \hlprob{\textbf{55.6 $\pm$ 4.1}} \\
\bottomrule
\end{tabular}%
}
\end{table}

\subsection{Comparison with Native Sparse Attention (NSA)}
\label{sec:nsa}
We benchmark ETA against Native Sparse Attention (NSA) \citep{yuan2025native} on an identical 126M backbone ($163.8$M tokens across $10{,}000$ steps on single-node H100 GPUs; see \autoref{appx:nsa_eta}). ETA optimizes faster and converges to a lower pretraining loss ($4.370\text{--}4.399$ vs.\ $4.508$, $\Delta\text{PPL} = -9.4$) at matched H100 training step throughput ($\mathbf{0.5315\text{ s/step}}$ vs.\ $\mathbf{0.5311\text{ s/step}}$; \autoref{fig:loss_convergence}). In downstream generation ($L=2048$), ETA achieves lower perplexity on FineWeb ($99.80$ vs.\ $103.55$) and WikiText-2 ($158.64$ vs.\ $161.65$) at $53\%$ sparsity (\autoref{tab:ppl_results}). Furthermore, ETA's fused single-pass decode kernel eliminates multi-kernel launch overhead, achieving up to $2.72\times$ lower step latency and $2.72\times$ higher throughput with $1.63\times$ speedup at $32\text{K}$ context (\autoref{tab:decoding_efficiency}).
\subsection{Elimination of Attention Sinks and Autonomous Layer Specialization}
\label{subsec:mechanistic_interpretation}
In standard transformers, excess probability mass is deposited onto initial anchor tokens, creating persistent \emph{attention sinks} \citep{xiao2024efficient} that post-hoc eviction methods (e.g., StreamingLLM, $\text{H}_2\text{O}$) must explicitly pin to avoid collapse. Under ETA, \textbf{localized attention sinks disappear entirely} (\autoref{fig:mechanisms}, Left): initial-token attention mass drops from $>50\%$ in dense models to $<2\%$, as unallocated probability mass routes into the diffuse background floor (\autoref{appx:sink_layer_analysis}). Simultaneously, ETA discovers distinct layer-wise sparsity allocations autonomously (\autoref{fig:mechanisms}, Right): Layer~0 acts as a broad aggregator ($\approx 96\%$ density), intermediate layers compress context aggressively (reaching $<8\%$ density at Layer~7), and deeper layers selectively restore capacity for final token synthesis.

\begin{figure}[!htbp]
    \centering
    \begin{subfigure}{0.45\linewidth}
        \centering
        \includegraphics[width=\linewidth]{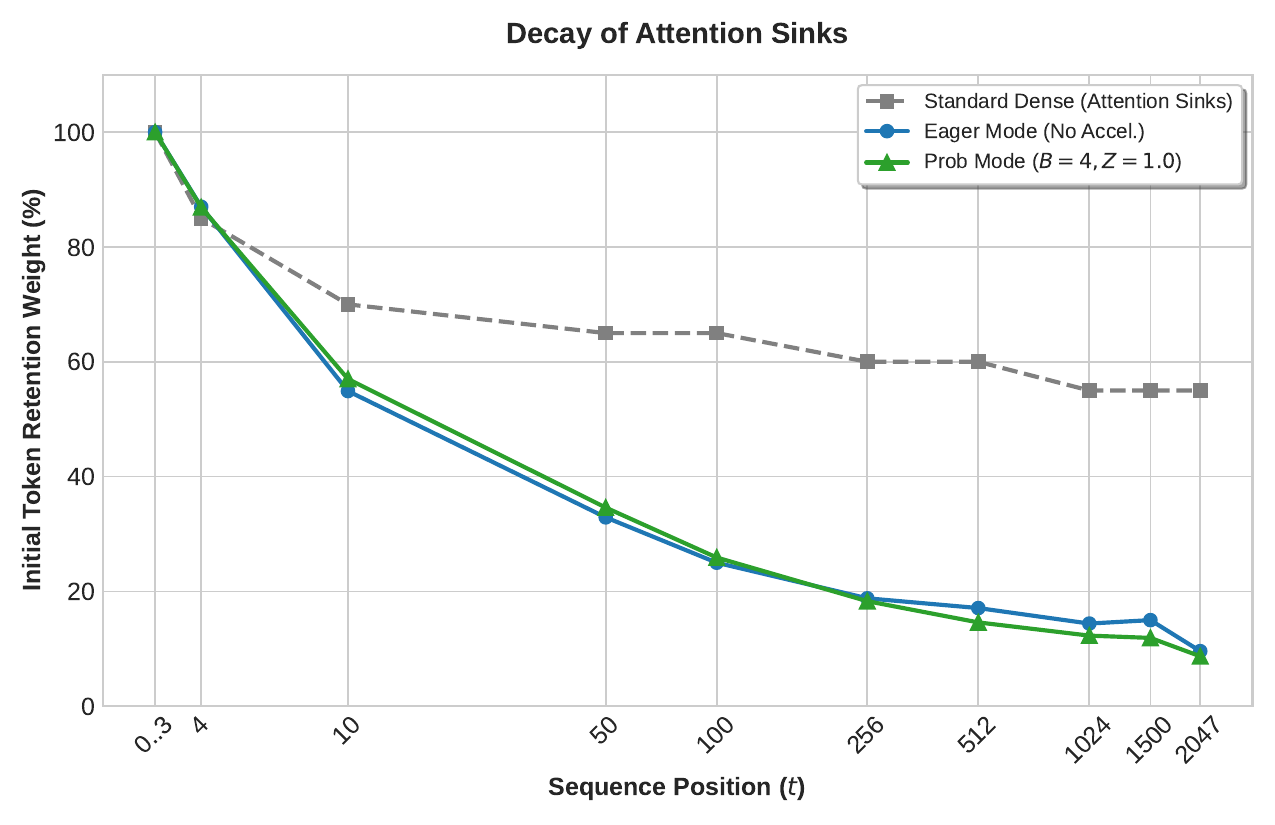}
        \label{fig:sinks}
    \end{subfigure}
    \hfill
    \begin{subfigure}{0.53\linewidth}
        \centering
        \includegraphics[width=\linewidth]{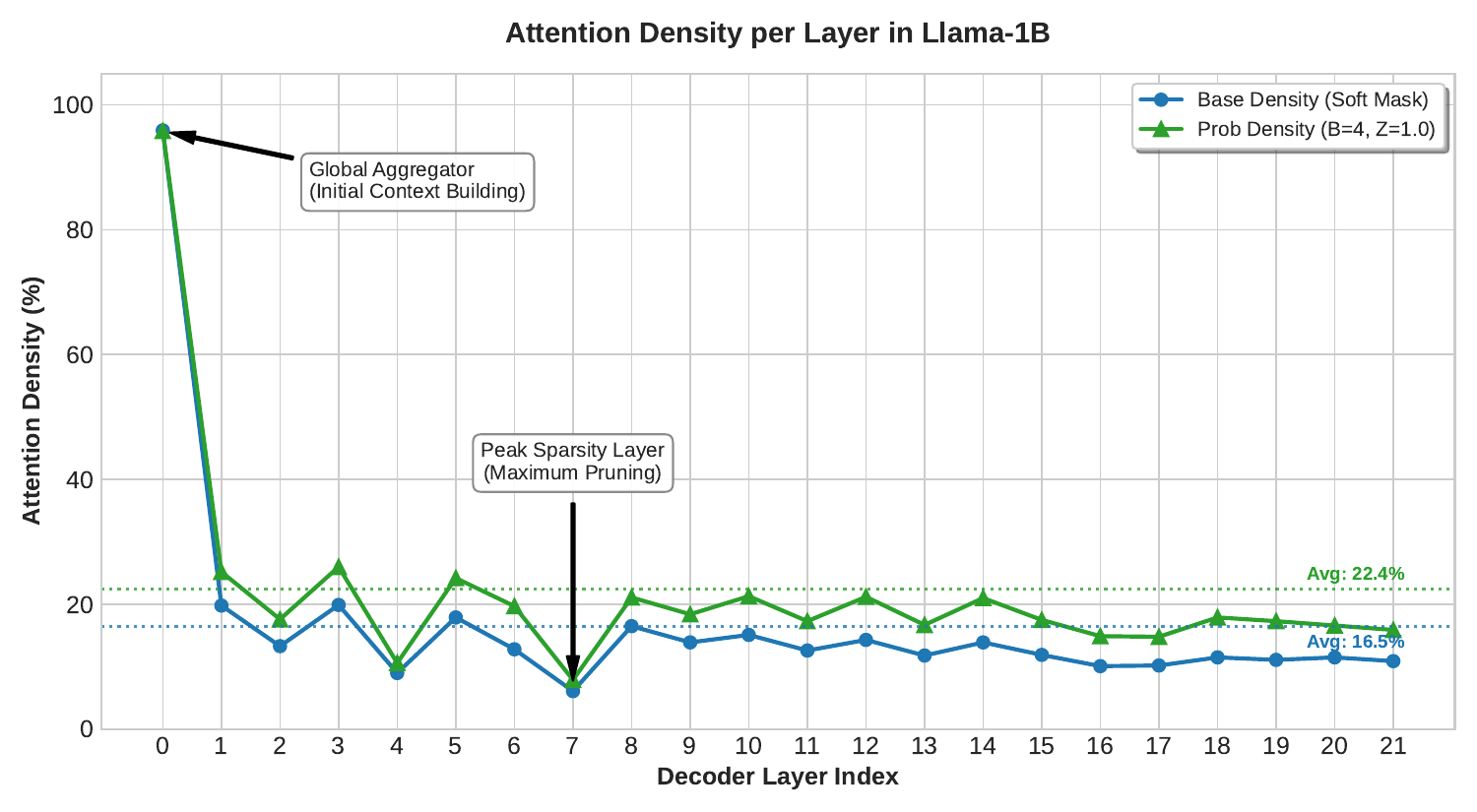}
        \label{fig:density}
    \end{subfigure}
    \caption{\textbf{Mechanistic Dynamics of Attention in ETA.} \textbf{(Left)} Localized attention sinks disappear under ETA's uniform background floor. \textbf{(Right)} Autonomous layer-wise active density across Transformer depth.}
    \label{fig:mechanisms}
\end{figure}

\subsection{Runtime Contextuality and Domain-Specific Offline Calibration}
\label{subsec:contextuality}
\label{subsec:calibration}

\paragraph{Why Contextuality Matters and What It Encodes.}
Pretraining an input-independent baseline with static per-head thresholds $\tau_h^\ell$ (without $W_\tau^\ell q_t + b_\tau^\ell$) converges to $2.13\times$ higher active density than ETA. Regressing realized thresholds $\tau_{t,h}^\ell$ against eight query and score-distribution statistics on held-out C4 explains at most $32.7\%$ of variance (\autoref{tab:tau_r2}), with $\approx 70\%$ occurring \emph{within} individual documents (\autoref{appx:contextuality_variance}). These results confirm that contextuality is a necessary part of our architecture.

\paragraph{Distilling Static Thresholds for Stationary Deployments.}
For stationary deployments, the predictor can be distilled post-hoc into frozen constants $c^{\ell h}$ via offline calibration (\autoref{alg:calibration} in \autoref{appx:calibration_derivation}), which accumulates a $1/t$-weighted score histogram over $16$ sequences and solves via binary search for each head's training density. On held-out C4 (\autoref{tab:frozen_tau}, \autoref{subsection:frozen_benchmarks}), calibrated static thresholds match dynamic perplexity in-distribution and curb RoPE density inflation at $2\times L_{\text{train}}$, cutting compute at no perplexity cost.

\subsection{Decode Speed and End-to-End Efficiency}
\label{sec:speed}

We evaluate ETA's hardware speedups by benchmarking single-step decode latency ($q_{\text{len}} = 1$) against FlashAttention-2 \citep{dao2023flashattention} on an NVIDIA H100 GPU (batch sizes $B \in \{32, 64, 128\}$, context lengths up to $512\text{K}$; \autoref{tab:standalone_attention}, with block/chunk size ablations in \autoref{tab:standalone_arms}). 

Although short sequences ($L=2048$) face a $\sim 50\%$ GQA union floor due to fixed boundary blocks ($\mathtt{lwb}=2$ plus the tail block), this overhead becomes negligible ($<0.2\%$) at $L=64\text{K}\text{--}512\text{K}$, enabling effective sparsity levels of $16\%\text{--}35.5\%$ union density (\autoref{subsec:calibration}). At those configurations, ETA achieves a $1.31\times\text{--}1.48\times$ kernel speedup while preserving full dense retrieval accuracy on needle-in-a-haystack tasks. At higher sparsity ($16.0\%$ union), kernel speedup reaches $2.36\times$. Across a full 22-layer model, these kernel gains deliver up to a $2.15\times$ end-to-end generation speedup with negligible indexing overhead during prefill (\autoref{appx:systems_accounting}).

\begin{table}[!htbp]
\centering
\caption{\textbf{Single decode-step attention latency, ETA versus FlashAttention-2.} NVIDIA H100 SXM $80$\,GB, \texttt{float16}, $H_Q=32$, $H_{KV}=4$ ($G=8$), $d_h=64$, $b=64$. Speedup ($\times$) is $t_{\text{FA2}} / t_{\text{ETA}}$.}
\label{tab:standalone_attention}
\renewcommand{\arraystretch}{0.56}
\setlength{\tabcolsep}{2.5pt}
\resizebox{0.82\linewidth}{!}{%
\begin{tabular}{ccc ccc ccc ccc}
\toprule
& & & \multicolumn{3}{c}{\textbf{$51.2\%$ union ($38\%$ head)}} & \multicolumn{3}{c}{\textbf{$35.5\%$ union ($25\%$ head)}} & \multicolumn{3}{c}{\textbf{$16.0\%$ union ($10\%$ head)}} \\
\cmidrule(lr){4-6} \cmidrule(lr){7-9} \cmidrule(lr){10-12}
$B$ & $L$ & KV & FA2 & ETA & $\times$ & FA2 & ETA & $\times$ & FA2 & ETA & $\times$ \\
\midrule
$32$ & $64$K & $2$\,GB & $0.710$ & $0.983$ & $0.72$ & $0.710$ & $0.767$ & $0.93$ & $0.711$ & $0.526$ & $\mathbf{1.35}$ \\
$32$ & $128$K & $4$\,GB & $1.398$ & $1.830$ & $0.76$ & $1.395$ & $1.426$ & $0.98$ & $1.396$ & $0.936$ & $\mathbf{1.49}$ \\
$32$ & $256$K & $8$\,GB & $2.772$ & $3.510$ & $0.79$ & $2.775$ & $2.732$ & $1.02$ & $2.774$ & $1.747$ & $\mathbf{1.59}$ \\
$32$ & $512$K & $16$\,GB & $5.529$ & $6.787$ & $0.81$ & $5.525$ & $5.275$ & $1.05$ & $5.533$ & $3.364$ & $\mathbf{1.64}$ \\
\midrule
$64$ & $64$K & $4$\,GB & $1.757$ & $1.833$ & $0.96$ & $1.885$ & $1.435$ & $\mathbf{1.31}$ & $1.897$ & $0.937$ & $\mathbf{2.02}$ \\
$64$ & $128$K & $8$\,GB & $3.872$ & $3.504$ & $1.11$ & $3.653$ & $2.746$ & $\mathbf{1.33}$ & $3.519$ & $1.750$ & $\mathbf{2.01}$ \\
$64$ & $256$K & $16$\,GB & $7.275$ & $6.813$ & $1.07$ & $7.790$ & $5.280$ & $\mathbf{1.48}$ & $7.722$ & $3.373$ & $\mathbf{2.29}$ \\
$64$ & $512$K & $32$\,GB & $15.085$ & $13.432$ & $\mathbf{1.12}$ & $15.009$ & $10.430$ & $\mathbf{1.44}$ & $14.790$ & $6.520$ & $\mathbf{2.27}$ \\
\midrule
$128$ & $64$K & $8$\,GB & $3.807$ & $3.516$ & $1.08$ & $3.518$ & $2.737$ & $\mathbf{1.29}$ & $3.939$ & $1.746$ & $\mathbf{2.26}$ \\
$128$ & $128$K & $16$\,GB & $7.624$ & $6.841$ & $1.11$ & $7.430$ & $5.368$ & $\mathbf{1.38}$ & $7.923$ & $3.351$ & $\mathbf{2.36}$ \\
$128$ & $256$K & $32$\,GB & $14.836$ & $13.481$ & $1.10$ & $15.302$ & $10.461$ & $\mathbf{1.46}$ & $15.039$ & $6.622$ & $\mathbf{2.27}$ \\
\bottomrule
\end{tabular}%
}
\end{table}

\section{Limitations and Conclusion}
\label{sec:conclusion}

We introduced Elastic Threshold Attention (ETA), an end-to-end trainable sparse attention architecture that contracts unselected logits toward $0$ via a SiLU gate rather than deleting them to $-\infty$. This creates a uniform attention floor that eliminates localized attention sinks and makes the model naturally invariant to inference-time distribution shifts. Paired with our fused Triton decode kernel, ETA matches or exceeds dense quality across language modeling, reasoning, and RULER retrieval benchmarks while delivering up to $2.36\times$ kernel decode speedups over FlashAttention-2.

Natural extensions of this work span both systems and modeling. On the systems side, integrating Hopper asynchronous warp-specialization (\texttt{WGMMA}/\texttt{TMA} \citep{dao2023flashattention}) and \texttt{FP8} metadata quantization can further reduce screening overhead at moderate sparsity levels ($\sim 50\%$ union density), while incorporating continual long-context pretraining ($32\text{K}\text{--}128\text{K}$) and exploring alternative zero-contracting gates offer promising directions for large-scale inference serving.

\clearpage

\subsection*{AI use statement}
In this work, we used generative AI tools for editing and tightening prose and as a coding aid when implementing and debugging the Triton kernels and the evaluation harness. We also used AI tools for assistance in deriving the attention gradients used in our backward kernel and in running the ANOVA decomposition test in \autoref{appx:contextuality_variance}.  Additionally, we used generative AI tools for creating and refining Tikz figures, as well as for suggesting experimental parameters.

We have not used generative AI tools for research ideation, retrieval and discovery, or generating synthetic datasets. We did not use AI tools in writing the core logic of the ETA model. We have reviewed all AI-assisted work. We manually inspected all lines of code written by generative AI and ran numerous experiments for correctness verification. All mathematical and conceptual claims, including potentially erroneous ones, are our own. We take responsibility for the final content of this work,
including text, claims or artifacts produced with the aid of generative AI.

\bibliography{main}

@inproceedings{Vaswani+2017,
 author = {Vaswani, Ashish and Shazeer, Noam and Parmar, Niki and Uszkoreit, Jakob and Jones, Llion and Gomez, Aidan N and Kaiser, \L ukasz and Polosukhin, Illia},
 booktitle = {Advances in Neural Information Processing Systems},
 pages = {},
 publisher = {Curran Associates, Inc.},
 title = {Attention is All you Need},
 url = {https://proceedings.neurips.cc/paper_files/paper/2017/file/3f5ee243547dee91fbd053c1c4a845aa-Paper.pdf},
 volume = {30},
 year = {2017}
}

@article{llama3modelcard,
title={Llama 3 Model Card},
author={AI@Meta},
year={2024},
url = {https://github.com/meta-llama/llama3/blob/main/MODEL_CARD.md}
}

@article{yerram2024hire,
  title={HiRE: High Recall Approximate Top-$ k $ Estimation for Efficient LLM Inference},
  author={Yerram, Varun and You, Chong and Bhojanapalli, Srinadh and Kumar, Sanjiv and Jain, Prateek and Netrapalli, Praneeth and others},
  journal={arXiv preprint arXiv:2402.09360},
  year={2024}
}

@inproceedings{tillet2019triton,
  title={Triton: an intermediate language and compiler for tiled neural network computations},
  author={Tillet, Philippe and Kung, Hsiang-Tsung and Cox, David},
  booktitle={Proceedings of the 3rd ACM SIGPLAN International Workshop on Machine Learning and Programming Languages},
  pages={10--19},
  year={2019}
}

@inproceedings{yuan2025native,
  title={Native sparse attention: Hardware-aligned and natively trainable sparse attention},
  author={Yuan, Jingyang and Gao, Huazuo and Dai, Damai and Luo, Junyu and Zhao, Liang and Zhang, Zhengyan and Xie, Zhenda and Wei, Yuxing and Wang, Lean and Xiao, Zhiping and others},
  booktitle={Proceedings of the 63rd Annual Meeting of the Association for Computational Linguistics (Volume 1: Long Papers)},
  pages={23078--23097},
  year={2025}
}

@article{child2019generating,
  title={Generating long sequences with sparse transformers},
  author={Child, Rewon and Gray, Scott and Radford, Alec and Sutskever, Ilya},
  journal={arXiv preprint arXiv:1904.10509},
  year={2019}
}

@article{zaheer2020big,
  title={Big bird: Transformers for longer sequences},
  author={Zaheer, Manzil and Guruganesh, Guru and Dubey, Kumar Avinava and Ainslie, Joshua and Alberti, Chris and Ontanon, Santiago and Pham, Philip and Ravula, Anirudh and Wang, Qifan and Yang, Li and others},
  journal={Advances in neural information processing systems},
  volume={33},
  pages={17283--17297},
  year={2020}
}

@article{choromanski2020rethinking,
  title={Rethinking attention with performers},
  author={Choromanski, Krzysztof and Likhosherstov, Valerii and Dohan, David and Song, Xingyou and Gane, Andreea and Sarlos, Tamas and Hawkins, Peter and Davis, Jared and Mohiuddin, Afroz and Kaiser, Lukasz and others},
  journal={arXiv preprint arXiv:2009.14794},
  year={2020}
}

@inproceedings{shen2021efficient,
  title={Efficient attention: Attention with linear complexities},
  author={Shen, Zhuoran and Zhang, Mingyuan and Zhao, Haiyu and Yi, Shuai and Li, Hongsheng},
  booktitle={Proceedings of the IEEE/CVF winter conference on applications of computer vision},
  pages={3531--3539},
  year={2021}
}

@article{dao2022flashattention,
  title={Flashattention: Fast and memory-efficient exact attention with io-awareness},
  author={Dao, Tri and Fu, Dan and Ermon, Stefano and Rudra, Atri and R{\'e}, Christopher},
  journal={Advances in neural information processing systems},
  volume={35},
  pages={16344--16359},
  year={2022}
}

@article{dao2023flashattention,
  title={Flashattention-2: Faster attention with better parallelism and work partitioning},
  author={Dao, Tri},
  journal={arXiv preprint arXiv:2307.08691},
  year={2023}
}

@article{zhang2023h2o,
  title={H2o: Heavy-hitter oracle for efficient generative inference of large language models},
  author={Zhang, Zhenyu and Sheng, Ying and Zhou, Tianyi and Chen, Tianlong and Zheng, Lianmin and Cai, Ruisi and Song, Zhao and Tian, Yuandong and R{\'e}, Christopher and Barrett, Clark and others},
  journal={Advances in Neural Information Processing Systems},
  volume={36},
  pages={34661--34710},
  year={2023}
}

@inproceedings{chen2025magicpig,
  title={Magicpig: Lsh sampling for efficient llm generation},
  author={Chen, Zhuoming and Sadhukhan, Ranajoy and Ye, Zihao and Zhou, Yang and Zhang, Jianyu and Nolte, Niklas and Tian, Yuandong and Douze, Matthijs and Bottou, Leon and Jia, Zhihao and others},
  booktitle={International Conference on Learning Representations},
  volume={2025},
  pages={44169--44190},
  year={2025}
}

@article{tang2024quest,
  title={Quest: Query-aware sparsity for efficient long-context llm inference},
  author={Tang, Jiaming and Zhao, Yilong and Zhu, Kan and Xiao, Guangxuan and Kasikci, Baris and Han, Song},
  journal={arXiv preprint arXiv:2406.10774},
  year={2024}
}

@inproceedings{xiao2024efficient,
  title={Efficient streaming language models with attention sinks},
  author={Xiao, Guangxuan and Tian, Yuandong and Chen, Beidi and Han, Song and Lewis, Mike},
  booktitle={International Conference on Learning Representations},
  volume={2024},
  pages={21875--21895},
  year={2024}
}

@article{roy2021efficient,
  title={Efficient content-based sparse attention with routing transformers},
  author={Roy, Aurko and Saffar, Mohammad and Vaswani, Ashish and Grangier, David},
  journal={Transactions of the Association for Computational Linguistics},
  volume={9},
  pages={53--68},
  year={2021}
}

@inproceedings{hu2022adaptive,
  title={Adaptive threshold selective self-attention for Chinese NER},
  author={Hu, Biao and Huang, Zhen and Hu, Minghao and Zhang, Ziwen and Dou, Yong},
  booktitle={Proceedings of the 29th International Conference on Computational Linguistics},
  pages={1823--1833},
  year={2022}
}

@misc{merity2016pointer,
      title={Pointer Sentinel Mixture Models},
      author={Stephen Merity and Caiming Xiong and James Bradbury and Richard Socher},
      year={2016},
      eprint={1609.07843},
      archivePrefix={arXiv},
      primaryClass={cs.CL}
}

@inproceedings{zellers2019hellaswag,
  title={Hellaswag: Can a machine really finish your sentence?},
  author={Zellers, Rowan and Holtzman, Ari and Bisk, Yonatan and Farhadi, Ali and Choi, Yejin},
  booktitle={Proceedings of the 57th annual meeting of the association for computational linguistics},
  pages={4791--4800},
  year={2019}
}

@article{penedo2024fineweb,
  title={The fineweb datasets: Decanting the web for the finest text data at scale},
  author={Penedo, Guilherme and Kydl{\'\i}{\v{c}}ek, Hynek and Lozhkov, Anton and Mitchell, Margaret and Raffel, Colin and Von Werra, Leandro and Wolf, Thomas and others},
  journal={Advances in Neural Information Processing Systems},
  volume={37},
  pages={30811--30849},
  year={2024}
}

@article{clark2018think,
  title={Think you have solved question answering? try arc, the ai2 reasoning challenge},
  author={Clark, Peter and Cowhey, Isaac and Etzioni, Oren and Khot, Tushar and Sabharwal, Ashish and Schoenick, Carissa and Tafjord, Oyvind},
  journal={arXiv preprint arXiv:1803.05457},
  year={2018}
}

@article{JMLR:v21:20-074,
  author  = {Colin Raffel and Noam Shazeer and Adam Roberts and Katherine Lee and Sharan Narang and Michael Matena and Yanqi Zhou and Wei Li and Peter J. Liu},
  title   = {Exploring the Limits of Transfer Learning with a Unified Text-to-Text Transformer},
  journal = {Journal of Machine Learning Research},
  year    = {2020},
  volume  = {21},
  number  = {140},
  pages   = {1--67},
  url     = {http://jmlr.org/papers/v21/20-074.html}
}

@misc{bloc972023ntk,
  author = {bloc97},
  title = {NTK-Aware Scaled RoPE allows LLaMA models to have extended (8k+) context size...},
  year = {2023},
  url = {https://www.reddit.com/r/LocalLLaMA/comments/14lz7j5/ntkaware_scaled_rope_allows_llama_models_to_have/}
}

@inproceedings{haris2025knn,
  title={knn attention demystified: A theoretical exploration for scalable transformers},
  author={Haris, Themistoklis},
  booktitle={International Conference on Learning Representations},
  volume={2025},
  pages={47576--47603},
  year={2025}
}

@article{desai2024hashattention,
  title={Hashattention: Semantic sparsity for faster inference},
  author={Desai, Aditya and Yang, Shuo and Cuadron, Alejandro and Zaharia, Matei and Gonzalez, Joseph E and Stoica, Ion},
  journal={arXiv preprint arXiv:2412.14468},
  year={2024}
}

@article{jiang2024minference,
  title={Minference 1.0: Accelerating pre-filling for long-context llms via dynamic sparse attention},
  author={Jiang, Huiqiang and Li, Yucheng and Zhang, Chengruidong and Wu, Qianhui and Luo, Xufang and Ahn, Surin and Han, Zhenhua and Abdi, Amir H and Li, Dongsheng and Lin, Chin-Yew and others},
  journal={Advances in Neural Information Processing Systems},
  volume={37},
  pages={52481--52515},
  year={2024}
}

@article{shi2025trainable,
  title={Trainable dynamic mask sparse attention},
  author={Shi, Jingze and Wu, Yifan and Peng, Yiran and Wu, Bingheng and Wang, Liangdong and Liu, Guang and Luo, Yuyu},
  journal={arXiv preprint arXiv:2508.02124},
  year={2025}
}

@inproceedings{gao2024seerattention,
  title={SeerAttention: Learning Intrinsic Sparse Attention in Your {LLM}s},
  author={Gao, Yizhao and Zeng, Zhichen and Du, Dayou and Cao, Shijie and Zhou, Peiyuan and Qi, Jiaxing and Lai, Junjie and So, Hayden Kwok-Hay and Cao, Ting and Yang, Fan and Yang, Mao},
  booktitle={Advances in Neural Information Processing Systems},
  year={2025}
}

@inproceedings{zhang2025spargeattention,
  title={SpargeAttention: Accurate and Training-free Sparse Attention Accelerating Any Model Inference},
  author={Zhang, Jintao and Xiang, Chendong and Huang, Haofeng and Wei, Jia and Xi, Haocheng and Zhu, Jun and Chen, Jianfei},
  booktitle={Proceedings of the 42nd International Conference on Machine Learning},
  series={Proceedings of Machine Learning Research},
  year={2025}
}

@article{hsieh2024ruler,
  title={RULER: What's the real context size of your long-context language models?},
  author={Hsieh, Cheng-Ping and Sun, Simeng and Kriman, Samuel and Acharya, Shantanu and Rekesh, Dima and Jia, Fei and Zhang, Yang and Ginsburg, Boris},
  journal={arXiv preprint arXiv:2404.06654},
  year={2024}
}

\appendix

\section{Transformer Preliminaries}
\label{app:transformer_math}

We focus on the Decoder-only Transformer architecture, viewed as a mapping from sequences $(x_1,...,x_n) \in \mathbb{R}^{n\times d}$ of token embeddings to sequences of latent embeddings $(h_1^{L},...,h_n^L) \in \mathbb{R}^{n\times d}$, with $L$ being the total number of layers and $d$ being the embedding dimension. Let $h_t^0 := x_t \in \mathbb{R}^d$. 

The processing flow of a single token $t$ through a specific layer $\ell$ maps $h_t^{\ell-1}$ to $h_t^{\ell}$. Let $H$ be the number of attention heads and $d_h = \frac{d}{H}$ be the head dimension. We compute the query $q_t$, key $k_t$, and value $v_t$ embeddings via linear projections $W_Q, W_K, W_V \in \mathbb{R}^{d_h \times d}$ on the normalized input $\hat{h} = \text{RMSNorm}(h_t^{\ell-1})$. Rotary positional embeddings (RoPE) are applied as follows:
$$q_t = \mathcal{R}_t (W_Q \cdot \hat{h}), \quad k_t = \mathcal{R}_t (W_K \cdot \hat{h}), \quad v_t = W_V \cdot \hat{h}$$
where $\mathcal{R}_t \in \mathbb{R}^{d_h \times d_h}$ is a fixed unitary rotation matrix depending on position $t$.

The model computes the attention output by having query $q_t$ attend to keys $\{k_s\}_{s \leq t}$ and values $\{v_s\}_{s \leq t}$. The attention weights and output are computed via scaled dot-product:
$$A_{ts} = \text{softmax}\left(\frac{\langle q_t, k_s \rangle}{\sqrt{d_h}} + M_{ts}\right), \qquad o_t = \sum_{s \leq t} A_{ts} v_s$$
where $M_{ts}$ is a causal mask ($0$ if $s \leq t$, $-\infty$ otherwise). 

The layer output $h_t^{\ell}$ is formed via residual connections and a Multi-Layer Perceptron (MLP), with output projection $W_O \in \mathbb{R}^{d \times d_h}$:
$$z_t = h_t^{\ell-1} + W_O \cdot o_t$$
$$h_t^{\ell} = z_t + \text{MLP}(\text{RMSNorm}(z_t))$$

During inference, Transformer-based LLMs operate in two distinct stages:
\begin{enumerate}
    \item \textbf{Pre-fill}: A prompt of size $L$ is processed in a single forward pass. This is \textit{compute-heavy}, requiring $\Theta(L^2 d)$ FLOPS. 
    \item \textbf{Decoding}: Subsequent tokens are autoregressively predicted. The keys and values of previous tokens are stored in a \textbf{KV Cache} to avoid recomputation, making decoding a strictly \textit{memory-bound} operation.
\end{enumerate}

\section{Training Details and Hyperparameters}
\label{appx:training}
We train our 1.45B-parameter ETA model on a slice of the FineWeb dataset (\texttt{CC-MAIN-2018-26}) \citep{penedo2024fineweb} at sequence length $L = 2048$ for $10{,}000$ steps across $8$ NVIDIA H100 GPUs (effective global batch size $2048$ sequences, ${\approx}42$B tokens total, exceeding Chinchilla-optimal compute by $1.4\times$). The backbone comprises $22$ Transformer decoder layers with $d_{\text{model}} = 2048$, intermediate SwiGLU dimension $8192$, $H_Q = 32$ query heads ($d_k = 64$), and $H_{KV} = 4$ GQA key-value heads. Optimization uses AdamW ($\beta_1 = 0.9, \beta_2 = 0.95, \epsilon = 10^{-8}$, weight decay $0.1$, gradient clipping $1.0$) with cosine learning rate decay from $4 \times 10^{-4}$ to $4 \times 10^{-5}$ after $300$ warmup steps.

Per-layer linear threshold predictors ($W_\tau^\ell \in \mathbb{R}^{2048 \times 32}$, zero-initialized weights with start-dense bias $b_\tau^\ell = -8.0$) act on concatenated post-RoPE query heads. The inverse temperature $\beta$ is linearly annealed from $1.0$ to $5.0$ over steps $0\text{--}7{,}000$ and held constant thereafter. To prevent premature pruning before stable representations form, sparsity regularization is zero for the first $750$ steps, ramps linearly over $1{,}690$ steps to a peak of $\lambda_{\text{sparse}} = 5 \times 10^{-4}$, and is then reduced to $2.5 \times 10^{-5}$ ($5\%$ of peak) for the remainder of training.

\subsection{Optimizer Dynamics and Escape from Start-Dense Initialization}
\label{appx:training_escape}
With start-dense initialization ($b_\tau^\ell = -8.0, W_\tau^\ell = 0$) and initial attention logits centered near zero, terminal sharpness ($\beta = 5$) would evaluate to $\sigma(40)$, stalling gradient descent ($\sigma'(40) \approx 4.3 \times 10^{-18}$). Two mechanisms can explain the observed threshold migration without gradient starvation: (1)~\emph{temperature annealing} starts at $\beta = 1.0$, keeping the initial sigmoid derivative fourteen orders of magnitude higher ($\sigma'(8) = 3.35 \times 10^{-4}$) until thresholds enter the score bulk; (2)~\emph{Adam's scale normalization} ($\hat{m} / (\sqrt{\hat{v}} + \epsilon) \approx \pm 1$) converts the monotone directional pressure of $\mathcal{L}_{\text{sparsity}}$ into steady step-size drift independent of small gradient magnitude.

\section{Additive vs Multiplicative Masking}
\label{appx:additive_masking}
In this section we examine additive masking as a substitute for multiplicative masking in ETA. We first compare the pretraining trajectories of ETA under additive (\textit{Strict Mode}, $\tilde{S}_{ts}^h = S_{ts}^h + 100(m_{ts}^h - 1)$) and multiplicative (\textit{Normal Mode}, $\tilde{S}_{ts}^h = S_{ts}^h \cdot m_{ts}^h$) formulations under identical regularization schedules ($\lambda=0.05$, ramping over steps $750$--$2050$) on 126M-parameter models.

As illustrated in \autoref{fig:strict_vs_normal_dynamics}(a), both formulations exhibit nearly identical cross-entropy loss trajectories on FineWeb, with multiplicative masking gaining a slight edge. However, \autoref{fig:strict_vs_normal_dynamics}(b) reveals distinct sparsity steady states. Under multiplicative masking, sub-threshold logits scale towards $0$, leaving a uniform baseline softmax floor that allows the threshold predictor to push sparsity aggressively to $83.14\%$ ($16.86\%$ active density) without language modeling loss penalties. Conversely, Strict Mode imposes hard exponential suppression ($e^{S_{ts}^h-100}\approx 0$), eliminating masked key-value states entirely from the attention context. 

\begin{figure}[h]
\centering
\includegraphics[width=\linewidth]{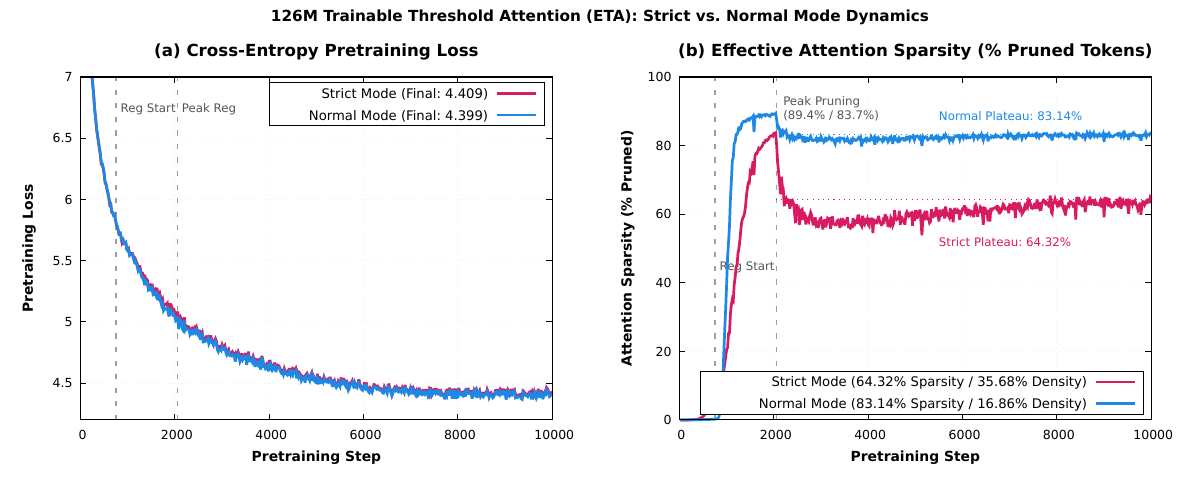}
\caption{\textbf{Pretraining dynamics and sparsity evolution across ETA masking formulations (126M parameters, 10k steps).} }
\label{fig:strict_vs_normal_dynamics}
\end{figure}

\paragraph{Inference Performance and Masking Mode Comparison.}
As reported in \autoref{tab:strict_vs_normal_results}, additive masking proves inferior to multiplicative masking across all downstream evaluation benchmarks when deployed with block-sparse inference kernels. Specifically, additive masking incurs higher perplexity across FineWeb ($116.38$ vs.\ $99.80$), C4 ($117.93$ vs.\ $100.73$), and WikiText-2 ($177.16$ vs.\ $158.64$) while retaining significantly lower compute sparsity ($\sim 35\text{--}38\%$ vs.\ $65.91\%$ pruned).

\begin{table}[h]
\centering
\small
\caption{\textbf{Downstream perplexity and sparsity comparison between Normal and Strict ETA (126M parameters, 10k pretraining steps).} Evaluated using 1\% prefill and 99\% autoregressive decode under the block-sparse inference kernel ($b=4$, threshold adjustment $a=0.4$, $z=2.0$).}
\label{tab:strict_vs_normal_results}
\begin{tabular}{lcc}
\toprule
\textbf{Metric / Benchmark} & \textbf{Normal ETA} & \textbf{Strict ETA} \\
\midrule
Pretraining Loss (Step 10k) & \textbf{4.3986} & 4.4085 \\
\midrule
\multicolumn{3}{l}{\textit{FineWeb (200 batches, 1.62M tokens)}} \\
Perplexity (PPL) & \textbf{99.80 $\pm$ 22.59} & 116.38 $\pm$ 24.38 \\
Compute Sparsity (FLOPs Pruned) & \textbf{65.91\%} (34.09\% density) & 37.26\% (62.74\% density) \\
GQA Union Sparsity (I/O Pruned) & \textbf{49.23\%} (50.77\% density) & 23.59\% (76.41\% density) \\
\midrule
\multicolumn{3}{l}{\textit{C4 (200 batches, 1.62M tokens)}} \\
Perplexity (PPL) & \textbf{100.73 $\pm$ 17.56} & 117.93 $\pm$ 21.20 \\
Compute Sparsity (FLOPs Pruned) & \textbf{65.91\%} (34.09\% density) & 38.18\% (61.82\% density) \\
GQA Union Sparsity (I/O Pruned) & \textbf{49.23\%} (50.77\% density) & 24.54\% (75.46\% density) \\
\midrule
\multicolumn{3}{l}{\textit{Wikitext-2 (Full test set, 2.94M tokens)}} \\
Perplexity (PPL) & \textbf{158.64 $\pm$ 22.15} & 177.16 $\pm$ 23.98 \\
Compute Sparsity (FLOPs Pruned) & \textbf{65.91\%} (34.09\% density) & 34.92\% (65.08\% density) \\
GQA Union Sparsity (I/O Pruned) & \textbf{49.23\%} (50.77\% density) & 21.23\% (78.77\% density) \\
\bottomrule
\end{tabular}
\end{table}

\begin{table}[h]
\centering
\caption{\textbf{Mechanistic and downstream comparison of masking formulations (126M parameters, FineWeb).} Tail statistics are measured over the sub-threshold key set ($n=24$ sequences, $L=1024$). Downstream metrics reflect block-sparse decode ($b=4, z=2.0, a=0.4$). Multiplicative gating flattens sub-threshold keys into a uniform background, achieving superior perplexity and density over additive deletion.}
\label{tab:masking_formulation_comparison}
\resizebox{\linewidth}{!}{%
\begin{tabular}{lccc}
\toprule
\textbf{Sub-threshold tail metric} & \textbf{Pre-gate (raw)} & \textbf{Normal mode (multiplicative)} & \textbf{Strict mode (additive)} \\
\midrule
Sub-threshold probability mass & $42.4\%$ & $\mathbf{43.8\%}$ (preserved) & $\mathbf{1.5 \times 10^{-25}}$ (annihilated) \\
Normalized tail entropy ($H / \log n$) & $0.790$ & $\mathbf{0.916}$ (near-uniform) & $\mathbf{0.081}$ (collapsed) \\
Tail max-to-mean ratio & $15.6$ & $\mathbf{2.03}$ (flattened) & $\mathbf{249.7}$ (extreme peakiness) \\
\midrule
\multicolumn{4}{l}{\textit{Block-sparse kernel decode, FineWeb}} \\
Perplexity ($\downarrow$) & --- & $\mathbf{99.80}$ & $116.38$ \quad ($+16.6\%$) \\
Compute density ($\downarrow$) & --- & $\mathbf{34.09\%}$ & $62.74\%$ \quad ($1.84\times$) \\
\bottomrule
\end{tabular}%
}
\end{table}

\section{Analysis of Runtime Contextuality and Offline Threshold Calibration}
\label{appx:calibration}

This appendix provides the complete empirical and mathematical analysis supporting \autoref{subsec:contextuality}: (1) testing whether query conditioning is necessary during pretraining and probing what the learned thresholds encode (\autoref{appx:contextuality_variance}), (2) deriving the $1/t$-weighted score histogram formulation for exact density-matched calibration (\autoref{alg:calibration}), and (3) evaluating distilled static thresholds in-distribution and under length extrapolation (\autoref{subsection:frozen_benchmarks}).

\subsection{Empirical Analysis of Runtime Contextuality}
\label{appx:contextuality_variance}

\paragraph{Necessity of Query Conditioning During Pretraining.}
To test whether input-dependent thresholds ($\tau_{t,h}^\ell = w_{\tau,h}^{\ell \top} q_t + b_{\tau,h}^\ell$) are required to learn sparse attention representations from scratch, we trained an ablated baseline where each head's threshold is parameterized as a single learnable, input-independent scalar $\tau_h^\ell$. Under identical sparsity regularization ($\lambda = 0.05$), the input-independent baseline fails to sparsify effectively, converging to $2.13\times$ higher active density than contextual ETA. Because a static threshold applies uniformly across all sequence positions, any threshold high enough to prune routine tokens inevitably starves complex synthesis tokens of required context; consequently, gradient descent pushes the shared scalar downward to protect high-loss predictions.

\paragraph{Probing What $\tau_{t,h}^\ell$ Encodes.}
Given that query conditioning is essential, we next ask whether the learned linear predictor $W_\tau^\ell$ merely computes a proxy for simple surface statistics (such as query norm or sequence position), or whether it captures higher-order semantic context. We evaluate our 126M ETA checkpoint on held-out C4 sequences and regress the realized per-token thresholds $\tau_{t,h}^\ell$ (separately for each layer, head, and position) against eight candidate summary statistics: the query Euclidean norm $\|q_t\|$, the log-sum-exp of the pre-softmax attention row ($\text{lse}$), the row score maximum ($s_{\max}$), standard deviation ($s_{\text{std}}$), mean ($s_{\text{mean}}$), and Shannon entropy ($H$), the logarithmic sequence position ($\log t$), and the input hidden-state norm ($\|x_t\|$).

\begin{table}[h]
\centering
\caption{\textbf{Proportion of variance ($R^2$) in learned thresholds $\tau_{t,h}^\ell$ explained by surface query and score-distribution statistics} on held-out C4 across all layers and heads (126M model). Even when combined in a joint multiple regression, all eight statistics account for less than one-third of threshold variance.}
\label{tab:tau_r2}
\resizebox{0.85\linewidth}{!}{%
\begin{tabular}{lcccccccc|c}
\toprule
\textbf{Statistic} & $\|q_t\|$ & $\text{lse}$ & $s_{\max}$ & $s_{\text{std}}$ & $H$ & $\log t$ & $s_{\text{mean}}$ & $\|x_t\|$ & \textbf{Joint (All 8)} \\
\midrule
$R^2$ & $0.187$ & $0.176$ & $0.168$ & $0.156$ & $0.091$ & $0.071$ & $0.059$ & $0.021$ & $\mathbf{0.327}$ \\
\bottomrule
\end{tabular}%
}
\end{table}

As shown in \autoref{tab:tau_r2}, the strongest individual feature ($\|q_t\|$) accounts for only $18.7\%$ of the variance in $\tau_{t,h}^\ell$, and a joint linear regression combining all eight statistics explains only $32.7\%$. More than two-thirds ($67.3\%$) of the variance in the learned threshold is irreducible to surface summary statistics, confirming that $W_\tau^\ell$ exploits fine-grained directional information in $q_t$.

\paragraph{Within- vs.\ Between-Document ANOVA Decomposition.}
To test whether ETA's threshold variations merely reflect global document-level domain shifts (e.g., allocating higher density to technical passages and lower density to conversational text) rather than local token-by-token difficulty, we decompose the variance of realized active densities $d_{i,t} = \frac{1}{t}\sum_{s=1}^t \mathbb{I}(S_{i,t,s} > \tau_{i,t})$ across $N$ held-out C4 documents of length $T$. Because active density naturally decays with sequence position $t$ (as a fixed number of relevant tokens represents a smaller fraction of a growing causal window), we first remove this mechanical positional trend by subtracting the cross-document positional mean $\bar{d}_{\cdot, t} = \frac{1}{N}\sum_{i=1}^N d_{i,t}$, yielding position-detrended residual densities $\tilde{d}_{i,t} = d_{i,t} - \bar{d}_{\cdot, t}$. Applying a one-way analysis of variance (ANOVA) partitions the total sum of squares across all $N \times T$ tokens into orthogonal between-document and within-document components:
$$\underbrace{\sum_{i=1}^N \sum_{t=1}^T (\tilde{d}_{i,t} - \bar{\tilde{d}})^2}_{\text{SS}_{\text{total}}} \;=\; \underbrace{T \sum_{i=1}^N (\bar{\tilde{d}}_{i,\cdot} - \bar{\tilde{d}})^2}_{\text{SS}_{\text{between}} \; (\approx 30\%)} \;+\; \underbrace{\sum_{i=1}^N \sum_{t=1}^T (\tilde{d}_{i,t} - \bar{\tilde{d}}_{i,\cdot})^2}_{\text{SS}_{\text{within}} \; (\approx 70\%)},$$
where $\bar{\tilde{d}}_{i,\cdot} = \frac{1}{T}\sum_{t=1}^T \tilde{d}_{i,t}$ is the mean detrended density of document $i$. If ETA functioned merely as a coarse document-level register switch, document means $\bar{\tilde{d}}_{i,\cdot}$ would dominate and the between-document share would approach $100\%$. Instead, $\approx 70\%$ of the variance occurs \emph{within} individual documents ($\text{SS}_{\text{within}}/\text{SS}_{\text{total}} \approx 0.70$), demonstrating that ETA continuously expands and contracts its attention window token by token—pruning aggressively on predictable syntax tokens while widening context on semantically demanding retrieval steps.

\subsection{Offline Threshold Calibration via \texorpdfstring{$1/t$}{1/t}-Weighted Histograms}
\label{appx:calibration_derivation}
At decoding step $t$, an attention head's active density is the fraction of its $t$ available causal keys retained by the soft gate, $\frac{1}{t}\sum_{s=1}^t m_{ts}^h$. Averaged over a small, disjoint validation/calibration split $\mathcal{D}$ ($|\mathcal{D}|=16$ sequences of length $T$), the dynamic predictor $\tau_{t,h}^\ell$ realizes a target head density 
$$
\bar{d}^{\,\ell h} = \frac{1}{|\mathcal{D}| T}\sum_{x \in \mathcal{D}} \sum_{t=1}^T \frac{1}{t}\sum_{s=1}^t m_{ts}^h(x).
$$
For stationary deployments, our goal is to replace the dynamic predictor with a single static constant $c^{\ell h}$ per head that yields this exact same empirical density on $\mathcal{D}$: $g(c^{\ell h}) = \bar{d}^{\,\ell h}$, where 
$$g(c)=\frac{1}{|\mathcal{D}| T}\sum_{x \in \mathcal{D}} \sum_{t=1}^T \frac{1}{t}\sum_{s=1}^t \sigma(\beta\cdot(S_{ts}^{\ell h}(x)-c))$$ 
denotes the head's expected density on $\mathcal{D}$ under threshold $c$. Because $g(c)$ decreases strictly and monotonically from $1$ to $0$ as $c$ increases, we can find the unique root $c^{\ell h}$ via continuous binary search.

\paragraph{Score Histograms and $1/t$ Row-Weighting.}
Evaluating $g(c)$ directly over millions of raw attention scores $S_{ts}^h$ across $\mathcal{D}$ at every binary search iteration would require repeatedly scanning the calibration split. Instead, during a single pass over $\mathcal{D}$, we bin raw scores $S_{ts}^{\ell h}$ into a 1D histogram $H^{\ell h}$ ($B$ bins with centers $s_b$). When building this histogram, we increment bin counts by $+\frac{1}{t}$ rather than $+1$ to account for the causal weighting. Once $H^{\ell h}$ is built, evaluating any candidate threshold during binary search reduces to a fast $B$-element dot product:
$$g(c) = \sum_{b=1}^{B} H^{\ell h}[b] \,\sigma\!\big(\beta(s_b - c)\big).$$
Assuming $B$ is large enough, binary search approximates the optimal value of $c$ within machine precision (residual $<2\times 10^{-16}$). See \autoref{alg:calibration} for more details.

\begin{algorithm}[h]
\caption{Density-Matched Offline Threshold Calibration}
\label{alg:calibration}
\begin{algorithmic}[1]
\Require Trained ETA model $M$ with dynamic predictor $\tau(\cdot)$; calibration split $\mathcal{D}$; bin count $B$; inverse temperature $\beta$
\State $H^{\ell h} \gets \mathbf{0} \in \mathbb{R}^{B}$, \quad $\bar{d}^{\,\ell h} \gets 0$ \quad for all $(\ell, h)$
\For{each calibration sequence $x \in \mathcal{D}$}
    \State Forward pass $M(x)$ with dynamic thresholds, recording raw scores $S^{\ell h}$ and gates $m^{\ell h}$
    \For{each layer $\ell$ and head $h$}
        \State $\bar{d}^{\,\ell h} \mathrel{+}= \tfrac{1}{|\mathcal{D}|} \cdot \operatorname{rowmean}\big(m^{\ell h}\big)$ \Comment{Accumulate target per-head density}
        \For{each causal pair $(t,s)$ with $s \le t$}
            \State $H^{\ell h}\big[\operatorname{bin}(S^{\ell h}_{ts})\big] \mathrel{+}= \frac{1}{t}$ \Comment{Accumulate $1/t$ causal row weight}
        \EndFor
    \EndFor
\EndFor
\For{each layer $\ell$ and head $h$}
    \State $H^{\ell h} \gets H^{\ell h} / \textstyle\sum_{b=1}^{B} H^{\ell h}[b]$ \Comment{Normalize weighted score histogram}
    \State Define $g(c) = \sum_{b=1}^{B} H^{\ell h}[b] \,\sigma\!\big(\beta(s_b - c)\big)$ \Comment{Monotonically decreasing in $c$}
    \State $c^{\ell h} \gets \textsc{Bisect}\big(g(c) = \bar{d}^{\,\ell h}\big)$ \Comment{Exact root-finding via bisection}
\EndFor
\State \Return Static threshold table $\{c^{\ell h}\}$
\end{algorithmic}
\end{algorithm}

\subsection{Distillation Evaluation and Length-Extrapolation Stability}
\label{subsection:frozen_benchmarks}

We evaluate calibrated static thresholds $\{c^{\ell h}\}$ against dynamic per-token thresholds $\tau_{t,h}^\ell$ on held-out C4 sequences executed directly through our fused block-sparse Triton kernel ($b=4$) on the 126M model ($L_{\text{train}}=1024$).

\begin{table}[h]
\centering
\caption{\textbf{Dynamic per-token thresholds ($\tau_{t,h}^\ell$) versus calibrated static per-(layer, head) thresholds ($c^{\ell h}$)}, evaluated on held-out C4 via the block-sparse decode kernel ($b=4$, 126M model with $L_{\text{train}}=1024$). Constants are calibrated on a disjoint $16$-sequence split. In-distribution evaluations are paired at the batch level ($\rho = 0.9996$, $t = 8.64$).}
\label{tab:frozen_tau}
\resizebox{0.82\linewidth}{!}{%
\begin{tabular}{llcc}
\toprule
\textbf{Evaluation Regime} & \textbf{Metric} & \textbf{Dynamic $\tau_{t,h}^\ell$} & \textbf{Calibrated Static $c^{\ell h}$} \\
\midrule
In-Distribution ($L=1024$) & Perplexity ($\downarrow$) & $91.96$ & $\mathbf{91.60}$ \\
 & Active Head Density ($\downarrow$) & $24.14\%$ & $\mathbf{22.40\%}$ \\
\midrule
Length Extrapolation ($L=2048$, $z=3.0$) & Perplexity ($\downarrow$) & $100.52$ & $\mathbf{100.48}$ \\
 & Active Head Density ($\downarrow$) & $44.77\%$ & $\mathbf{32.83\%}$ \\
\bottomrule
\end{tabular}%
}
\end{table}

Both in-distribution and length-extrapolated experiments show that our calibrated static thresholds achieve slightly better perplexity and higher sparsity than the dynamic thresholds when evaluated on the target domain used for calibration. 

\section{Hardware-Aligned Training Kernel Implementation}
\label{app:triton_kernel}

To train our model efficiently without materializing $T \times T$ matrices, we implement a custom fused training attention kernel in Triton. Our implementation extends the block-based online softmax algorithm of FlashAttention-2 by tightly integrating the differentiable thresholding mechanism and the gradient tracking for the sparsity regularizer.

\subsection{Forward Pass Formulation}
In the forward pass, the kernel divides the query sequence into blocks of size $B_M$ and the key/value sequences into blocks of size $B_N$. The outer loop iterates over query blocks, loading queries $Q_{block}$ and their corresponding predicted thresholds $\tau_{block}$ from HBM to SRAM. The inner loop iterates causally over key/value blocks ($K_{block}, V_{block}$). 

For a given query $q_i$ and key $k_j$ within these blocks, the scaled attention score is computed as $S_{ij} = \text{scale} \cdot \langle q_i, k_j \rangle$. Concurrently, the kernel computes the soft inclusion gate $m_{ij}$ (denoted $\sigma_{ij}$ below) via the sigmoid relaxation:
$$\sigma_{ij} = \sigma(\beta(S_{ij} - \tau_i))$$
In particular, the sum of these probabilities is accumulated directly in SRAM to compute the sparsity regularizer $\mathcal{L}_{\text{sparsity}}$ without allocating an explicit full sequence tensor:
$$R_i = \sum_{j \leq i} \sigma_{ij}$$

The kernel then applies the selected dynamic mask operation (additive or multiplicative) to yield the masked scores $\tilde{S}_{ij}$. To prevent numerical overflow, we track the running maximum $m_i$ and the running exponential sum $l_i$ via a standard online softmax update\footnote{A familiar formulation from the original FlashAttention \citep{dao2022flashattention}.}:
\begin{align*}
    m_i^{\text{new}} &= \max(m_i, \max_j \tilde{S}_{ij}) \\
    l_i^{\text{new}} &= l_i e^{m_i - m_i^{\text{new}}} + \sum_j e^{\tilde{S}_{ij} - m_i^{\text{new}}}
\end{align*}
The attention output is accumulated as a weighted sum of $V_{block}$ and normalized by $l_i^{\text{new}}$ before being written back to HBM. We additionally store the final log-sum-exp vectors $L_i = m_i + \ln(l_i)$ and the row-wise sums $R_i$ to facilitate the backward pass. \autoref{alg:fwd_kernel} gives the full forward pass.

\begin{algorithm}[h]
\caption{Hardware-Aligned Forward Kernel}
\label{alg:fwd_kernel}
\begin{algorithmic}[1]
\Require Queries $Q$, Keys $K$, Values $V \in \mathbb{R}^{T \times d}$
\Require Thresholds $\tau \in \mathbb{R}^T$, Scale $s$, Temperature $\beta$
\Ensure Output $O \in \mathbb{R}^{T \times d}$, Regularizer Sums $\Sigma_{\text{sums}} \in \mathbb{R}^T$, Log-Sum-Exp $L \in \mathbb{R}^T$

\For{each query block $i$ from $1$ to $T / B_M$}
    \State Load $Q_i, \tau_i$ from HBM to SRAM
    \State Initialize $m_i \gets -\infty$, $l_i \gets 0$, $acc_i \gets 0$, $\Sigma_{\text{sums}, i} \gets 0$
    
    \For{each key/value block $j$ from $1$ to $i$} \Comment{Iterate up to causal limit}
        \State Load $K_j, V_j$ from HBM to SRAM
        \State $S_{ij} \gets s \cdot Q_i K_j^T$ \Comment{Compute pre-softmax scores}
        \State $\sigma_{ij} \gets \text{sigmoid}(\beta \cdot (S_{ij} - \tau_i))$ \Comment{Compute inclusion gates}
        \State $\Sigma_{\text{sums}, i} \gets \Sigma_{\text{sums}, i} + \text{row\_sum}(\sigma_{ij})$ \Comment{Accumulate regularizer}
        
        \If{additive}
            \State $\tilde{S}_{ij} \gets S_{ij} + C\cdot \sigma_{ij} -C$ \Comment{Additive masking}
        \Else
            \State $\tilde{S}_{ij} \gets S_{ij} \odot \sigma_{ij}$ \Comment{Multiplicative masking}
        \EndIf
        
        \State Apply causal mask to $\tilde{S}_{ij}$; for diagonal blocks ($j = i$), restore token diagonal $\mathrm{diag}(\tilde{S}_{ii}) \gets \mathrm{diag}(S_{ii})$ \Comment{Bypass threshold on diagonal}
        
        \State $m_{\text{new}} \gets \max(m_i, \max(\tilde{S}_{ij}))$ \Comment{Online softmax update}
        \State $\alpha \gets \exp(m_i - m_{\text{new}})$
        \State $P_{ij} \gets \exp(\tilde{S}_{ij} - m_{\text{new}})$
        
        \State $l_{\text{new}} \gets l_i \cdot \alpha + \text{row\_sum}(P_{ij})$
        \State $acc_i \gets acc_i \cdot \alpha + P_{ij} V_j$
        \State $m_i \gets m_{\text{new}}$, \quad $l_i \gets l_{\text{new}}$
    \EndFor
    
    \State $O_i \gets acc_i / l_i$ \Comment{Normalize output}
    \State $L_i \gets m_i + \ln(l_i)$ \Comment{Store log-sum-exp for backward pass}
    \State Store $O_i, \Sigma_{\text{sums}, i}, L_i$ from SRAM to HBM
\EndFor
\end{algorithmic}
\end{algorithm}

\subsection{Backward Pass and Custom Gradients}
The backward pass is significantly more complex than standard attention due to the chain rule dependencies flowing through the threshold $\tau_i$ and the regularization signal $\nabla_{\sigma} \mathcal{L}_{\text{sparsity}}$. 

The backward kernel loops over the $K$ and $V$ blocks in the outer loop, and iterates over the $Q$ blocks in the inner loop. For each block pair, the kernel recomputes the forward logits $S_{ij}$, the inclusion probabilities $\sigma_{ij}$, and the softmax probabilities $P_{ij} = \exp(\tilde{S}_{ij} - L_i)$. 

Let $dO_i \in \mathbb{R}^d$ be the upstream gradient of the attention output, and $d\Sigma_i$ be the upstream gradient from the sparsity regularization loss. The standard derivatives are computed as follows (for completeness we include the derivations in \autoref{app:attention_gradients}):
\begin{align*}
    dV_j &= \sum_i P_{ij} dO_i \\
    dP_{ij} &= dO_i^\top \cdot V_j \\
    dS_{ij}^{\text{masked}} &= P_{ij} \left( dP_{ij} - \sum_k P_{ik} dP_{ik} \right)
\end{align*}

Because our soft inclusion mask relies on $\sigma_{ij}$, gradients must be routed back to the threshold predictor. The local gradient with respect to the threshold $\tau_i$ incorporates both the task loss (routed through the mask) and the sparsity regularizer (routed through $d\Sigma_i$). Using the derivative of the sigmoid function, $\sigma'_{ij} = \sigma_{ij}(1 - \sigma_{ij})$, the gradient step for the threshold is the following (see \autoref{app:attention_gradients}):
$$d\tau_i = \sum_j \left( dS_{ij}^{\text{masked}} \frac{\partial \tilde{S}_{ij}}{\partial \sigma_{ij}} + d\Sigma_i \right) \big(-\beta \cdot \sigma'_{ij}\big)$$

Similarly, the gradient with respect to the raw pre-softmax attention score $S_{ij}$ receives an additional term from the thresholding mechanism, yielding the final $dS_{ij}^{\text{raw}}$ used to compute $dK$ and $dQ$. 

Because the backward pass's inner loop iterates over queries while holding keys constant, gradients for $Q$ and $\tau$ are accumulated atomically across multiple inner iterations (\texttt{tl.atomic\_add}), ensuring thread-safe gradient reductions across the sequence length without necessitating massive workspace buffers. \autoref{alg:bwd_kernel} gives the full backward pass.

\begin{algorithm}[h]
\caption{Hardware-Aligned Backward Kernel}
\label{alg:bwd_kernel}
\begin{algorithmic}[1]
\Require Forward tensors $Q, K, V, \tau, O, L$, upstream gradients $dO, d\Sigma$
\Require Scale $s$, Temperature $\beta$
\Ensure Gradients $dQ, dK, dV \in \mathbb{R}^{T \times d}$, $d\tau \in \mathbb{R}^T$

\State Initialize $dQ, d\tau \gets 0$ in HBM
\For{each key block $j$ from $1$ to $T / B_N$}
    \State Load $K_j, V_j$ from HBM to SRAM
    \State Initialize $dK_j \gets 0$, $dV_j \gets 0$ in SRAM
    
    \For{each query block $i$ from $j$ to $T / B_M$} \Comment{Respect causal bounds}
        \State Load $Q_i, \tau_i, O_i, L_i, dO_i, d\Sigma_i$ from HBM to SRAM
        
        \State Recompute $S_{ij} \gets s \cdot Q_i K_j^\top$
        \State Recompute $\sigma_{ij} \gets \text{sigmoid}(\beta \cdot (S_{ij} - \tau_i))$
        \State Recompute masked scores $\tilde{S}_{ij}$ \Comment{Using strict or soft mask}
        \State Apply causal/diagonal mask to $\tilde{S}_{ij}$
        
        \State $P_{ij} \gets \exp(\tilde{S}_{ij} - L_i)$ \Comment{Recompute attention probabilities}
        
        \State $dV_j \gets dV_j + P_{ij}^\top dO_i$ \Comment{Accumulate value gradients}
        \State $dP_{ij} \gets dO_i V_j^\top$
        \State $D_i \gets \text{row\_sum}(dO_i \odot O_i)$ \Comment{Softmax scaling factor}
        
        \State $dS_{ij}^{\text{masked}} \gets P_{ij} \odot (dP_{ij} - D_i)$ \Comment{Gradient through softmax}
        
        \State Compute threshold gradient $d\tau_{\text{local}}$ using $dS_{ij}^{\text{masked}}, d\Sigma_i$, and $\sigma'_{ij}$
        \State Compute raw pre-softmax gradient $dS_{ij}^{\text{raw}}$
        
        \State $dK_j \gets dK_j + s(dS_{ij}^{\text{raw}})^\top Q_i$ \Comment{Accumulate key gradients}
        
        \State \texttt{atomic\_add}($dQ_i$, $dS_{ij}^{\text{raw}} K_j$) \Comment{Accumulate to HBM safely}
        \State \texttt{atomic\_add}($d\tau_i$, $\text{row\_sum}(d\tau_{\text{local}})$)
    \EndFor
    \State Store $dK_j, dV_j$ from SRAM to HBM
\EndFor
\end{algorithmic}
\end{algorithm}

\subsection{Derivation of the Attention Gradients}
\label{app:attention_gradients}

To formulate the backward pass, we differentiate the scalar loss $\mathcal{L}$ with respect to the intermediate variables in reverse. We denote the upstream gradient of the loss with respect to any variable $X$ as $dX = \frac{\partial \mathcal{L}}{\partial X}$. Recall the forward attention operations for a query $i$ and key $j$:
\begin{equation}
    S_{ij} = s \langle Q_i, K_j \rangle, \quad P_{ij} = \frac{\exp(\tilde{S}_{ij})}{\sum_k \exp(\tilde{S}_{ik})}, \quad O_i = \sum_k P_{ik} V_k
\end{equation}
where $s$ is the scaling factor and $\tilde{S}_{ij}$ is the dynamically masked pre-softmax logit.

We first compute the gradients with respect to the value vectors $V_j$ and the scalar attention probabilities $P_{ij}$. Applying the chain rule to the forward output sum yields:
\begin{equation}
    dV_j = \sum_i P_{ij} dO_i, \quad \text{and} \quad dP_{ij} = dO_i \cdot V_j
\end{equation}

Next, we route the gradient through the softmax normalization to find the derivative with respect to the masked logits, $\tilde{S}_{ij}$. Because $\tilde{S}_{ij}$ affects all probabilities $P_{ik}$ for query $i$, we sum over $k$ using the standard softmax Jacobian, $\frac{\partial P_{ik}}{\partial \tilde{S}_{ij}} = P_{ij}(\delta_{jk} - P_{ik})$, where $\delta_{jk}$ is the Kronecker delta:
\begin{equation}
    dS_{ij}^{\text{masked}} = \sum_k dP_{ik} \frac{\partial P_{ik}}{\partial \tilde{S}_{ij}} = P_{ij} \left( dP_{ij} - \sum_k P_{ik} dP_{ik} \right)
\end{equation}

The threshold $\tau_i$ and the raw pre-softmax logit $S_{ij}$ interact through the inclusion probability $\sigma_{ij} = \sigma(\beta(S_{ij} - \tau_i))$. This probability branches into two computational pathways: the task loss (acting as a dynamic mask to produce $\tilde{S}_{ij}$) and the sparsity regularization loss (accumulating into the row-wise sum $R_i = \sum_j \sigma_{ij}$). 

Both gradient derivations rely on the shared upstream gradient flowing through $\sigma_{ij}$. Using $dS_{ij}^{\text{masked}} = \frac{\partial \mathcal{L}}{\partial \tilde{S}_{ij}}$ and $d\Sigma_i = \frac{\partial \mathcal{L}}{\partial R_i}$, we apply the chain rule to find:
\begin{equation*}
    \frac{\partial \mathcal{L}}{\partial \sigma_{ij}} = dS_{ij}^{\text{masked}} \frac{\partial \tilde{S}_{ij}}{\partial \sigma_{ij}} + d\Sigma_i
\end{equation*}

\paragraph{Threshold Gradient ($d\tau_i$):} 
The threshold $\tau_i$ only affects the loss via $\sigma_{ij}$. Because it influences all keys $j$ up to the causal limit, we sum the chain rule across the sequence. Evaluating the inner derivative $\frac{\partial \sigma_{ij}}{\partial \tau_i} = -\beta \sigma'_{ij}$ (where $\sigma'_{ij} = \sigma_{ij}(1 - \sigma_{ij})$), we obtain:
\begin{equation*}
    d\tau_i = \sum_j \frac{\partial \mathcal{L}}{\partial \sigma_{ij}} \frac{\partial \sigma_{ij}}{\partial \tau_i} = \sum_j \left( dS_{ij}^{\text{masked}} \frac{\partial \tilde{S}_{ij}}{\partial \sigma_{ij}} + d\Sigma_i \right) \big(-\beta \cdot \sigma'_{ij}\big)
\end{equation*}

\paragraph{Raw Score Gradient ($dS_{ij}^{\text{raw}}$):} 
Unlike the threshold, the raw score $S_{ij}$ affects the loss through both the indirect sigmoid path ($\sigma_{ij}$) and the direct dynamic mask path ($\tilde{S}_{ij}$). Applying the multivariate chain rule:
\begin{equation*}
    dS_{ij}^{\text{raw}} = \frac{\partial \mathcal{L}}{\partial \tilde{S}_{ij}} \frac{\partial \tilde{S}_{ij}}{\partial S_{ij}} + \frac{\partial \mathcal{L}}{\partial \sigma_{ij}} \frac{\partial \sigma_{ij}}{\partial S_{ij}}
\end{equation*}
Notice the mathematical symmetry in the sigmoid's argument: the inner derivative with respect to the score is $\frac{\partial \sigma_{ij}}{\partial S_{ij}} = +\beta \sigma'_{ij}$. Substituting this and the shared intermediate term yields the final local gradient:
\begin{equation*}
    dS_{ij}^{\text{raw}} = dS_{ij}^{\text{masked}} \frac{\partial \tilde{S}_{ij}}{\partial S_{ij}} + \left( dS_{ij}^{\text{masked}} \frac{\partial \tilde{S}_{ij}}{\partial \sigma_{ij}} + d\Sigma_i \right) \big(\beta \cdot \sigma'_{ij}\big)
\end{equation*}

\paragraph{Masking Derivatives:}
To implement these equations natively, the exact values of the local mask derivatives are determined by the chosen dynamic masking strategy:
\begin{itemize}
    \item \textbf{Additive Masking} ($\tilde{S}_{ij} = S_{ij} + C \cdot \sigma_{ij} - C$): The direct derivative is $\frac{\partial \tilde{S}_{ij}}{\partial S_{ij}} = 1$, and the sigmoid interaction is $\frac{\partial \tilde{S}_{ij}}{\partial \sigma_{ij}} = C$.
    \item \textbf{Multiplicative Masking} ($\tilde{S}_{ij} = S_{ij} \odot \sigma_{ij}$): The direct derivative is $\frac{\partial \tilde{S}_{ij}}{\partial S_{ij}} = \sigma_{ij}$, and the sigmoid interaction is $\frac{\partial \tilde{S}_{ij}}{\partial \sigma_{ij}} = S_{ij}$.
\end{itemize}

\paragraph{Query and Key Gradients} As we saw, $dS_{ij}^{\text{masked}}$ is combined with the sparsity regularization gradient to form the total upstream gradient for the raw score, denoted as $dS_{ij}^{\text{raw}}$. We distribute this raw gradient back to the query and key vectors. Differentiating the scaled dot-product $S_{ij} = s Q_i K_j^T$ yields symmetric accumulations:
\begin{equation}
    dQ_i = \sum_j s \cdot dS_{ij}^{\text{raw}} K_j, \quad \text{and} \quad dK_j = \sum_i s \cdot dS_{ij}^{\text{raw}} Q_i
\end{equation}

From a hardware implementation perspective, the scalar $s$ is absorbed into $dS_{ij}^{\text{raw}}$ immediately prior to these matrix multiplications. Because our backward kernel holds keys in the outer loop and iterates over queries in the inner loop, $dK_j$ is accumulated safely in on-chip SRAM, whereas $dQ_i$ requires atomic additions to prevent race conditions in global High Bandwidth Memory (HBM).

\subsection{Complexity Analysis}
\label{appx:complexities}

To formally evaluate the hardware efficiency of our Threshold Attention mechanism, we analyze its space, time (FLOPs), and memory access (I/O) complexities. We follow the analytical framework of FlashAttention \citep{dao2022flashattention}. 

Let $T$ be the sequence length and $d$ be the head dimension. Let $M$ denote the capacity of the fast on-chip SRAM. To maximize SRAM utilization without overflowing, our block sizes are constrained such that $B_M, B_N = \Theta(M / d)$. For this analysis, we assume a causal masking paradigm, meaning the inner loops traverse approximately $T^2 / 2$ total element interactions.

\paragraph{Forward Pass Complexity}
In standard attention, materializing the dense $T \times T$ attention matrix requires $\mathcal{O}(T^2)$ space and $\mathcal{O}(T^2)$ High Bandwidth Memory (HBM) accesses. Our hardware-aligned forward pass avoids this.
\begin{itemize}
    \item \textbf{Space:} We only store the inputs $Q, K, V \in \mathbb{R}^{T \times d}$, the threshold parameters $\tau \in \mathbb{R}^T$, the final output $O \in \mathbb{R}^{T \times d}$, and the vectors required for the backward pass ($L, \Sigma \in \mathbb{R}^T$). The dynamic mask $\sigma_{ij}$ is computed locally in SRAM and never explicitly materialized. Thus, the total space complexity is $\mathcal{O}(Td)$, identical to standard FlashAttention.
    \item \textbf{Time:} The inner loop computes the dot product, the sigmoid inclusion probability, and the softmax accumulator updates. While the sigmoid and masking operations introduce a constant factor overhead compared to raw FlashAttention, they are element-wise $\mathcal{O}(1)$ operations. Across all block pairs, the dominant operation remains the matrix multiplications ($Q_i K_j^T$ and $P_{ij} V_j$), yielding a time complexity of $\mathcal{O}(T^2 d)$.
    \item \textbf{I/O (HBM Accesses):} The outer loop iterates over $T/B_M$ query blocks. For each query block, it loads $Q_i$ and $\tau_i$ from HBM ($\Theta(B_M d)$ elements). The inner loop iterates over $T/B_N$ key/value blocks, loading $K_j$ and $V_j$ ($\Theta(B_N d)$ elements). The total number of HBM accesses is given by:
    \begin{equation}
        \text{HBM Accesses} = \sum_{i=1}^{T/B_M} \left[ \Theta(B_M d) + \sum_{j=1}^{i} \Theta(B_N d) \right] = \mathcal{O} \left( \frac{T^2 d}{B_M} \right)
    \end{equation}
    Substituting $B_M = \Theta(M / d)$, the overall I/O complexity is $\mathcal{O}(T^2 d^2 M^{-1})$.
\end{itemize}

\paragraph{Backward Pass Complexity}
The backward pass reconstructs the forward pass intermediate states to avoid saving the massive $T \times T$ matrices, trading additional FLOPs for substantially reduced memory reads.
\begin{itemize}
    \item \textbf{Space:} The backward pass requires storing the gradients $dQ, dK, dV \in \mathbb{R}^{T \times d}$ and $d\tau \in \mathbb{R}^T$, requiring $\mathcal{O}(Td)$ space.
    \item \textbf{Time:} Recomputing $S_{ij}$ and $\sigma_{ij}$ requires the same $\mathcal{O}(T^2 d)$ FLOPs as the forward pass. Computing the standard attention gradients ($dS_{ij}, dP_{ij}, dV_j, dK_j, dQ_i$) and the threshold gradient $d\tau_i$ involves purely linear matrix multiplications and element-wise derivatives. Thus, the total time complexity remains $\mathcal{O}(T^2 d)$.
    \item \textbf{I/O (HBM Accesses):} The outer loop iterates over $T/B_N$ key blocks, holding $K_j$ and $V_j$ in SRAM while accumulating $dK_j$ and $dV_j$. The inner loop streams the query blocks $Q_i, dO_i$ and vectors $L, \tau, D, d\Sigma$ from HBM ($\Theta(B_M d)$ elements). The gradients $dQ_i$ and $d\tau_i$ are accumulated to HBM using atomic additions. The total HBM access cost is:
    \begin{equation}
        \text{HBM Accesses} = \sum_{j=1}^{T/B_N} \left[ \Theta(B_N d) + \sum_{i=j}^{T/B_M} \Theta(B_M d) \right] = \mathcal{O} \left( \frac{T^2 d}{B_N} \right)
    \end{equation}
    Substituting $B_N = \Theta(M / d)$, the backward I/O complexity is $\mathcal{O}(T^2 d^2 M^{-1})$.
\end{itemize}

\section{Inference-Time Block-Sparse Fused Kernel}
\label{appx:fused-kernel}
Here we detail the algorithmic design and systems optimizations of our custom Triton \citep{tillet2019triton} fused kernel for block-sparse attention decoding. The design can be visualized in \autoref{fig:inference_block_selective_load}.

\begin{figure}[t]
\centering
\resizebox{\linewidth}{!}{%
\begin{tikzpicture}[
    >=Stealth,
    font=\sffamily,
    sram_area/.style={draw=blue!60!black, fill=blue!2, rounded corners=6pt, line width=0.8pt},
    hbm_area/.style={draw=gray!50, fill=gray!3, rounded corners=6pt, line width=0.8pt},
    node_base/.style={rounded corners=4pt, align=center, inner sep=5pt, font=\scriptsize},
    input_node/.style={node_base, draw=blue!75!black, fill=blue!10, thick, minimum width=2.2cm, minimum height=1.1cm},
    screen_node/.style={node_base, draw=orange!80!black, fill=orange!12, thick, minimum width=2.8cm, minimum height=1.1cm},
    compute_node/.style={node_base, draw=teal!80!black, fill=teal!12, thick, minimum width=3.0cm, minimum height=1.1cm},
    output_node/.style={node_base, draw=blue!75!black, fill=blue!12, thick, minimum width=1.6cm, minimum height=1.1cm},
    block_active/.style={node_base, draw=teal!80!black, fill=teal!15, thick, minimum width=1.9cm, minimum height=1.0cm},
    block_pruned/.style={node_base, draw=gray!50!black, fill=gray!12, thick, dashed, minimum width=1.9cm, minimum height=1.0cm},
    badge/.style={font=\tiny\bfseries, fill=white, inner sep=2pt, rounded corners=2pt, draw=gray!30, line width=0.4pt},
    header_text/.style={font=\footnotesize\bfseries, align=left}
]
    \draw[sram_area] (0.0, 2.8) rectangle (14.0, 5.8);
    \node[header_text, anchor=north west, color=blue!75!black] at (0.3, 5.6) {SRAM / Registers};

    \draw[hbm_area] (0.0, 0.0) rectangle (14.0, 2.3);
    \node[header_text, anchor=north west, color=gray!70!black] at (0.3, 2.1) {HBM};

    \node[input_node] (inputs) at (1.6, 4.1) {Query $q_t \in \mathbb{R}^d$\\[2pt]Threshold $\tau_t$};
    \node[screen_node] (screen) at (5.1, 4.1) {$\text{Score}_{\text{dual}}(q_t, B) \ge \tau_t$?};
    \node[compute_node] (softmax) at (9.3, 4.1) {$\tilde{S}_{t, B} = S_{t, B} \odot \sigma(\beta(S_{t, B} - \tau_t))$\\[2pt]$\text{acc} \leftarrow \text{OnlineSoftmax}(\tilde{S}_{t, B}, V_B)$};
    \node[output_node] (output) at (12.8, 4.1) {Output $o_t$\\[2pt]\scriptsize (Next Token)};

    \draw[->, thick, draw=blue!80!black] (inputs.east) -- (screen.west);
    \draw[->, thick, draw=black!75] (screen.east) -- (softmax.west) node[midway, badge, text=teal!80!black] {Pass ($\ge \tau_t$)};
    \draw[->, very thick, draw=blue!80!black] (softmax.east) -- (output.west);

    \node[block_pruned] (b1) at (2.4, 0.9) {\textbf{Block $B_1$}\\[1pt]$K_{B_1}, V_{B_1}$\\[1pt]\color{red!70!black}\textbf{\textsf{[PRUNED]}}};
    \node[block_active] (b2) at (5.8, 0.9) {\textbf{Block $B_2$}\\[1pt]$K_{B_2}, V_{B_2}$\\[1pt]\color{teal!80!black}\textbf{\textsf{[ACTIVE]}}};
    \node[block_pruned] (b3) at (9.0, 0.9) {\textbf{Block $B_3$}\\[1pt]$K_{B_3}, V_{B_3}$\\[1pt]\color{red!70!black}\textbf{\textsf{[PRUNED]}}};
    \node[font=\bfseries\color{gray!50}] at (10.6, 0.9) {$\cdots$};
    \node[block_active] (bm) at (12.2, 0.9) {\textbf{Block $B_M$}\\[1pt]$K_{B_M}, V_{B_M}$\\[1pt]\color{teal!80!black}\textbf{\textsf{[ACTIVE]}}};

    \draw[->, thick, dashed, draw=orange!80!black] (5.1, 2.3) -- (screen.south) node[midway, badge, text=orange!90!black] {Centroid $\mu_B$};
    \draw[->, thick, dashed, draw=red!70!black] (b1.north) -- ++(0, 0.55) node[above, font=\tiny\bfseries\color{red!70!black}] {Skip ($0$ I/O)};
    \draw[->, thick, dashed, draw=red!70!black] (b3.north) -- ++(0, 0.55) node[above, font=\tiny\bfseries\color{red!70!black}] {Skip ($0$ I/O)};
    \draw[->, very thick, draw=teal!80!black] (b2.north) -- (5.8, 3.2) -| ([xshift=-14pt]softmax.south) node[pos=0.32, badge, text=teal!80!black] {Load $K, V$};
    \draw[->, very thick, draw=teal!80!black] (bm.north) -- (12.2, 3.2) -| ([xshift=14pt]softmax.south);
\end{tikzpicture}%
}
\caption{\textbf{Inference-time block-sparse selective loading mechanism.} Unimportant blocks are pruned in SRAM via lightweight metadata screening, while active blocks are loaded from HBM for fused online softmax accumulation.}
\label{fig:inference_block_selective_load}
\end{figure}

\subsection{GQA-Grouped Execution and Compact Metadata Layout}
In Grouped-Query Attention (GQA), $G = H_Q / H_{KV}$ query heads share a single key-value head ($G = 8$ in our 1.45B model). If a decoding kernel launches a separate thread block for each query head, any KV block needed by multiple query heads in the same group is fetched from HBM up to $G$ times, wasting memory bandwidth. To eliminate redundant reads, our kernel instead assigns each thread block to an entire GQA group of $G$ query heads sharing a single KV head:
\begin{enumerate}[leftmargin=*,itemsep=2pt,topsep=2pt]
    \item \textbf{Batched Metadata Screening:} For each chunk of candidate blocks, the thread block loads the per-block metadata ($\boldsymbol{\mu}_B, \boldsymbol{\sigma}_B, M_B$) once into SRAM and evaluates $\operatorname{Score}_{\text{dual}}(q_t^h, B) \ge \tilde{\tau}_{t,h}$ across all $G$ query heads simultaneously via a single Tensor Core matrix multiplication (\texttt{tl.dot}).
    \item \textbf{Union KV Loading with Per-Head Masking:} A full KV block $B$ is fetched from HBM into SRAM \emph{at most once}—whenever at least one of the $G$ query heads selects it (forming the GQA union). Once in SRAM, each query head updates its own online-softmax accumulator using only the blocks that passed its individual threshold $\tilde{\tau}_{t,h}$.
\end{enumerate}
As a result, actual HBM traffic scales with the \emph{GQA union density} $D_{\text{union}}$ across the $H_{KV}$ groups rather than the sum of individual query-head densities.

To minimize metadata I/O, each per-block record stores only the 64-dimensional centroid $\boldsymbol{\mu}_B$ (\texttt{float16}), 64-dimensional coordinate standard deviation $\boldsymbol{\sigma}_B$ (\texttt{float16}), and scalar maximum norm $M_B$ (\texttt{float32}), totaling $258$ bytes per block per KV head ($1.6\%$ overhead at $b=64$). Caching $\boldsymbol{\sigma}_B$ rather than variance $\boldsymbol{\sigma}_B^2$ prevents half-precision underflow below $6 \times 10^{-5}$ in low-variance blocks.

\subsection{Mixed Precision and Numerical Stability}
While Tensor Core matrix multiplications ($q K_B^\top$ and $P V_B$) execute in \texttt{float16}, squaring coordinates in half precision during proxy bound evaluation can induce overflow or cancellation. Consequently, all coordinate-wise square accumulations, dot-product bounding terms ($\langle q, \boldsymbol{\mu}_B \rangle$), scaling factors, and online softmax accumulators ($m_c, l_c$) are strictly promoted to \texttt{float32}.

\subsection{Chunked Decoding and Two-Stage Reduction}
To maximize Streaming Multiprocessor (SM) occupancy during long-context decoding, the block sequence is partitioned into independent \emph{chunks} of $c$ contiguous blocks. The primary scan kernel evaluates block thresholds and updates local online softmax accumulators ($m_{\text{mid}}^{(k)}$, $l_{\text{mid}}^{(k)}$, $O_{\text{mid}}^{(k)} \in \mathbb{R}^d$) independently for each chunk $k$. When $N_{\text{chunks}} > 1$, a secondary reduction kernel aggregates chunk statistics via associative online softmax reduction:
$$
o = \frac{\sum_{k=1}^{N_{\text{chunks}}} O_{\text{mid}}^{(k)} \exp\left(m_{\text{mid}}^{(k)} - m_{\text{global}}\right)}{\sum_{k=1}^{N_{\text{chunks}}} l_{\text{mid}}^{(k)} \exp\left(m_{\text{mid}}^{(k)} - m_{\text{global}}\right)}, \quad \text{where } m_{\text{global}} = \max_k \, m_{\text{mid}}^{(k)}.
$$
When $N_{\text{chunks}} = 1$, a fast path normalizes and writes $o = \text{acc}_1 / l_1$ directly to HBM without staging intermediate buffers. Larger chunk sizes $c$ reduce intermediate buffer traffic and reduction trip counts at the cost of grid parallelism; as shown in \autoref{tab:standalone_arms}, $b=64, c=128$ achieves optimal throughput at $51.2\%$ union density, whereas coarser blocks ($b=128, c=64$) reach $2.50\times$ speedup in sparse regimes ($16.0\%$ union).

\begin{table}[h]
\centering
\caption{\textbf{Effect of block size ($b$) and chunk size ($c$)} at batch size $B=64$, $L=512$K ($32.8$\,GB KV cache), NVIDIA H100 SXM $80$\,GB, \texttt{float16}. $c$ denotes blocks per chunk. Coarser blocks reduce metadata overhead in sparse regimes ($16\%$), while $b=64$ achieves the best balance at $51.2\%$ union density.}
\label{tab:standalone_arms}
\begin{tabular}{lcccccc}
\toprule
& \multicolumn{2}{c}{$51.2\%$ union} & \multicolumn{2}{c}{$35.5\%$ union} & \multicolumn{2}{c}{$16.0\%$ union} \\
\cmidrule(lr){2-3} \cmidrule(lr){4-5} \cmidrule(lr){6-7}
\textbf{Arm} & ms & $\times$ & ms & $\times$ & ms & $\times$ \\
\midrule
FlashAttention-2 & $15.085$ & --- & $15.009$ & --- & $14.790$ & --- \\
ETA $b=32$, $c=64$ & $16.617$ & $0.91$ & $13.498$ & $1.11$ & $9.207$ & $1.61$ \\
ETA $b=64$, $c=64$ & $13.797$ & $1.09$ & $10.566$ & $1.42$ & $6.429$ & $2.30$ \\
ETA $b=64$, $c=128$ & $\mathbf{13.432}$ & $\mathbf{1.12}$ & $\mathbf{10.430}$ & $\mathbf{1.44}$ & $6.520$ & $2.27$ \\
ETA $b=128$, $c=64$ & $16.733$ & $0.90$ & $11.472$ & $1.31$ & $\mathbf{5.921}$ & $\mathbf{2.50}$ \\
\bottomrule
\end{tabular}
\end{table}

\subsection{Top-1 Rescue, Tail Handling, and Kernel Validation}
If an aggressive threshold prunes all candidate blocks in the first chunk ($k=1$), leaving an empty softmax denominator ($l_k = 0$), the kernel executes a localized \emph{Top-1 Rescue Phase}: it tracks the maximum proxy bound index during scanning and unconditionally loads $K_{\text{max\_idx}}, V_{\text{max\_idx}}$ into SRAM. Furthermore, whenever sequence length $T$ is not an exact multiple of block size $b$, the final chunk ($k = N_{\text{chunks}}$) executes a predicated tail phase (\texttt{IS\_TAIL=True}) to process remaining tokens with out-of-bounds positions masked to $-\infty$.

We validate the grouped decode kernel across $31$ test configurations against an independent PyTorch reference implementation modeling greedy block selection and top-1 rescue. Integer selection counts (per-head density, attended token fraction, and GQA union blocks) match with zero discrepancy across all cases, while attention outputs match within $10^{-6}$ in \texttt{float32} and $10^{-3}$ in \texttt{float16}. The complete decode procedure is summarized in \autoref{alg:inference_kernel}.

\begin{breakablealgorithm}
\caption{Hardware-Aligned Block-Sparse Decoding (Inference)}
\label{alg:inference_kernel}
\begin{algorithmic}[1]
\Require Query $q \in \mathbb{R}^d$, KV Cache $K, V \in \mathbb{R}^{T \times d}$
\Require Block Size $b$, Blocks per Chunk $c$, Total Full Blocks $N_{\text{full}} \gets \lfloor T/b \rfloor$
\Require Metadata: Centroids $\mu_j \in \mathbb{R}^d$, Variances $\sigma_j^2 \in \mathbb{R}^d$, Max Norms $M_j \in \mathbb{R}$ for $j \in [1, N_{\text{full}}]$
\Require Dynamic Threshold $\tau$, Softmax Scale $s$, Operating Offset $a$, Z-score $z$
\Ensure Output $o \in \mathbb{R}^d$

\State Calculate $N_{\text{chunks}} \gets \lceil N_{\text{full}} / c \rceil$
\If{$N_{\text{chunks}} > 1$}
    \State Allocate intermediate buffers $M_{\text{mid}}, L_{\text{mid}} \in \mathbb{R}^{N_{\text{chunks}}}$, $O_{\text{mid}} \in \mathbb{R}^{N_{\text{chunks}} \times d}$ in HBM
\EndIf
\State $\tau_{\text{eff}} \gets \tau - a$ \Comment{Effective threshold with operating offset}

\For{each chunk $k$ from $1$ to $N_{\text{chunks}}$} \Comment{Processed in parallel across GPU SMs}
    \State $m_k \gets -\infty$, \quad $l_k \gets 0$, \quad $acc_k \gets \mathbf{0} \in \mathbb{R}^d$, \quad $\text{tokens\_attended} \gets 0$
    \State $\text{max\_bound} \gets -\infty$, \quad $\text{max\_idx} \gets -1$
    
    \For{each full block $j \in [(k-1)c + 1, \, \min(k c, N_{\text{full}})]$}
        \State Load $\mu_j, \sigma_j^2, M_j$ from HBM into SRAM
        \State $\text{bound}_{\text{gauss}} \gets s \cdot \left(\langle q, \mu_j \rangle + z \cdot \sqrt{\sum_i q_i^2 (\sigma_j^2)_i}\right)$
        \State $\text{bound}_{\text{spher}} \gets s \cdot \|q\| \cdot M_j$
        \State $\text{bound} \gets \min(\text{bound}_{\text{gauss}}, \text{bound}_{\text{spher}})$
        
        \If{$\text{bound} \geq \tau_{\text{eff}}$}
            \State Load $K_j, V_j$ from HBM into SRAM \Comment{Significant block}
            \State Update $m_k, l_k, acc_k$ via gated online softmax ($\tilde{S}_j = S_j \odot \sigma(\beta(S_j - \tau_{\text{eff}}))$)
            \State $\text{tokens\_attended} \gets \text{tokens\_attended} + b$
        \EndIf
        
        \If{$k == 1$ \textbf{and} $\text{bound} > \text{max\_bound}$} \Comment{Track max only for Chunk 1 rescue}
            \State $\text{max\_bound} \gets \text{bound}$, \quad $\text{max\_idx} \gets j$
        \EndIf
    \EndFor
    
    \If{$k == 1$ \textbf{and} $\text{tokens\_attended} == 0$} \Comment{Top-1 Rescue Phase (Chunk 1 only)}
        \State Load $K_{\text{max\_idx}}, V_{\text{max\_idx}}$ from HBM into SRAM
        \State Update $m_k, l_k, acc_k$ via gated online softmax using $q, K_{\text{max\_idx}}, V_{\text{max\_idx}}$
    \EndIf
    
    \If{$k == N_{\text{chunks}}$ \textbf{and} $T > N_{\text{full}} \cdot b$} \Comment{Tail-Block Phase (partial remaining tokens)}
        \State Load remaining tail tokens $K_{\text{tail}}, V_{\text{tail}}$ from HBM into SRAM
        \State Update $m_k, l_k, acc_k$ via gated online softmax with partial masking
    \EndIf
    
    \If{$N_{\text{chunks}} == 1$}
        \State \Return $acc_k / l_k$ \Comment{Single-chunk fast path: return normalized output directly}
    \Else
        \State Store unnormalized accumulators: $m_k \to M_{\text{mid}}[k]$, \quad $l_k \to L_{\text{mid}}[k]$, \quad $acc_k \to O_{\text{mid}}[k]$
    \EndIf
\EndFor
\vspace{2mm}
\State \textbf{Global Reduction Kernel:} \Comment{Executed when $N_{\text{chunks}} > 1$}
\State $m_{\text{global}} \gets -\infty$, \quad $l_{\text{global}} \gets 0$, \quad $o \gets \mathbf{0} \in \mathbb{R}^d$
\For{each chunk $k$ from $1$ to $N_{\text{chunks}}$}
    \State $m_{\text{new}} \gets \max(m_{\text{global}}, M_{\text{mid}}[k])$
    \State $\alpha_{\text{global}} \gets \exp(m_{\text{global}} - m_{\text{new}})$, \quad $\alpha_{\text{chunk}} \gets \exp(M_{\text{mid}}[k] - m_{\text{new}})$
    
    \State $l_{\text{global}} \gets l_{\text{global}} \cdot \alpha_{\text{global}} + L_{\text{mid}}[k] \cdot \alpha_{\text{chunk}}$
    \State $o \gets o \cdot \alpha_{\text{global}} + O_{\text{mid}}[k] \cdot \alpha_{\text{chunk}}$
    \State $m_{\text{global}} \gets m_{\text{new}}$
\EndFor
\State \Return $o / l_{\text{global}}$ \Comment{Final normalization across all chunks}
\end{algorithmic}
\end{breakablealgorithm}

\section{Train-to-Inference Representational Consistency}
\label{appx:train_inference_consistency}

\subsection{Theoretical Analysis of Block Pruning and Over-Inclusion}
\label{appx:log_odds_invariance}

To build intuition for why SiLU tail tempering supports both block pruning and block over-inclusion, we analyze a simplified regime at decoding step $t$. Let $S_s = \frac{q_t k_s^\top}{\sqrt{d_k}}$ denote the raw pre-softmax logit for key $s \le t$, $\tilde{\tau}_t = \tau_t - a$ the effective head threshold, and $\tilde{S}_s = S_s \sigma(\beta(S_s - \tilde{\tau}_t))$ the SiLU-tempered logit with weight $w_s := \exp(\tilde{S}_s)$. 

At block-sparse inference, the causal context $\mathcal{S}_t = \{1, \dots, t\}$ partitions into three disjoint sets:
\begin{enumerate}[itemsep=1pt,topsep=2pt]
\item $\mathcal{K}_t = \{s : S_s \ge \tilde{\tau}_t\}$: the salient active keys,
\item $\mathcal{C}_t \subset \mathcal{P}_t$: the co-admitted sub-threshold keys inside loaded blocks, and
\item $\mathcal{D}_t = \mathcal{P}_t \setminus \mathcal{C}_t$: the dropped sub-threshold keys inside pruned blocks,
\end{enumerate}
where $\mathcal{P}_t = \mathcal{C}_t \sqcup \mathcal{D}_t$ is the collective sub-threshold tail.

\begin{definition}
For any subset $\mathcal{X} \subseteq \mathcal{S}_t$, let $Z_{\mathcal{X}} := \sum_{s \in \mathcal{X}} w_s$ be its total weight and $\tilde{v}_{\mathcal{X}} := \frac{1}{Z_{\mathcal{X}}} \sum_{s \in \mathcal{X}} w_s v_s$ be its locally normalized value average. The token-sparse output $o_t^{\mathrm{token}}$ (retaining $\mathcal{K}_t$), block-sparse inference output $o_t^{\mathrm{block}}$ (retaining $\mathcal{K}_t \sqcup \mathcal{C}_t$), and training output $o_t^{\mathrm{train}}$ (retaining $\mathcal{S}_t = \mathcal{K}_t \sqcup \mathcal{C}_t \sqcup \mathcal{D}_t$) are defined as:
\begin{align}
o_t^{\mathrm{token}} &:= \tilde{v}_{\mathcal{K}_t},\nonumber\\
o_t^{\mathrm{block}} &:= \tilde{v}_{\mathcal{K}_t \cup \mathcal{C}_t} = \frac{Z_{\mathcal{K}} o_t^{\mathrm{token}} + Z_{\mathcal{C}} \tilde{v}_{\mathcal{C}}}{Z_{\mathcal{K}} + Z_{\mathcal{C}}},\nonumber\\
o_t^{\mathrm{train}} &:= \tilde{v}_{\mathcal{S}_t} = \frac{Z_{\mathcal{K}} o_t^{\mathrm{token}} + Z_{\mathcal{C}} \tilde{v}_{\mathcal{C}} + Z_{\mathcal{D}} \tilde{v}_{\mathcal{D}}}{Z_{\mathcal{K}} + Z_{\mathcal{C}} + Z_{\mathcal{D}}}.
\end{align}
\end{definition}

We make two simplifying assumptions on the sub-threshold tail $\mathcal{P}_t$:
\begin{enumerate}[leftmargin=*,itemsep=1pt,topsep=2pt,label=\textbf{(A\arabic*)}]
    \item \textbf{Bounded Tail Weight Spread:} On any sub-threshold set $\mathcal{X} \subseteq \mathcal{P}_t$, the max-to-mean weight ratio is bounded:
    $$
    \kappa = \frac{\max_{s \in \mathcal{X}} w_s}{\frac{1}{|\mathcal{X}|}\sum_{s \in \mathcal{X}} w_s} \ge 1.
    $$ 
    We verify this empirically: $\kappa = 2.03$ for ETA vs.\ $15.6$ for dense (\autoref{tab:masking_formulation_comparison}).
    \item \textbf{Uncorrelated Tail Values:} Sub-threshold value vectors $\{v_s\}_{s \in \mathcal{P}_t}$ have mean $\boldsymbol{\mu}_v = \mathbb{E}[v_s] = \mathbf{0}$, variance $\operatorname{tr}(\Sigma_v) = \mathbb{E}[\|v_s\|_2^2]$, and zero pairwise correlation $\mathbb{E}[\langle v_s, v_r \rangle \mid q_t] = 0$ for $s \neq r$ (verified empirically on C4 in \autoref{tab:prop1_verification_145b}).
\end{enumerate}

We show that non-salient tokens can be safely ignored during inference:
\begin{proposition}
\label{prop:uniform_floor_cancellation}
Under \textbf{(A1)}--\textbf{(A2)}, for any sub-threshold set $\mathcal{X} \in \{\mathcal{C}_t, \mathcal{D}_t, \mathcal{P}_t\}$ of size $|\mathcal{X}|$, the expected squared norm of $\tilde{v}_{\mathcal{X}}$ satisfies:
\begin{equation}
\frac{\operatorname{tr}(\Sigma_v)}{|\mathcal{X}|} \;\le\; \mathbb{E}\!\left[\|\tilde{v}_{\mathcal{X}}\|_2^2\right] \;\le\; \frac{\kappa \operatorname{tr}(\Sigma_v)}{|\mathcal{X}|},
\label{eq:tail_variance_neff}
\end{equation}
Furthermore, the minimum $\operatorname{tr}(\Sigma_v)/|\mathcal{X}|$ is uniquely attained by uniform weights ($w_s \equiv c, \kappa = 1$). In that case, we can substitute $\tilde{v}_{\mathcal{C}} \approx \mathbf{0}$ and $\tilde{v}_{\mathcal{D}} \approx \mathbf{0}$, so all three outputs are positive scalar multiples of $o_t^{\mathrm{token}}$ and coincide under RMSNorm:
\begin{equation}
\operatorname{RMSNorm}\!\big(o_t^{\mathrm{train}}\big) \approx
\operatorname{RMSNorm}\!\big(o_t^{\mathrm{block}}\big) \approx \operatorname{RMSNorm}\!\big(o_t^{\mathrm{token}}\big)
\label{eq:rmsnorm_invariance}
\end{equation}
\end{proposition}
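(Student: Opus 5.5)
The plan is to condition on the query $q_t$ and on the realized weights $\{w_s\}_{s\in\mathcal{X}}$, and to treat these weights as fixed. Under (A2) the weights then act as deterministic coefficients on uncorrelated, mean-zero value vectors. Write $p_s := w_s/Z_{\mathcal{X}}$, so that $\tilde v_{\mathcal{X}} = \sum_{s\in\mathcal{X}} p_s v_s$ with $p_s\ge 0$ and $\sum_s p_s = 1$. Expanding the squared norm gives
\[
\mathbb{E}\big[\|\tilde v_{\mathcal{X}}\|_2^2\big] = \sum_{s,r\in\mathcal{X}} p_s p_r\, \mathbb{E}[\langle v_s, v_r\rangle \mid q_t].
\]
The zero pairwise correlation in (A2) kills every off-diagonal term, and the diagonal terms contribute $\operatorname{tr}(\Sigma_v)$ each. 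The whole problem therefore reduces to bounding the scalar $\|p\|_2^2 = \sum_s p_s^2$, since
\[
\mathbb{E}\big[\|\tilde v_{\mathcal{X}}\|_2^2\big] = \operatorname{tr}(\Sigma_v)\,\|p\|_2^2 .
\]
This quantity is exactly $1/n_{\mathrm{eff}}$ for the weight distribution.

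\textbf{Lower bound.} By Cauchy--Schwarz, $1 = \big(\sum_s p_s\big)^2 \le |\mathcal{X}|\sum_s p_s^2$. Equality holds iff all $p_s$ are equal, i.e.\ $w_s\equiv c$ and $\kappa=1$. This gives both the left inequality and the uniqueness claim for the minimizer.

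\textbf{Upper bound.} I will use $\sum_s p_s^2 \le \max_s p_s\cdot\sum_s p_s = \max_s p_s$. By the definition of $\kappa$ in (A1), $\max_s p_s = \max_s w_s / Z_{\mathcal{X}} = \kappa/|\mathcal{X}|$, which yields the right inequality. One point needs care: (A1) is stated for an arbitrary sub-threshold set. I will apply it separately to each of $\mathcal{C}_t$, $\mathcal{D}_t$ and $\mathcal{P}_t$, reading $\kappa$ as a uniform bound over these sets. If instead $\kappa$ denotes the ratio measured on $\mathcal{P}_t$ only, I would note that the subset ratios may differ, and state the bound with the subset's own $\kappa_{\mathcal{X}}\le\kappa$ as the assumption.

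\textbf{RMSNorm consequence.} By Chebyshev/Markov, $\|\tilde v_{\mathcal{C}}\|$ and $\|\tilde v_{\mathcal{D}}\|$ are $O_P\big(\sqrt{\kappa\operatorname{tr}(\Sigma_v)/|\mathcal{X}|}\big)$. Substituting $\tilde v_{\mathcal{C}},\tilde v_{\mathcal{D}}\approx 0$ into the convex decompositions of the Definition gives $o^{\mathrm{train}}\approx \frac{Z_{\mathcal K}}{Z_{\mathcal K}+Z_{\mathcal C}+Z_{\mathcal D}}o^{\mathrm{token}}$ and $o^{\mathrm{block}}\approx\frac{Z_{\mathcal K}}{Z_{\mathcal K}+Z_{\mathcal C}}o^{\mathrm{token}}$. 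Both coefficients are positive, and $\operatorname{RMSNorm}(\lambda x)=\operatorname{RMSNorm}(x)$ for $\lambda>0$ (ignoring $\epsilon$).

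The main obstacle is this last step, which is only heuristic. Dividing by a positive scalar that may be small, e.g.\ $Z_{\mathcal K}/(Z_{\mathcal K}+Z_{\mathcal D})$ when the floor carries most of the mass, amplifies the residual relative to $o^{\mathrm{token}}$. To make it quantitative I would bound the relative error by
\[
\frac{Z_{\mathcal C}\|\tilde v_{\mathcal C}\|+Z_{\mathcal D}\|\tilde v_{\mathcal D}\|}{Z_{\mathcal K}\|o^{\mathrm{token}}\|},
\]
and use that RMSNorm is Lipschitz away from the origin. I would then state the $\approx$ as holding when this ratio is small: the tail mass may be large, but its $1/\sqrt{|\mathcal X|}$ cancellation must dominate. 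I will present this condition explicitly rather than claim exact invariance.
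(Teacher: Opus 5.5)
Your proposal is correct and follows essentially the same route as the paper: use (A2) to reduce $\mathbb{E}[\|\tilde v_{\mathcal X}\|_2^2]$ to $\operatorname{tr}(\Sigma_v)\sum_s p_s^2$, get the lower bound and the uniqueness claim from Cauchy--Schwarz, and get the upper bound from $\sum_s p_s^2 \le \max_s p_s = \kappa/|\mathcal X|$ via (A1). The last step is also the paper's: substitute $\tilde v_{\mathcal C},\tilde v_{\mathcal D}\approx \mathbf 0$ into the convex decomposition and use the scale invariance of $\operatorname{RMSNorm}$; your caveats (conditioning on the weights, a per-set $\kappa$, and requiring $\frac{Z_{\mathcal C}\|\tilde v_{\mathcal C}\|+Z_{\mathcal D}\|\tilde v_{\mathcal D}\|}{Z_{\mathcal K}\|o_t^{\mathrm{token}}\|}$ to be small) spell out what the paper leaves implicit.
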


\begin{proof}
For any non-empty sub-threshold subset $\mathcal{X} \subseteq \mathcal{P}_t$, define the normalized tail weights $p_s := w_s / Z_{\mathcal{X}}$ for $s \in \mathcal{X}$, so that $p_s > 0$ and $\sum_{s \in \mathcal{X}} p_s = 1$.

We first decompose the variance conditioned on $q_t$:
\begin{align}
\mathbb{E}\!\left[\|\tilde{v}_{\mathcal{X}}\|_2^2\right]
&= \mathbb{E}\!\left[\left\langle \sum_{s \in \mathcal{X}} p_s v_s,\, \sum_{r \in \mathcal{X}} p_r v_r \right\rangle\right] \nonumber \\
&= \sum_{s \in \mathcal{X}} p_s^2 \, \mathbb{E}\!\left[\|v_s\|_2^2\right] + \sum_{\substack{s, r \in \mathcal{X} \\ s \neq r}} p_s p_r \, \mathbb{E}\!\left[\langle v_s, v_r \rangle \mid q_t\right] \nonumber \\
&\overset{\textbf{(A2)}}{=} \operatorname{tr}(\Sigma_v) \sum_{s \in \mathcal{X}} p_s^2
= \operatorname{tr}(\Sigma_v) \, \frac{\sum_{s \in \mathcal{X}} w_s^2}{Z_{\mathcal{X}}^2}.
\label{eq:proof_var_decomp}
\end{align}

The lower bound comes via the Cauchy--Schwarz inequality, which gives:
\begin{align*}
Z_{\mathcal{X}}^2
= \left(\sum_{s \in \mathcal{X}} w_s \cdot 1\right)^{\!2}
&\le \left(\sum_{s \in \mathcal{X}} w_s^2\right)\!\left(\sum_{s \in \mathcal{X}} 1^2\right)
= |\mathcal{X}| \sum_{s \in \mathcal{X}} w_s^2 \\
\implies \quad \frac{\sum_{s \in \mathcal{X}} w_s^2}{Z_{\mathcal{X}}^2} &\ge \frac{1}{|\mathcal{X}|},
\end{align*}
Here equality holds if and only if $(w_s)_{s \in \mathcal{X}}$ is collinear with $\mathbf{1}$, i.e., $w_s \equiv c$ for all $s \in \mathcal{X}$ ($\kappa = 1$). Substituting this into \autoref{eq:proof_var_decomp} gives the lower bound $\mathbb{E}[\|\tilde{v}_{\mathcal{X}}\|_2^2] \ge \operatorname{tr}(\Sigma_v) / |\mathcal{X}|$.

For the upper bound we rely on \textbf{(A1)}, to obtain that
\begin{align*}
\sum_{s \in \mathcal{X}} w_s^2
&\le w_{\max} \sum_{s \in \mathcal{X}} w_s
= \left(\frac{\kappa}{|\mathcal{X}|} Z_{\mathcal{X}}\right) Z_{\mathcal{X}}
= \frac{\kappa}{|\mathcal{X}|} Z_{\mathcal{X}}^2 \\
\implies \quad \frac{\sum_{s \in \mathcal{X}} w_s^2}{Z_{\mathcal{X}}^2} &\le \frac{\kappa}{|\mathcal{X}|}.
\end{align*}
Combining the above establishes \autoref{eq:tail_variance_neff}.

Finally, under SiLU tail tempering, both $\tilde{v}_{\mathcal{C}}$ and $\tilde{v}_{\mathcal{D}}$ concentrate around $\mathbf{0}$ at rate $\mathcal{O}(\sqrt{\kappa/|\mathcal{X}|})$. Substituting $\tilde{v}_{\mathcal{C}} \approx \mathbf{0}$ and $\tilde{v}_{\mathcal{D}} \approx \mathbf{0}$ into \autoref{eq:convex_output_decomp} leaves $o_t^{\mathrm{train}}$ and $o_t^{\mathrm{block}}$ collinear with $o_t^{\mathrm{token}}$:
\begin{align*}
o_t^{\mathrm{train}} &\approx \alpha_{\mathrm{train}} \, o_t^{\mathrm{token}}, \qquad \alpha_{\mathrm{train}} := \frac{Z_{\mathcal{K}}}{Z_{\mathcal{K}} + Z_{\mathcal{C}} + Z_{\mathcal{D}}} > 0, \\
o_t^{\mathrm{block}} &\approx \alpha_{\mathrm{block}} \, o_t^{\mathrm{token}}, \qquad \alpha_{\mathrm{block}} := \frac{Z_{\mathcal{K}}}{Z_{\mathcal{K}} + Z_{\mathcal{C}}} > 0.
\end{align*}
Because $\operatorname{RMSNorm}(x) = \frac{x}{\|x\|_{\mathrm{rms}}} \odot \gamma$ with $\|x\|_{\mathrm{rms}} = \sqrt{\frac{1}{d}\sum_{i=1}^d x_i^2}$ is invariant to scaling by constants we have
\begin{align*}
\operatorname{RMSNorm}\!\big(\alpha \, o_t^{\mathrm{token}}\big)
= \frac{\alpha \, o_t^{\mathrm{token}}}{\alpha \, \|o_t^{\mathrm{token}}\|_{\mathrm{rms}}} \odot \gamma
= \operatorname{RMSNorm}\!\big(o_t^{\mathrm{token}}\big),
\end{align*}
which yields \autoref{eq:rmsnorm_invariance}.
\end{proof}

Next, we show that co-admitting sub-threshold keys $\mathcal{C}_t$ inside loaded GPU blocks moves the pre-norm attention output closer to the training output under ETA tempering, whereas it introduces output error under additive hard masking:
\begin{proposition}
\label{prop:overinclusion_interpolation}
Suppose tail value averages cancel ($\tilde{v}_{\mathcal{C}} = \tilde{v}_{\mathcal{D}} = \mathbf{0}$) and $\|o_t^{\mathrm{token}}\|_2 > 0$. When loaded GPU blocks co-admit sub-threshold keys $\mathcal{C}_t$ ($Z_{\mathcal{C}} > 0$) alongside pruned blocks $\mathcal{D}_t$ ($Z_{\mathcal{D}} > 0$):
\begin{enumerate}[leftmargin=*,itemsep=2pt,topsep=2pt]
    \item \textbf{ETA Tempering ($o_t^{\mathrm{train}} = \tilde{v}_{\mathcal{S}_t}$):} Block-sparse inference $o_t^{\mathrm{block}}$ lies strictly closer to the training output $o_t^{\mathrm{train}}$ than exact token pruning $o_t^{\mathrm{token}}$:
    \begin{equation}
    \big\|o_t^{\mathrm{block}} - o_t^{\mathrm{train}}\big\|_2 \;<\; \big\|o_t^{\mathrm{token}} - o_t^{\mathrm{train}}\big\|_2.
    \label{eq:eta_overinclusion_bound}
    \end{equation}
    \item \textbf{Additive Hard Masking ($o_t^{\mathrm{train,add}} = o_t^{\mathrm{token}}$):} Because sub-threshold keys receive zero weight during training ($Z_{\mathcal{C}}^{\mathrm{train}} = Z_{\mathcal{D}}^{\mathrm{train}} = 0$), co-admitting $\mathcal{C}_t$ at inference strictly \emph{increases} output error:
    \begin{equation}
    \big\|o_t^{\mathrm{block}} - o_t^{\mathrm{train,add}}\big\|_2 \;>\; \big\|o_t^{\mathrm{token}} - o_t^{\mathrm{train,add}}\big\|_2 = 0.
    \label{eq:additive_overinclusion_bound}
    \end{equation}
\end{enumerate}
\end{proposition}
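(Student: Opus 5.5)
Under the cancellation hypothesis $\tilde{v}_{\mathcal{C}}=\tilde{v}_{\mathcal{D}}=\mathbf{0}$, I would first reduce every output to a scalar multiple of the single vector $u := o_t^{\mathrm{token}}$, which is nonzero by assumption. Substituting into the exact convex decomposition \autoref{eq:convex_output_decomp}, recalled in the Definition above, gives
\[
o_t^{\mathrm{train}} = \alpha_{\mathrm{train}}\, u,\qquad o_t^{\mathrm{block}} = \alpha_{\mathrm{block}}\, u,\qquad o_t^{\mathrm{token}} = 1\cdot u,
\]
with $\alpha_{\mathrm{train}} = Z_{\mathcal{K}}/(Z_{\mathcal{K}}+Z_{\mathcal{C}}+Z_{\mathcal{D}})$ and $\alpha_{\mathrm{block}} = Z_{\mathcal{K}}/(Z_{\mathcal{K}}+Z_{\mathcal{C}})$. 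This uses $Z_{\mathcal{K}}>0$. I would justify that from the $s=t$ diagonal bypass: either $t\in\mathcal{K}_t$, or I simply assume $\mathcal{K}_t$ is nonempty, which is implicit since $o_t^{\mathrm{token}}$ must be defined. All weights $w_s=\exp(\tilde S_s)$ are strictly positive, so every $Z$ of a nonempty set is positive.

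With everything collinear, each distance is a scalar difference times $\|u\|_2$. For part 1, I compare $|\alpha_{\mathrm{block}}-\alpha_{\mathrm{train}}|$ with $|1-\alpha_{\mathrm{train}}|$. Write $a=Z_{\mathcal{K}}$, $c=Z_{\mathcal{C}}>0$, $d=Z_{\mathcal{D}}>0$. Then $1>\alpha_{\mathrm{block}}>\alpha_{\mathrm{train}}>0$, because the denominators strictly increase as $c$ and then $d$ are added. Hence
\[
0<\alpha_{\mathrm{block}}-\alpha_{\mathrm{train}}<1-\alpha_{\mathrm{train}},
\]
and multiplying by $\|u\|_2>0$ yields \autoref{eq:eta_overinclusion_bound}. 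Here $Z_{\mathcal{C}}>0$ gives the strictness $\alpha_{\mathrm{block}}<1$, and $Z_{\mathcal{D}}>0$ gives $\alpha_{\mathrm{train}}<\alpha_{\mathrm{block}}$. Only the first is actually needed for the strict inequality. The second only ensures the left side is nonzero.

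For part 2, the reference point is $o_t^{\mathrm{train,add}}=u$, so the right-hand side of \autoref{eq:additive_overinclusion_bound} is $0$. The left-hand side is $(1-\alpha_{\mathrm{block}})\|u\|_2$, which is strictly positive since $Z_{\mathcal{C}}>0$ forces $\alpha_{\mathrm{block}}<1$. The inference-time weights are still the same gated weights $w_s$ used for the loaded blocks, so $o_t^{\mathrm{block}}$ has the same form as before.

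The main obstacle is interpretive rather than computational. Under the cancellation assumption, $o_t^{\mathrm{block}}=\alpha_{\mathrm{block}}u$ in part 2 is only a rescaling of $u$, so the claimed degradation is invisible after RMSNorm. I would state explicitly that both inequalities concern pre-norm Euclidean distances, as the statement says.

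I would also check that part 2 does not secretly require $\tilde v_{\mathcal{C}}\neq\mathbf{0}$. Without cancellation the error in part 2 is $\tfrac{Z_{\mathcal{C}}}{Z_{\mathcal{K}}+Z_{\mathcal{C}}}\|\tilde v_{\mathcal{C}}-u\|_2$, which is positive unless $\tilde v_{\mathcal{C}}=u$. I would note this as a remark showing the strict inequality is robust beyond the idealized hypothesis.
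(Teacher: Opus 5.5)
Your proof is correct and follows essentially the same route as the paper. Both substitute the cancellation hypothesis into the convex decomposition, making all three outputs positive multiples of $o_t^{\mathrm{token}}$, and then use $0<\alpha_{\mathrm{train}}<\alpha_{\mathrm{block}}<1$ to compare the scalar gaps in each part. Your side remarks are accurate refinements the paper does not state: only $Z_{\mathcal{C}}>0$ is needed for strictness in part 1, $Z_{\mathcal{K}}>0$ follows from $o_t^{\mathrm{token}}$ being defined, and the part-2 discrepancy is a pure rescaling that RMSNorm removes.
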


\begin{proof}
Under tail value cancellation ($\tilde{v}_{\mathcal{C}} = \tilde{v}_{\mathcal{D}} = \mathbf{0}$), \autoref{eq:convex_output_decomp} simplifies the token-sparse, block-sparse, and ETA training outputs to positive scalar multiples of $o_t^{\mathrm{token}}$:
\begin{align}
o_t^{\mathrm{token}} = 1 \cdot o_t^{\mathrm{token}}, \qquad
o_t^{\mathrm{block}} = \frac{Z_{\mathcal{K}}}{Z_{\mathcal{K}} + Z_{\mathcal{C}}}\, o_t^{\mathrm{token}}, \qquad
o_t^{\mathrm{train}} = \frac{Z_{\mathcal{K}}}{Z_{\mathcal{K}} + Z_{\mathcal{C}} + Z_{\mathcal{D}}}\, o_t^{\mathrm{token}}.
\label{eq:collinear_outputs_proof}
\end{align}

Since $Z_{\mathcal{K}}, Z_{\mathcal{C}}, Z_{\mathcal{D}} > 0$, the partition denominators satisfy $Z_{\mathcal{K}} < Z_{\mathcal{K}} + Z_{\mathcal{C}} < Z_{\mathcal{K}} + Z_{\mathcal{C}} + Z_{\mathcal{D}}$. We thus have:
\begin{equation*}
0 \;<\; \frac{Z_{\mathcal{K}}}{Z_{\mathcal{K}} + Z_{\mathcal{C}} + Z_{\mathcal{D}}} \;<\; \frac{Z_{\mathcal{K}}}{Z_{\mathcal{K}} + Z_{\mathcal{C}}} \;<\; 1.
\end{equation*}
Thus $o_t^{\mathrm{block}}$ interpolates strictly between $o_t^{\mathrm{train}}$ and $o_t^{\mathrm{token}}$ along $o_t^{\mathrm{token}}$, so:
\begin{align*}
\big\|o_t^{\mathrm{block}} - o_t^{\mathrm{train}}\big\|_2
&= \left|\frac{Z_{\mathcal{K}}}{Z_{\mathcal{K}} + Z_{\mathcal{C}}} - \frac{Z_{\mathcal{K}}}{Z_{\mathcal{K}} + Z_{\mathcal{C}} + Z_{\mathcal{D}}}\right| \|o_t^{\mathrm{token}}\|_2 \\
&= \left(\frac{Z_{\mathcal{K}}}{Z_{\mathcal{K}} + Z_{\mathcal{C}}} - \frac{Z_{\mathcal{K}}}{Z_{\mathcal{K}} + Z_{\mathcal{C}} + Z_{\mathcal{D}}}\right) \|o_t^{\mathrm{token}}\|_2 \\
&< \left(1 - \frac{Z_{\mathcal{K}}}{Z_{\mathcal{K}} + Z_{\mathcal{C}} + Z_{\mathcal{D}}}\right) \|o_t^{\mathrm{token}}\|_2
= \big\|o_t^{\mathrm{token}} - o_t^{\mathrm{train}}\big\|_2,
\end{align*}
which establishes \autoref{eq:eta_overinclusion_bound}.

Under additive hard masking, exact token pruning matches the training output ($\|o_t^{\mathrm{token}} - o_t^{\mathrm{train,add}}\|_2 = 0$), whereas co-admitting $\mathcal{C}_t$ inside loaded blocks ($Z_{\mathcal{C}} > 0$) shifts $o_t^{\mathrm{block}}$ away from $o_t^{\mathrm{train,add}}$:
\begin{align*}
\big\|o_t^{\mathrm{block}} - o_t^{\mathrm{train,add}}\big\|_2
&= \left\|\frac{Z_{\mathcal{K}}}{Z_{\mathcal{K}} + Z_{\mathcal{C}}}\, o_t^{\mathrm{token}} - o_t^{\mathrm{token}}\right\|_2 \\
&= \left(1 - \frac{Z_{\mathcal{K}}}{Z_{\mathcal{K}} + Z_{\mathcal{C}}}\right) \|o_t^{\mathrm{token}}\|_2\\
&= \frac{Z_{\mathcal{C}}}{Z_{\mathcal{K}} + Z_{\mathcal{C}}}\, \|o_t^{\mathrm{token}}\|_2 \;>\; 0
= \big\|o_t^{\mathrm{token}} - o_t^{\mathrm{train,add}}\big\|_2,
\end{align*}
which establishes \autoref{eq:additive_overinclusion_bound}.
\end{proof}

\subsection{Empirical Verification of Residual Stream Alignment and \autoref{prop:uniform_floor_cancellation}}
\label{appx:empirical_alignment}
We empirically test the predictions of \autoref{prop:uniform_floor_cancellation} across model scales on held-out C4 sequences. Specifically, we examine two theoretical claims:
\begin{enumerate}
    \item \textbf{Tail Decay Rate:} The centered pruned-tail value average $\|\bar{v}_{\mathcal{P}_t} - \boldsymbol{\mu}_v\|_2 / \sqrt{\operatorname{Tr}(\Sigma_v)}$ decays at a rate of $1/\sqrt{m}$ as the pruned set size $m = |\mathcal{P}_t|$ increases.
    \item \textbf{Radial Scaling and Direction Preservation:} Omitting pruned blocks $\mathcal{D}_t$ (while retaining SiLU-tempered weights on loaded blocks $\mathcal{K}_t \sqcup \mathcal{C}_t$) scales the pre-norm attention output radially by $(\alpha_{\mathrm{block}} / \alpha_{\mathrm{train}})$ while preserving the residual direction ($\mathcal{S}_{\cos} \approx 1$).
\end{enumerate}
To validate these behaviors, we evaluate both the 12-layer 126M and 22-layer 1.45B ETA models ($b=64, \mathtt{lwb}=2$, sub-block envelope).

\paragraph{Direct Empirical Verification of \autoref{prop:uniform_floor_cancellation} on the 1.45B Model.}
For each head and layer of the 1.45B model during C4 decoding, we extract the pruned value vectors $\{v_j\}_{j \in \mathcal{P}_t}$, center them by the sequence value mean $\boldsymbol{\mu}_v$, and measure the normalized Euclidean norm $\|\bar{v}_{\mathcal{P}_t} - \boldsymbol{\mu}_v\|_2 / \sqrt{\operatorname{Tr}(\Sigma_v)}$ as a function of the pruned set size $m = |\mathcal{P}_t| \in \{16, 32, 64, 128, 256, 384\}$. 

As reported in \autoref{tab:prop1_verification_145b}, the empirical norm tracks the theoretical $1/\sqrt{m}$ bound closely at small $m$ and decays even faster than $1/\sqrt{m}$ at larger $m$. Simultaneously, the active partition fraction satisfies $\alpha_{\mathrm{train}} = \mathbf{0.9973 \pm 0.0153}$ ($\text{median} = 1.0000$, $25\text{th percentile} = 0.9998$), confirming that $(1-\alpha_{\mathrm{train}})\|\bar{v}_{\mathcal{P}_t}\|_2 \ll 10^{-3}$.

\begin{table}[!h]
\centering
\small
\caption{\textbf{Empirical verification of \autoref{prop:uniform_floor_cancellation} ($1/\sqrt{m}$ background value cancellation) on the 22-layer 1.45B ETA model.} Normalized centered norm $\|\bar{v}_{\mathcal{P}_t} - \boldsymbol{\mu}_v\|_2 / \sqrt{\operatorname{Tr}(\Sigma_v)}$ measured across pruned set sizes $m = |\mathcal{P}_t|$ on C4, compared against the theoretical $1/\sqrt{m}$ rate.}
\label{tab:prop1_verification_145b}
\begin{tabular}{lcccccc}
\toprule
\textbf{Pruned Set Size $m = |\mathcal{P}_t|$} & $m=16$ & $m=32$ & $m=64$ & $m=128$ & $m=256$ & $m=384$ \\
\midrule
\textbf{Theoretical Rate ($1/\sqrt{m}$)} & $0.2500$ & $0.1768$ & $0.1250$ & $0.0884$ & $0.0625$ & $0.0510$ \\
\textbf{Empirical 1.45B Norm} & $\mathbf{0.2504}$ & $\mathbf{0.1750}$ & $\mathbf{0.1202}$ & $\mathbf{0.0799}$ & $\mathbf{0.0448}$ & $\mathbf{0.0274}$ \\
\bottomrule
\end{tabular}
\end{table}

\paragraph{Layer-by-Layer Residual Stream Alignment (126M 12-Layer and 1.45B 22-Layer Models).}
Let $\mathbf{x}_{\text{train}}^{(l)}$ and $\mathbf{x}_{\text{infer}}^{(l)}$ denote the residual stream hidden states at layer $l$ produced by full-sequence SiLU-gated attention ($\mathcal{S}_t$) and block-sparse SiLU-gated decoding ($\mathcal{K}_t \sqcup \mathcal{C}_t$), respectively. At each decoding step $t$, we compute the directional cosine similarity and the relative Euclidean norm shift:
\begin{equation}
\mathcal{S}_{\cos}(l) = \frac{\langle \mathbf{x}_{\text{train}}^{(l)}, \mathbf{x}_{\text{infer}}^{(l)} \rangle}{\|\mathbf{x}_{\text{train}}^{(l)}\|_2 \, \|\mathbf{x}_{\text{infer}}^{(l)}\|_2},
\qquad 
\mathcal{E}_{\ell_2}(l) = \frac{\|\mathbf{x}_{\text{train}}^{(l)} - \mathbf{x}_{\text{infer}}^{(l)}\|_2}{\|\mathbf{x}_{\text{train}}^{(l)}\|_2}.
\end{equation}

\begin{figure}[h]
\centering
\includegraphics[width=\linewidth]{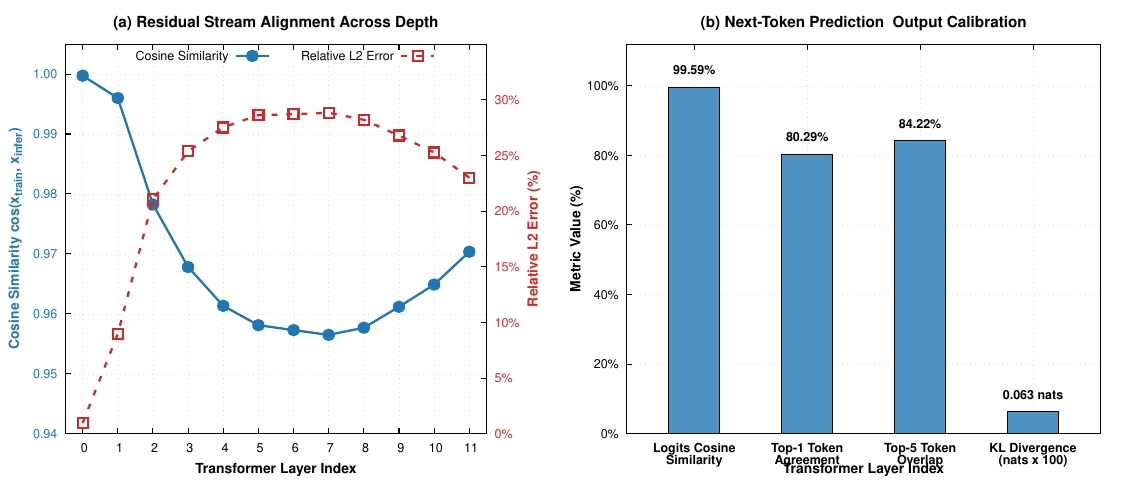}
\caption{\textbf{Internal representation and output distribution alignment between full-sequence SiLU-gated attention ($\mathcal{S}_t$) and block-sparse decoding ($\mathcal{K}_t \sqcup \mathcal{C}_t$).} (a) Layer-by-layer residual stream trajectory alignment across network depth. (b) Final language modeling head calibration metrics.}
\label{fig:train_vs_infer_alignment_main}
\end{figure}

As shown in \autoref{fig:train_vs_infer_alignment_main} and \autoref{tab:representation_alignment}, the empirical measurements confirm our theoretical decomposition at both the 126M (12-layer) and 1.45B (22-layer) scales:
\begin{itemize}[leftmargin=*,itemsep=2pt,topsep=2pt]
    \item \textbf{22-Layer 1.45B Alignment:} Across all $22$ layers ($L0\text{--}L21$), the 1.45B residual stream maintains an overall mean directional cosine similarity of $\mathcal{S}_{\cos} = \mathbf{0.9917}$ ($1.0000$ at $L0\text{--}L2$, $0.9910$ at $L9$, and $0.9888$ at $L21$), while the final language modeling head achieves a logit cosine similarity of $\mathbf{0.9944}$ and a KL divergence of only $\mathbf{0.0379\text{ nats}}$.
    \item \textbf{12-Layer 126M Alignment:} Under aggressive sparsity ($\lambda = 0.05$), the 126M model maintains a mean directional cosine similarity of $\mathcal{S}_{\cos} = \mathbf{0.9691}$ ($0.9997$ at Layer~0) with a radial norm shift of $\mathcal{E}_{\ell_2} = \mathbf{22.77\%}$ absorbed by $\operatorname{RMSNorm}$, achieving a logit cosine similarity of $\mathbf{0.9959}$ and a KL divergence of $\mathbf{0.0631\text{ nats}}$.
\end{itemize}

\begin{table}[!h]
\centering
\small
\caption{\textbf{Layer-wise residual stream alignment ($\mathcal{S}_{\cos}$) and relative $\ell_2$ shift ($\mathcal{E}_{\ell_2}$) across both the 12-layer 126M and 22-layer 1.45B ETA models} on held-out C4 sequences during autoregressive block-sparse decoding.}
\label{tab:representation_alignment}
\setlength{\tabcolsep}{4.5pt}
\resizebox{\linewidth}{!}{%
\begin{tabular}{lcc|lcc|lcc}
\toprule
\multicolumn{3}{c|}{\textbf{126M Model (Layers 0--11)}} & \multicolumn{6}{c}{\textbf{1.45B Model (Layers 0--21, $b=64$ Sub-Block Envelope + $\mathtt{lwb}=2$)}} \\
\midrule
\textbf{Layer} & $\mathcal{S}_{\cos}$ & $\mathcal{E}_{\ell_2}$ & \textbf{Layer} & $\mathcal{S}_{\cos}$ & $\mathcal{E}_{\ell_2}$ & \textbf{Layer} & $\mathcal{S}_{\cos}$ & $\mathcal{E}_{\ell_2}$ \\
\midrule
Layer 0  & 0.9997 & \phantom{0}0.96\% & Layer 0  & 1.0000 & \phantom{0}0.01\% & Layer 11 & 0.9881 & 13.56\% \\
Layer 1  & 0.9960 & \phantom{0}8.98\% & Layer 1  & 1.0000 & \phantom{0}0.34\% & Layer 12 & 0.9867 & 14.19\% \\
Layer 2  & 0.9782 & 21.06\% & Layer 2  & 1.0000 & \phantom{0}0.41\% & Layer 13 & 0.9863 & 14.38\% \\
Layer 3  & 0.9678 & 25.37\% & Layer 3  & 0.9999 & \phantom{0}1.26\% & Layer 14 & 0.9860 & 14.53\% \\
Layer 4  & 0.9613 & 27.52\% & Layer 4  & 0.9997 & \phantom{0}2.19\% & Layer 15 & 0.9863 & 14.19\% \\
Layer 5  & 0.9581 & 28.62\% & Layer 5  & 0.9997 & \phantom{0}2.35\% & Layer 16 & 0.9856 & 14.47\% \\
Layer 6  & 0.9573 & 28.70\% & Layer 6  & 0.9975 & \phantom{0}6.62\% & Layer 17 & 0.9854 & 14.46\% \\
Layer 7  & 0.9565 & 28.83\% & Layer 7  & 0.9932 & 10.22\% & Layer 18 & 0.9858 & 14.22\% \\
Layer 8  & 0.9577 & 28.20\% & Layer 8  & 0.9925 & 10.69\% & Layer 19 & 0.9865 & 13.96\% \\
Layer 9  & 0.9612 & 26.78\% & Layer 9  & 0.9910 & 11.80\% & Layer 20 & 0.9882 & 13.20\% \\
Layer 10 & 0.9649 & 25.27\% & Layer 10 & 0.9891 & 12.82\% & Layer 21 & 0.9888 & 12.83\% \\
Layer 11 & 0.9704 & 23.00\% & \multicolumn{3}{c|}{---} & \textbf{1.45B Mean} & \textbf{0.9917} & \textbf{9.94\%} \\
\midrule
\textbf{126M Mean} & \textbf{0.9691} & \textbf{22.77\%} & \multicolumn{6}{l}{\textbf{1.45B Logit Cos:} $\mathbf{0.9944}$ \quad $|$ \quad \textbf{KL Div:} $\mathbf{0.0379\text{ nats}}$ \quad $|$ \quad \textbf{Top-1 Match:} $\mathbf{100.0\%}$} \\
\bottomrule
\end{tabular}%
}
\end{table}

\section{Mechanistic Analysis: Elimination of Attention Sinks and Layer Specialization}
\label{appx:sink_layer_analysis}
To analyze ETA's behavior across network depth and sequence position, we record active token densities and initial-token softmax mass ($t \in [0, 3]$) across all $22$ layers and $32$ query heads ($704$ heads total) of our 1.45B model on held-out FineWeb sequences ($L = 2048$).

\paragraph{Mathematical Mechanism of Attention Sink Elimination.}
In standard dense or additive hard-masked attention ($S_{ts} \to -\infty$ for $s \notin \mathcal{I}_t$), attention probabilities must sum to unity over the active set ($\sum_{s \in \mathcal{I}_t} P_{ts} = 1$). When a query $q_t$ has no relevant context matches, softmax still forces $100\%$ of the probability mass onto active keys. To prevent noisy context mixtures during such ``no-op'' steps, standard Transformers route unused mass onto initial tokens ($t \in [0, 3]$), forming persistent \emph{attention sinks} \citep{xiao2024efficient} that absorb over $50\%$ of total attention mass (\autoref{fig:mechanisms}, Left; \autoref{subsec:mechanistic_interpretation}). Consequently, post-hoc eviction methods (e.g., StreamingLLM, $\text{H}_2\text{O}$) collapse unless initial sink tokens are heuristically pinned.

Under ETA's multiplicative gating ($\tilde{S}_{ts} = S_{ts} \cdot \sigma(\beta(S_{ts} - \tau_t))$), sub-threshold logits contract toward $0$ rather than $-\infty$. Because $\exp(0) = 1$, the training-time partition function $Z_t$ decomposes as:
$$Z_t = \sum_{s \in \mathcal{I}_t} \exp(\tilde{S}_{ts}) + \sum_{s \notin \mathcal{I}_t} \exp(0) = \sum_{s \in \mathcal{I}_t} \exp(\tilde{S}_{ts}) + (t - |\mathcal{I}_t|).$$
The $(t - |\mathcal{I}_t|)$ sub-threshold keys act as a built-in, uniform probability reservoir ($\tilde{P}_{ts} \approx 1/t$) whenever $\tau_t$ rises on uninformative queries, removing any optimization pressure to create localized sinks on initial tokens. As shown in \autoref{fig:mechanisms} (Left), average probability mass on tokens $t \in [0, 3]$ drops from $>50\%$ in dense attention to $<2\%$ under ETA across all sequence lengths, without heuristic sink pinning.

\paragraph{Autonomous Layer Specialization Across Depth.}
The $22 \times 32$ layer-head density profile (\autoref{fig:mechanisms}, Right; \autoref{subsec:mechanistic_interpretation}) shows that ETA organizes its attention budget into three stages without layer-specific tuning:
\begin{enumerate}[leftmargin=*,itemsep=1pt,topsep=2pt]
    \item \textbf{Broad Surface Aggregation (Layer 0):} Retains $\approx 96\%$ active density across heads to mix local and global surface features.
    \item \textbf{Mid-Network Compression (Layers 3--14):} Compresses context sharply (reaching $<8\%$ mean density at Layer~7), where heads specialize into targeted retrieval and syntactic routing.
    \item \textbf{Deep Predictive Synthesis (Layers 15--21):} Restores density to $35\%\text{--}48\%$ to aggregate distributed representations for next-token prediction.
\end{enumerate}

\section{NSA vs.\ ETA: Experimental Details}
\label{appx:nsa_eta}
Both 126M models were pretrained from scratch on FineWeb for $10{,}000$ steps at sequence length $L=1024$ on a single NVIDIA H100 80GB GPU (effective batch size $16$ sequences / $16{,}384$ tokens per step, $163.8\text{M}$ tokens total). We use AdamW ($\beta_1=0.9, \beta_2=0.95, \epsilon=10^{-8}$, weight decay $0.1$) with cosine learning rate decay (peak $\text{lr}=3\times 10^{-4}$, $500$ warmup steps). ETA includes our sparsity regularizer ($\lambda=0.05$), while NSA \citep{yuan2025native} trains with standard cross-entropy. To isolate the attention mechanism, both models share an identical 12-layer LLaMA-style backbone (\autoref{tab:arch_params}).

\begin{table}[h]
\centering
\scriptsize
\caption{\textbf{Backbone Architectural Hyperparameters.} All backbone dimensions and positional encodings are identical between the two models. ETA adds $12 \times (768 \times 12 + 12) = 0.11$M threshold-predictor parameters ($<0.1\%$ of total parameters).}
\label{tab:arch_params}
\begin{tabular}{lcc}
\toprule
\textbf{Hyperparameter} & \textbf{NSA 126M} & \textbf{ETA 126M} \\
\midrule
Layers ($L$) / Hidden Size ($d_{\text{model}}$) / FFN ($d_{\text{ffn}}$) & $12$ / $768$ / $2048$ & $12$ / $768$ / $2048$ \\
Query Heads ($H_Q$) / KV Heads ($H_{KV}$) / Head Dim ($d_k$) & $12$ / $4$ / $64$ & $12$ / $4$ / $64$ \\
Total Parameters (excl.\ output embedding) & $125.85\text{M}$ ($100.66\text{M}$) & $125.85\text{M}$ ($100.66\text{M}$) \\
Vocabulary Size / RoPE Base ($\theta$) & $32{,}768$ / $5\times 10^5$ (LLaMA-3 $8\times$) & $32{,}768$ / $5\times 10^5$ (LLaMA-3 $8\times$) \\
\bottomrule
\end{tabular}
\end{table}

\subsection{Training Convergence and Wall-Clock Dynamics}
As shown in \autoref{fig:loss_convergence}, ETA converges faster and to a lower cross-entropy loss than NSA ($\mathbf{4.370\text{--}4.399}$, $\text{PPL} = 81.37$ vs.\ $\mathbf{4.508}$, $\text{PPL} = 90.78$). On identical H100 hardware, ETA's fused Triton forward/backward kernel matches NSA's training step time within $0.08\%$ ($\mathbf{0.5315\text{ s/step}}$ vs.\ $\mathbf{0.5311\text{ s/step}}$), while avoiding NSA's three separate attention branches (compression, selection, and sliding window) and auxiliary gating MLPs.

\begin{figure}[h]
\centering
\begin{tikzpicture}
\begin{axis}[
    width=0.72\columnwidth,
    height=4.8cm,
    xlabel={\small Training Steps},
    ylabel={\small Pretraining Cross-Entropy Loss},
    xmin=0, xmax=10000,
    ymin=4.2, ymax=7.8,
    grid=both,
    grid style={line width=.1pt, draw=gray!20},
    major grid style={line width=.2pt,draw=gray!40},
    legend pos=north east,
    legend cell align={left},
    legend style={font=\footnotesize, fill=white, fill opacity=0.85},
    tick label style={font=\footnotesize},
    label style={font=\small},
    every axis plot/.append style={thick}
]
\addplot[color=nsaorange, mark=none, line width=1.15pt] coordinates {
    (180, 7.574) (200, 7.436) (300, 7.014) (400, 6.615) (500, 6.384)
    (600, 6.205) (700, 6.059) (1000, 5.642) (1500, 5.385) (2000, 5.145)
    (2100, 5.141) (2200, 5.079) (2300, 5.052) (2380, 5.051) (3000, 4.895)
    (3800, 4.742) (4080, 4.709) (5000, 4.640) (6000, 4.585) (7000, 4.548)
    (8000, 4.525) (9000, 4.512) (9800, 4.502) (9900, 4.492) (10000, 4.508)
};
\addlegendentry{NSA 126M ($0.5311$\,s/step; Final: 4.508)}
\addplot[color=etablue, mark=none, line width=1.15pt] coordinates {
    (180, 7.445) (200, 7.326) (300, 6.736) (400, 6.425) (500, 6.219)
    (600, 6.019) (700, 5.872) (900, 5.668) (1200, 5.460) (1500, 5.264)
    (1800, 5.110) (2100, 4.990) (2400, 4.923) (2700, 4.866) (3000, 4.795)
    (3300, 4.727) (3600, 4.697) (3900, 4.636) (4200, 4.629) (4500, 4.570)
    (4800, 4.556) (5100, 4.521) (5400, 4.480) (5700, 4.460) (6000, 4.456)
    (6300, 4.462) (6600, 4.396) (6900, 4.434) (7200, 4.406) (7500, 4.437)
    (7800, 4.399) (8100, 4.399) (8400, 4.380) (8700, 4.407) (9000, 4.424)
    (9300, 4.393) (9600, 4.396) (9800, 4.370) (10000, 4.399)
};
\addlegendentry{ETA 126M ($0.5315$\,s/step; Final: 4.370--4.399)}
\end{axis}
\end{tikzpicture}
\caption{\textbf{Pretraining Loss Convergence and Matched-H100 Throughput.} Training loss trajectories across $10{,}000$ steps ($164$M tokens, $L=1024$) on identical H100 GPUs. ETA achieves $\Delta \text{Loss} = -0.11\text{ to }-0.14$ ($\Delta \text{PPL} = -9.4$) below NSA at matched step latency ($0.5315$\,s/step vs.\ $0.5311$\,s/step).}
\label{fig:loss_convergence}
\end{figure}

\subsection{Autoregressive Downstream Perplexity}
We evaluate both checkpoints at $L=2048$ ($2\times L_{\text{train}}$) under $1\%$ prompt prefill and $99\%$ autoregressive KV-cached decoding across FineWeb, C4, and WikiText-2 ($a=0.4, z=2.0$; \autoref{tab:ppl_results}). ETA achieves lower perplexity on FineWeb ($99.80$ vs.\ $103.55$) and WikiText-2 ($158.64$ vs.\ $161.65$) while pruning ${\approx}53\%$ of attention compute, and matches NSA on C4 ($100.73$ vs.\ $100.18$).

\begin{table}[h]
\centering
\small
\caption{\textbf{Perplexity (PPL) and Sparsity Comparison.} $126$M models, evaluated under 1\% prompt prefill + 99\% autoregressive decode (batch size $4$, $L=2048$).}
\label{tab:ppl_results}
\begin{tabular}{lcccc}
\toprule
\textbf{Dataset} & \textbf{NSA PPL} & \textbf{ETA PPL} & \textbf{$\Delta$ PPL} & \textbf{ETA Sparsity} \\
\midrule
\textbf{FineWeb (10BT)} & $103.55 \pm 26.51$ \small{[$\pm 1.87$]} & $\mathbf{99.80 \pm 21.71}$ \small{[$\pm 1.54$]} & $\mathbf{-3.75}$ & \textbf{53.60\%} \\
\textbf{C4} & $\mathbf{100.18 \pm 16.08}$ \small{[$\pm 1.14$]} & $100.73 \pm 17.96$ \small{[$\pm 1.27$]} & $+0.55$ & \textbf{53.89\%}  \\
\textbf{Wikitext-2} & $161.65 \pm 22.50$ \small{[$\pm 1.24$]} & $\mathbf{158.64 \pm 22.30}$ \small{[$\pm 1.17$]} & $\mathbf{-3.01}$ & \textbf{51.17\%} \\
\bottomrule
\end{tabular}
\end{table}

\subsection{Autoregressive Decoding Latency and Throughput}
We benchmark decoding latency and throughput across batch sizes $B \in \{16, 32\}$ and context lengths up to $L = 32\text{K}$ ($10\%$ prefill / $90\%$ decode; \autoref{tab:decoding_efficiency}). By replacing NSA's multi-branch kernel launches with a single fused block-sparse pass, ETA reduces per-step decode latency across all configurations—reaching a $2.72\times$ speedup at $B = 32, L = 8\text{K}$ ($25.35$\,ms vs.\ $69.07$\,ms; $1{,}262$ vs.\ $463$\,tok/s) and a $1.63\times$ speedup at $B = 16, L = 32\text{K}$ ($37.36$\,ms vs.\ $60.93$\,ms).

\begin{table}[h]
\centering
\small
\caption{\textbf{Decoding Efficiency \& Throughput Sweep.} Evaluated under 10\% prompt prefill + 90\% autoregressive decode across sequence lengths up to 32,768 tokens.}
\label{tab:decoding_efficiency}
\resizebox{\linewidth}{!}{%
\begin{tabular}{cclccc}
\toprule
\textbf{Batch ($B$)} & \textbf{Context ($L$)} & \textbf{Model} & \textbf{Prefill (TTFT)} & \textbf{Step Latency (ms)} & \textbf{Throughput (tok/s)} \\
\midrule
16 & 16,384 (16K) & NSA & 0.992 s & $61.319 \pm 2.248$ & 260.93 \\
16 & 16,384 (16K) & \textbf{ETA} & \textbf{0.319 s} & $\mathbf{24.254 \pm 6.086}$ & \textbf{659.68} \\
\midrule
16 & 32,768 (32K) & NSA & 1.724 s & $60.932 \pm 2.927$ & 262.59 \\
16 & 32,768 (32K) & \textbf{ETA} & \textbf{0.649 s} & $\mathbf{37.356 \pm 12.475}$ & \textbf{428.31} \\
\midrule
32 & 8,192 (8K) & NSA & 0.174 s & $69.069 \pm 1.573$ & 463.31 \\
32 & 8,192 (8K) & \textbf{ETA} & 0.294 s & $\mathbf{25.353 \pm 5.808}$ & \textbf{1,262.16} \\
\midrule
32 & 16,384 (16K) & NSA & 0.407 s & $69.102 \pm 1.359$ & 463.09 \\
32 & 16,384 (16K) & \textbf{ETA} & 0.502 s & $\mathbf{37.737 \pm 12.720}$ & \textbf{847.98} \\
\midrule
32 & 32,768 (32K) & NSA & 0.465 s & $66.193 \pm 4.728$ & 483.44 \\
32 & 32,768 (32K) & \textbf{ETA} & 1.149 s & $\mathbf{61.291 \pm 23.937}$ & \textbf{522.10} \\
\bottomrule
\end{tabular}%
}
\end{table}

\section{Extended Block Granularity, Pareto, and RULER Benchmarks}
\label{appx:block_mechanics}

This appendix provides supplementary figures and full sweep tables for Section~\ref{sec:blocksize} and Section~\ref{subsec:needle_retrieval}.

\subsection{GQA Union Loading and Variance Dilution at Coarse Block Sizes}
\label{appx:gqa_union_loading}
\label{appx:screening_index_quality}
\label{appx:block_size_sweeps}
In Grouped-Query Attention (GQA), each KV head is shared by $G = H_Q / H_{KV}$ query heads, so the GPU loads the union of active blocks $\mathcal{U}_g = \bigcup_{q \in \mathcal{G}_g} \mathcal{S}_q$ from HBM. On the 126M model ($G=3$), \autoref{fig:gqa_union_bars} shows that the active group union expands by only $1.49\times$ over per-head compute density ($50.8\%$ union vs.\ $34.1\%$ head density), as query heads sharing a KV head attend to overlapping context spans.

\begin{figure}[h]
\centering
\begin{tikzpicture}
\begin{axis}[
    ybar,
    bar width=8pt,
    width=0.70\columnwidth,
    height=4.4cm,
    ylabel={\small Active Block Density (\%)},
    symbolic x coords={Group 0, Group 1, Group 2, Group 3, Mean},
    xtick=data,
    ymin=0, ymax=65,
    legend style={
        at={(0.5, 1.05)},
        anchor=south,
        legend columns=1,
        font=\tiny,
        fill=white,
        fill opacity=0.9,
        draw=none
    },
    legend cell align={left},
    tick label style={font=\footnotesize},
    label style={font=\small},
    ymajorgrids=true,
    grid style={line width=.1pt, draw=gray!20},
    major grid style={line width=.2pt,draw=gray!40},
    enlarge x limits=0.18,
    nodes near coords,
    nodes near coords style={font=\tiny, /pgf/number format/fixed, /pgf/number format/precision=1}
]
\addplot[fill=etablue!85, draw=black!70] coordinates {
    (Group 0, 36.0) (Group 1, 32.0) (Group 2, 33.0) (Group 3, 35.3) (Mean, 34.1)
};
\addlegendentry{Head Compute Density ($D_{\text{head}}$, FLOPs)}

\addplot[fill=green!25, draw=black!70] coordinates {
    (Group 0, 54.8) (Group 1, 47.2) (Group 2, 49.7) (Group 3, 51.4) (Mean, 50.8)
};
\addlegendentry{Actual GQA Union Density ($D_{\text{union}}$, HBM I/O)}
\end{axis}
\end{tikzpicture}
\caption{\textbf{GQA Union Loading vs.\ Compute Sparsity across KV Groups} on C4 decoding (126M model).}
\label{fig:gqa_union_bars}
\end{figure}

While larger blocks ($b=64$) reduce GQA union overhead and align with Tensor Core tiles (\autoref{tab:blocksize_pareto_1}), computing a single flat variance $\sigma_{B,i}^2 = \frac{1}{b}\sum_{k \in B} (k_i - \mu_{B,i})^2$ across all $64$ tokens dilutes localized $4$-token spikes by up to $16\times$. As shown in \autoref{tab:index_quality}, scaling $b$ from $4$ to $64$ under flat variance increases block false negatives from $3.4\%$ to $27.5\%$ and missed softmax mass from $10.2\%$ to $36.3\%$.

\begin{table}[h]
\centering
\caption{\textbf{Effect of variance dilution under flat block variance ($z=2.0, a=0.4$)} before applying the sub-block envelope ($1.45$B model on FineWeb, $L=2048$). Missed mass is the fraction of softmax probability mass in rejected blocks.}
\label{tab:index_quality}
\begin{tabular}{rrrrrr}
\toprule
$b$ & False neg. & False pos. & Token recall & Missed mass & PPL \\
\midrule
$4$  & $\mathbf{3.4\%}$  & $30.8\%$ & $\mathbf{98.3\%}$ & $\mathbf{10.2\%}$ & $\mathbf{17.38}$ \\
$16$ & $11.7\%$ & $16.8\%$ & $96.9\%$ & $21.5\%$ & $18.47$ \\
$64$ & $27.5\%$ & $\mathbf{5.6\%}$  & $95.3\%$ & $36.3\%$ & $19.15$ \\
\bottomrule
\end{tabular}
\end{table}

\paragraph{Sub-Block Variance Envelope.}
To eliminate the $b=64$ variance dilution shown in \autoref{tab:index_quality} without increasing the $(1, d_h)$ metadata footprint per block, we partition $B$ into $16$ contiguous $4$-token sub-blocks $B_1, \dots, B_{16}$ ($b_{\text{sub}}=4$), store the coordinate-wise \emph{sub-block standard deviation}
\begin{equation}
\tilde{\sigma}_{B,i} = \max_{1 \le m \le b/b_{\text{sub}}} \sqrt{\frac{1}{b_{\text{sub}}} \sum_{k \in B_m} (k_i - \mu_{B,i})^2},
\label{eq:subblock_envelope_appx}
\end{equation}
and pin the two most recent blocks ($\mathtt{lwb}=2$, $128$ tokens). Replacing flat $\boldsymbol{\sigma}_B$ with $\tilde{\boldsymbol{\sigma}}_B$ at $b=64$ recovers FineWeb perplexity from $19.15$ (\autoref{tab:index_quality}) down to $\mathbf{17.11}$ (\autoref{tab:inference_experiments_stats}), and on C4 ($a=-0.4$) boosts token recall from $76.8\%$ (flat variance, $21.02$ PPL) to $\mathbf{97.3\%}$ ($\mathbf{18.60}$ PPL; \autoref{tab:pareto_c4_needle}).

\subsection{Extended C4 Perplexity and RULER Retrieval Tables}
\label{app:pareto_details}
\label{appx:ruler_suite}
\autoref{tab:full_c4_pareto_appendix} provides the full offset sweep supplementing \autoref{tab:pareto_c4_needle} in \autoref{sec:blocksize}. For long-context retrieval (\autoref{subsec:needle_retrieval}), \autoref{tab:ruler_suite} reports the complete $16$-arm RULER suite across all four tasks at $L=2048$ and $L=4096$ ($N=432$ trials/arm), and \autoref{tab:single_needle_extrap_appx} reports single-needle passkey extrapolation from $L=1024$ to $L=8192$ ($4\times L_{\text{train}}$).

\begin{table}[h]
\centering
\caption{\textbf{Extended C4 Perplexity ($L=2048$) and Needle-in-a-Haystack ($L=4096$) Pareto sweep} on the 1.45B model across block size $b$ and threshold offset $a$, supplementing \autoref{tab:pareto_c4_needle}.}
\label{tab:full_c4_pareto_appendix}
\label{tab:bound_tightness_quest}
\renewcommand{\arraystretch}{0.84}
\setlength{\tabcolsep}{3.8pt}
\resizebox{0.92\linewidth}{!}{%
\begin{tabular}{l l c c c c c}
\toprule
\textbf{Method Variant} & \textbf{Offset / Budget} & \textbf{Head ($D_{\text{head}}$)} & \textbf{Union ($D_{\text{union}}$)} & \textbf{Recall / FNR} & \textbf{C4 PPL ($\downarrow$)} & \textbf{Needle $4\text{K}$ ($\uparrow$)} \\
\midrule
Dense Baseline & \texttt{FlashAttention-2} & $100.0\%$ & $100.0\%$ & $100.0\%$ / $0.0\%$ & $17.49 \pm 0.33$ & $66.7\%$ ($6/9$) \\
\cmidrule(lr){1-7}
\multirow{6}{*}{ETA ($b=4, z=2.0$)} 
 & $a = +0.4$ & $36.3\%$ & $66.6\%$ & $98.3\%$ / $7.1\%$ & $\mathbf{17.56 \pm 0.33}$ & $\mathbf{66.7\%}$ ($6/9$) \\
 & $a = 0.0$  & $32.0\%$ & $61.2\%$ & $97.8\%$ / $8.9\%$ & $17.61 \pm 0.33$ & $\mathbf{66.7\%}$ ($6/9$) \\
 & $a = -0.8$ & $24.4\%$ & $50.6\%$ & $97.1\%$ / $12.1\%$ & $17.82 \pm 0.33$ & $\mathbf{66.7\%}$ ($6/9$) \\
 & $a = -1.5$ & $18.9\%$ & $41.7\%$ & $96.0\%$ / $16.1\%$ & $18.22 \pm 0.33$ & $\mathbf{66.7\%}$ ($6/9$) \\
 & $a = -2.0$ & $15.7\%$ & $35.6\%$ & $95.4\%$ / $18.8\%$ & $18.81 \pm 0.34$ & $\mathbf{66.7\%}$ ($6/9$) \\
 & $a = -2.5$ & $\mathbf{13.1\%}$ & $\mathbf{30.1\%}$ & $94.8\%$ / $21.2\%$ & $19.81 \pm 0.34$ & $\mathbf{66.7\%}$ ($6/9$) \\
\cmidrule(lr){1-7}
\multirow{4}{*}{ETA ($b=64$, Sub-Blk + $\mathtt{lwb}=2$)} 
 & $a = +0.2$ & $55.3\%$ & $79.0\%$ & $\mathbf{98.4\%}$ / $12.3\%$ & $18.30 \pm 0.29$ & $\mathbf{66.7\%}$ ($6/9$) \\
 & $a = -0.4$ & $48.3\%$ & $69.7\%$ & $97.3\%$ / $16.6\%$ & $18.60 \pm 0.29$ & $\mathbf{66.7\%}$ ($6/9$) \\
 & $a = -1.0$ & $42.7\%$ & $60.4\%$ & $95.6\%$ / $21.3\%$ & $18.87 \pm 0.29$ & $\mathbf{66.7\%}$ ($6/9$) \\
 & $a = -1.8$ & $37.6\%$ & $49.6\%$ & $91.6\%$ / $28.3\%$ & $19.21 \pm 0.30$ & $\mathbf{66.7\%}$ ($6/9$) \\
\cmidrule(lr){1-7}
\multirow{3}{*}{$\text{H}_2\text{O}$ Eviction} 
 & $W=256$ (${\approx}38\%$ target) & $48.0\%$ & $68.0\%$ & --- & $17.59 \pm 0.33$ & $22.2\%$ ($2/9$) \\
 & $W=128$ (${\approx}25\%$ target) & $33.2\%$ & $54.3\%$ & --- & $17.71 \pm 0.33$ & $0.0\%$ ($0/9$) \\
 & $W=64$ (${\approx}10\%$ target)  & $17.5\%$ & $28.8\%$ & --- & $17.91 \pm 0.33$ & $0.0\%$ ($0/9$) \\
\bottomrule
\end{tabular}%
}
\end{table}

\begin{table}[h]
\centering
\caption{\textbf{RULER Long-Context Suite on the 1.45B model ($L=2048$ Native \& $L=4096$ with $2\times$ NTK RoPE Scaling; $N=432$ trials/arm).} Reports per-task and average accuracy ($\pm 95\%$ CI), plus routine haystack vs.\ query-step density ($\text{Routine} \to \text{Query}$).}
\label{tab:ruler_suite}
\renewcommand{\arraystretch}{0.86}
\setlength{\tabcolsep}{3.2pt}
\resizebox{\linewidth}{!}{%
\begin{tabular}{llccccccc}
\toprule
\textbf{Context} & \textbf{Method} & \textbf{Head ($\text{Routine} \to \text{Query}$)} & \textbf{Union ($\text{Routine} \to \text{Query}$)} & \textbf{Single} & \textbf{MultiKey} & \textbf{MultiVal} & \textbf{VT (2-Hop)} & \textbf{RULER Avg} \\
\midrule
\multirow{8}{*}{$L = 2048$} 
 & Dense Baseline & $100.0\%$ & $100.0\%$ & $87.0\%$ & $76.9\%$ & $88.9\%$ & $87.0\%$ & $84.95\% \pm 3.19\%$ \\
 \cmidrule(lr){2-9}
 & ETA ($b=4, z=1.0, a=0.4$) & $38.0\% \to 41.0\%$ & $66.3\% \to 69.8\%$ & $\mathbf{88.0\%}$ & $87.0\%$ & $\mathbf{88.9\%}$ & $\mathbf{88.0\%}$ & $\mathbf{87.96\% \pm 2.86\%}$ \\
 & ETA ($b=64, z=1.5, a=0.4$) & $40.6\% \to 43.5\%$ & $54.6\% \to 57.9\%$ & $87.0\%$ & $\mathbf{88.0\%}$ & $\mathbf{88.9\%}$ & $77.8\%$ & $85.42\% \pm 3.14\%$ \\
 & ETA ($b=64, z=1.5, a=0.0$) & $25.1\% \to 27.8\%$ & $\mathbf{37.4\% \to 39.8\%}$ & $87.0\%$ & $76.9\%$ & $78.7\%$ & $67.6\%$ & $77.55\% \pm 3.78\%$ \\
 \cmidrule(lr){2-9}
 & Quest ($b=64$, top-$38\%$) & $39.1\%$ & $51.2\%$ & $55.6\%$ & $23.1\%$ & $12.0\%$ & $13.0\%$ & $25.93\% \pm 4.14\%$ \\
 & Quest ($b=64$, top-$25\%$) & $\mathbf{23.1\%}$ & $32.7\%$ & $34.3\%$ & $12.0\%$ & $0.9\%$ & $0.9\%$ & $12.04\% \pm 3.07\%$ \\
 & $\text{H}_2\text{O}$ ($W=256$, ${\sim}38\%$) & $38.3\%$ & $72.3\%$ & $25.0\%$ & $12.0\%$ & $0.9\%$ & $0.9\%$ & $9.72\% \pm 2.80\%$ \\
 & $\text{H}_2\text{O}$ ($W=256$, ${\sim}25\%$) & $25.3\%$ & $50.8\%$ & $13.9\%$ & $11.1\%$ & $0.9\%$ & $0.0\%$ & $6.48\% \pm 2.33\%$ \\
\midrule
\multirow{8}{*}{$L = 4096$} 
 & Dense Baseline & $100.0\%$ & $100.0\%$ & $66.7\%$ & $65.7\%$ & $56.5\%$ & $33.3\%$ & $55.56\% \pm 4.14\%$ \\
 \cmidrule(lr){2-9}
 & ETA ($b=4, z=1.0, a=0.4$) & $38.0\% \to 44.5\%$ & $66.3\% \to 73.9\%$ & $\mathbf{66.7\%}$ & $\mathbf{65.7\%}$ & $\mathbf{56.5\%}$ & $\mathbf{33.3\%}$ & $\mathbf{55.56\% \pm 4.14\%}$ \\
 & ETA ($b=64, z=1.5, a=0.4$) & $38.2\% \to 44.6\%$ & $51.5\% \to 58.7\%$ & $\mathbf{66.7\%}$ & $\mathbf{65.7\%}$ & $55.6\%$ & $24.1\%$ & $53.01\% \pm 4.16\%$ \\
 & ETA ($b=64, z=1.5, a=0.0$) & $24.8\% \to 31.1\%$ & $\mathbf{36.9\% \to 41.2\%}$ & $56.5\%$ & $55.6\%$ & $50.9\%$ & $23.1\%$ & $46.53\% \pm 4.16\%$ \\
 \cmidrule(lr){2-9}
 & Quest ($b=64$, top-$38\%$) & $38.3\%$ & $50.9\%$ & $24.1\%$ & $13.0\%$ & $1.9\%$ & $0.9\%$ & $9.95\% \pm 2.82\%$ \\
 & Quest ($b=64$, top-$25\%$) & $\mathbf{24.1\%}$ & $34.5\%$ & $13.0\%$ & $1.9\%$ & $0.0\%$ & $0.0\%$ & $3.70\% \pm 1.78\%$ \\
 & $\text{H}_2\text{O}$ ($W=256$, ${\sim}38\%$) & $38.1\%$ & $74.6\%$ & $3.7\%$ & $1.9\%$ & $0.9\%$ & $0.0\%$ & $1.62\% \pm 1.20\%$ \\
 & $\text{H}_2\text{O}$ ($W=256$, ${\sim}25\%$) & $25.1\%$ & $56.7\%$ & $1.9\%$ & $0.9\%$ & $0.0\%$ & $0.0\%$ & $0.69\% \pm 0.78\%$ \\
\bottomrule
\end{tabular}%
}
\end{table}

\begin{table}[h]
\centering
\caption{\textbf{Single-Needle passkey retrieval accuracy across context lengths up to $L=8192$ ($4\times L_{\text{train}}$).} Accuracy averaged over $9$ needle depths ($0\%\text{--}100\%$). Density reports position-averaged head density at $L=1024 \to 8192$.}
\label{tab:single_needle_extrap_appx}
\renewcommand{\arraystretch}{0.86}
\setlength{\tabcolsep}{5pt}
\resizebox{0.72\linewidth}{!}{%
\begin{tabular}{l l c c c c c}
\toprule
\textbf{Variant} & \textbf{Config} & \textbf{Density (\%)} & \textbf{1024 Acc} & \textbf{2048 Acc} & \textbf{4096 Acc} & \textbf{8192 Acc} \\
\midrule
\textbf{SDPA} & \texttt{Dense Baseline} & 100.0 & \hlbase{\textbf{100.0}} & \hlbase{\textbf{88.9}} & \hlbase{\textbf{66.7}} & \textit{0.0} \\
\cmidrule(lr){1-7} \addlinespace[1pt] 
\textbf{H2O} & \texttt{W=256, HH=10\%} & 29.0$\rightarrow$11.5 & 22.2 & 11.1 & \textit{0.0} & \textit{0.0} \\
\textbf{BigBird} & \texttt{W=256, Str=16} & 30.0$\rightarrow$9.2 & 22.2 & 11.1 & \textit{0.0} & \textit{0.0} \\
\textbf{StreamLLM} & \texttt{W=256, S=4} & 25.4$\rightarrow$3.2 & 22.2 & 11.1 & \textit{0.0} & \textit{0.0} \\
\textbf{SWA} & \texttt{W=256} & 25.0$\rightarrow$3.1 & 22.2 & 11.1 & \textit{0.0} & \textit{0.0} \\
\textbf{ETA} & \texttt{b=4, a=0.4} & 30.1$\rightarrow$69.3 & \hlprob{\textbf{100.0}} & \hlprob{\textbf{88.9}} & \hlprob{\textbf{66.7}} & \hlprob{\textbf{22.2}} \\
\bottomrule
\end{tabular}%
}
\end{table}

\section{Dual Probabilistic-Geometric Screening Index: Mathematical Analysis}
\label{appx:dual_index_math}

This section derives the finite-population concentration and geometric bounds governing the Dual Screening Index in \autoref{sec:inference} (\autoref{eq:dual_bound_eq}).

\subsection{Cantelli Bound and Certified Finite-Population Maximums}
Consider a key block $B = \{k_1, \dots, k_b\} \subset \mathbb{R}^{d_h}$ with empirical mean $\boldsymbol{\mu}_B = \frac{1}{b} \sum_{j=1}^b k_j$ and covariance $\Sigma_B = \frac{1}{b} \sum_{j=1}^b (k_j - \boldsymbol{\mu}_B)(k_j - \boldsymbol{\mu}_B)^\top$. Drawing $k \sim \operatorname{Unif}(B)$ uniformly at random yields a scalar inner product $X = \langle q, k \rangle$ with mean $\mathbb{E}[X] = \langle q, \boldsymbol{\mu}_B \rangle$ and variance $\operatorname{Var}(X) = q^\top \Sigma_B \, q$. By Cantelli's one-sided inequality, for any $z > 0$:
\begin{equation}
\Pr_{k \sim \operatorname{Unif}(B)}\!\Big[ \langle q, k \rangle \ge \langle q, \boldsymbol{\mu}_B \rangle + z \sqrt{q^\top \Sigma_B \, q} \Big] \le \frac{1}{1 + z^2}.
\end{equation}
Since $B$ has finite cardinality $b$, multiplying by $b$ and taking the floor bounds the exact number of keys exceeding the threshold:
\begin{equation}
\Big|\Big\{ k \in B : \langle q, k \rangle \ge \langle q, \boldsymbol{\mu}_B \rangle + z \sqrt{q^\top \Sigma_B \, q} \Big\}\Big| \le \left\lfloor \frac{b}{1 + z^2} \right\rfloor.
\end{equation}
Whenever $z > \sqrt{b - 1}$, the right-hand side is strictly less than $1$, so the count must be $0$. Thus for $b = 4$ (or a $4$-token sub-block $b_{\text{sub}}=4$) and $z \ge 2$, Cantelli's inequality guarantees a deterministic upper bound on $\max_{k \in B} \langle q, k \rangle$.

\subsection{Diagonal Variance Surrogate and Spherical Norm Ceiling}
To avoid $\mathcal{O}(d_h^2)$ storage for $\Sigma_B$, we cache only the coordinate-wise variance $\boldsymbol{\sigma}_B^2 = \operatorname{diag}(\Sigma_B) \in \mathbb{R}^{d_h}$, yielding the directional variance estimate $\sigma_S(q, B) = \sqrt{\sum_{i=1}^{d_h} q_i^2 \sigma_{B,i}^2}$. Intersecting this moment bound with the exact Cauchy--Schwarz spherical ceiling $\langle q, k \rangle \le \|q\|_2 \max_{k \in B} \|k\|_2 = \|q\|_2 M_B$ yields the $\mathcal{O}(1)$ Dual Screening Score of \autoref{eq:dual_bound_eq}. Unlike coordinate-wise bounding boxes \citep{tang2024quest}, which evaluate inner products against a synthetic corner vertex whose Euclidean norm inflates across $b=64$ tokens when isolated coordinates spike, clipping by $\|q\|_2 M_B$ prevents coordinate outliers from triggering false admissions while projecting all $G$ query heads in a GQA group onto a shared centroid $\boldsymbol{\mu}_B$.

\section{End-to-End Systems Accounting}
\label{appx:systems_accounting}
\label{tab:prefill_metadata_overhead}
\label{tab:training_throughput_breakdown}

To complement the standalone kernel benchmarks in \autoref{sec:speed} (\autoref{tab:standalone_attention}), \autoref{tab:e2e_decode_breakdown} reports full-model autoregressive decode latency ($q_{\text{len}}=1, B=64$, \texttt{float16}) across all $22$ layers of our 1.45B model on an H100 SXM 80GB GPU. At each decode step, non-attention layers (QKV/O projections, SwiGLU MLPs, RMSNorms, and the vocabulary head) contribute a constant $11.76\text{ ms/tok}$ independent of $L$, while evaluating the threshold predictor $\tau_t^\ell$ and updating the tail block metadata $(\boldsymbol{\mu}_B, \boldsymbol{\sigma}_B, M_B)$ adds $0.26\text{ ms/tok}$ ($<0.6\%$ of step time). Because KV-cache attention accounts for $78\%\text{--}93\%$ of total latency at $L \ge 64\text{K}$, ETA delivers up to a $\mathbf{1.40\times}$ end-to-end speedup at $35.5\%$ union density and $\mathbf{2.15\times}$ at $16.0\%$ union density ($L=256\text{K}$).

\begin{table}[h]
\centering
\caption{\textbf{Full 22-layer 1.45B model decode latency ($B=64$, \texttt{float16}, H100 80GB).} Total step time sums 22-layer attention, the constant non-attention floor ($11.76\text{ ms}$), and ETA predictor/metadata updates ($0.26\text{ ms}$).}
\label{tab:e2e_decode_breakdown}
\small
\renewcommand{\arraystretch}{0.88}
\setlength{\tabcolsep}{6pt}
\begin{tabular}{c l r r c}
\toprule
\textbf{Context $L$} & \textbf{Method ($B=64$)} & \textbf{22-Layer Attn (ms)} & \textbf{Total Step (ms/tok)} & \textbf{E2E Speedup} \\
\midrule
\multirow{4}{*}{\textbf{64K}} 
 & Dense FlashAttention-2           & $42.14$ & $53.90$ & $1.00\times$ \\
 & ETA $51.2\%$ union ($38\%$ head) & $40.33$ & $52.35$ & $1.03\times$ \\
 & ETA $35.5\%$ union ($25\%$ head) & $31.57$ & $43.59$ & $1.24\times$ \\
 & ETA $16.0\%$ union ($10\%$ head) & $\mathbf{20.61}$ & $\mathbf{32.63}$ & $\mathbf{1.65\times}$ \\
\midrule
\multirow{3}{*}{\textbf{128K}} 
 & Dense FlashAttention-2           & $82.92$ & $94.68$ & $1.00\times$ \\
 & ETA $35.5\%$ union ($25\%$ head) & $70.93$ & $82.95$ & $1.14\times$ \\
 & ETA $16.0\%$ union ($10\%$ head) & $\mathbf{53.54}$ & $\mathbf{65.56}$ & $\mathbf{1.44\times}$ \\
\midrule
\multirow{3}{*}{\textbf{256K}} 
 & Dense FlashAttention-2           & $167.37$ & $179.13$ & $1.00\times$ \\
 & ETA $35.5\%$ union ($25\%$ head) & $116.16$ & $128.18$ & $1.40\times$ \\
 & ETA $16.0\%$ union ($10\%$ head) & $\mathbf{71.22}$ & $\mathbf{83.24}$ & $\mathbf{2.15\times}$ \\
\bottomrule
\end{tabular}
\end{table}

\paragraph{Prefill Indexing and Pretraining Throughput.}
During prompt prefill ($B=1$), constructing the dual block metadata $(\boldsymbol{\mu}_B, \boldsymbol{\sigma}_B, M_B)$ requires a single $\mathcal{O}(L d_h)$ pass over $K$, whereas attention scales as $\Theta(L^2 d_h)$; consequently, relative prefill overhead decays as $\mathcal{O}(1/L)$, dropping from $+2.16\%$ ($+1.33\text{ ms}$) at $L=8\text{K}$ to $+0.51\%$ ($+2.43\text{ ms}$) at $32\text{K}$ and $\mathbf{+0.14\%}$ ($+8.31\text{ ms}$ out of $5{,}911\text{ ms}$) at $128\text{K}$. Similarly, during 22-layer pretraining at $L=2048$ ($B=4$, $8{,}192$ tokens/step), linear projections and SwiGLU MLPs account for $83.2\%$ of step time ($7.21\text{ ms/layer}$ vs.\ $1.45\text{ ms/layer}$ for FlashAttention-2). Thus, although ETA's fused Triton training kernel takes $3.83\text{ ms/layer}$, full-model step latency increases by only $\mathbf{+27.4\%}$ ($242.8\text{ ms}$ vs.\ $190.6\text{ ms}$), sustaining $\mathbf{293.5\text{ TFLOPs/GPU}}$ ($33{,}739\text{ tok/s/GPU}$).

\end{document}